\documentclass{article} 
\usepackage{iclr2024_conference,times}

\usepackage[utf8]{inputenc} 
\usepackage[T1]{fontenc}    
\usepackage{hyperref}       
\usepackage{url}            
\usepackage{booktabs}       
\usepackage{amsfonts}       
\usepackage{nicefrac}       
\usepackage{microtype}      
\usepackage{xcolor}         

\usepackage{amsmath,amsthm}
\usepackage{bm}
\usepackage{color}
\usepackage{graphicx}
\usepackage[justification=centering]{caption}
\usepackage[margin=-1pt]{subfig}
\usepackage{cases}
\usepackage{multirow}
\usepackage{comment}
\usepackage{algorithm}
\usepackage{algorithmic}
\usepackage{amsmath}
\usepackage{amssymb}
\usepackage{wrapfig}

\usepackage[normalem]{ulem}
\useunder{\uline}{\ul}{}

\usepackage{array}
\newcommand{\PreserveBackslash}[1]{\let\temp=\\#1\let\\=\temp}
\newcolumntype{C}[1]{>{\PreserveBackslash\centering}p{#1}}
\newcolumntype{R}[1]{>{\PreserveBackslash\raggedleft}p{#1}}
\newcolumntype{L}[1]{>{\PreserveBackslash\raggedright}p{#1}}
\newcommand{\ip}[2]{{\langle #1, \, #2 \rangle}}

\usepackage{amsmath,amsfonts,bm}









\def\eqref#1{equation~(\ref{#1})}
\def\Eqref#1{Equation~(\ref{#1})}









\def\1{\bm{1}}


\def\rd{{\mathrm{d}}}







\def\vx{{\bm{x}}}
\def\vy{{\bm{y}}}
\def\vz{{\bm{z}}}



\DeclareMathAlphabet{\mathsfit}{\encodingdefault}{\sfdefault}{m}{sl}
\SetMathAlphabet{\mathsfit}{bold}{\encodingdefault}{\sfdefault}{bx}{n}


\def\gB{{\mathcal{B}}}
\def\gC{{\mathcal{C}}}

\def\gF{{\mathcal{F}}}
\def\gG{{\mathcal{G}}}

\def\gL{{\mathcal{L}}}

\def\gP{{\mathcal{P}}}

\def\gW{{\mathcal{W}}}



\def\sR{{\mathbb{R}}}








\newcommand{\E}{\mathbb{E}}



\DeclareMathOperator*{\argmin}{arg\,min}

\DeclareMathOperator{\Tr}{Tr}
\DeclareMathOperator{\diag}{diag}

\usepackage[capitalize,noabbrev]{cleveref}

\theoremstyle{plain}
\newtheorem{theorem}{Theorem}[section]
\newtheorem{proposition}[theorem]{Proposition}
\newtheorem{lemma}[theorem]{Lemma}

\theoremstyle{definition}
\newtheorem{definition}[theorem]{Definition}
\newtheorem{assumption}[theorem]{Assumption}
\theoremstyle{remark}
\newtheorem{remark}[theorem]{Remark}

\title{Stable Neural Stochastic Differential Equations in Analyzing Irregular Time Series Data}


\author{YongKyung Oh\thanks{These two authors are equal contributors to this work and designated as co-first authors.},\hspace{1em} Dong-Young Lim\footnotemark[1],  \hspace{1em}\& \hspace{1em} Sungil Kim\thanks{Corresponding Author} \\
Ulsan National Institute of Science and Technology, Republic of Korea \\
\texttt{\{yongkyungoh, dlim, sungil.kim\}@unist.ac.kr} \\
}

%

\iclrfinalcopy 
\begin{document}

\maketitle

\begin{abstract}
Irregular sampling intervals and missing values in real-world time series data present challenges for conventional methods that assume consistent intervals and complete data. Neural Ordinary Differential Equations (Neural ODEs) offer an alternative approach, utilizing neural networks combined with ODE solvers to learn continuous latent representations through parameterized vector fields. Neural Stochastic Differential Equations (Neural SDEs) extend Neural ODEs by incorporating a diffusion term, although this addition is not trivial, particularly when addressing irregular intervals and missing values. Consequently, careful design of drift and diffusion functions is crucial for maintaining stability and enhancing performance, while incautious choices can result in adverse properties such as the absence of strong solutions, stochastic destabilization, or unstable Euler discretizations, significantly affecting Neural SDEs' performance. In this study, we propose three stable classes of Neural SDEs: Langevin-type SDE, Linear Noise SDE, and Geometric SDE. Then, we rigorously demonstrate their robustness in maintaining excellent performance under distribution shift, while effectively preventing overfitting. To assess the effectiveness of our approach, we conduct extensive experiments on four benchmark datasets for interpolation, forecasting, and classification tasks, and analyze the robustness of our methods with 30 public datasets under different missing rates. Our results demonstrate the efficacy of the proposed method in handling real-world irregular time series data.
\end{abstract}

\section{Introduction}
Conventional deep learning models, such as Recurrent Neural Network (RNN)~\citep{rumelhart1986learning,medsker1999recurrent}, Long Short-term Memory (LSTM)~\citep{hochreiter1997long} and Gated Recurrent Unit (GRU)~\citep{chung2014empirical}, consider time series data as consecutive discrete subsets, struggling with irregularly-sampled or partially-observed data~\citep{mozer2017discrete}. To better reflect the underlying continuous process of the time series, researchers have proposed Neural Differential Equation (NDE)-based methods that enable deep learning models to learn continuous-time dynamics and the underlying temporal structure \citep{chen2018neural,tzen2019neural,jia2019neural,liu2019neural,kidger2020neural,kidger2021neural,kidger2021efficient,ansari2023neural}.

Unlike discrete representations from the conventional methods, \textit{neural ordinary differential equations} (Neural ODEs)~\citep{chen2018neural} directly learn continuous latent representation (or latent state) based on a vector field parameterized by a neural network. \citet{kidger2020neural} introduced \textit{neural controlled differential equations} (Neural CDEs), which are continuous-time analogs of RNNs that employ controlled paths to represent irregular time series. 

As an extension of Neural ODEs, \textit{neural stochastic differential equations} (Neural SDEs) have been introduced with a focus on aspects such as gradient computation, variational inference for latent spaces, and uncertainty quantification. For example, \citet{tzen2019neural} considers a variation inference for the diffusion limit in deep latent Gaussian models. \citet{li2020scalable} develops reverse-mode automatic differentiation for solutions of SDEs. Neural SDEs that incorporate various regularization mechanisms such as dropout and noise injection were introduced in \citet{liu2019neural}. \citet{kong:20} utilized a framework of Neural SDEs to quantify uncertainties in deep neural networks. 

Despite advancements in the literature, there still exist unresolved issues and a lack of understanding in the modeling and robustness of Neural SDEs under \textit{distribution shift}, motivating us to extend Neural SDE approaches into two directions. Firstly, as illustrated in our motivating example, the na\"ive implementation of Neural SDEs often fails to train. This is due to the fact that the behavior of the solution to SDEs can vary significantly depending on drift and diffusion functions, emphasizing the need for a study on an optimal selection of drift and diffusion functions. Secondly, irregular time series data, due to the nature of time variables, irregularity, and missingness, is prone to experiencing distribution shifts, which requires a substantial demand on model robustness \citep{li:21, zhou:23}. 

The performance of deep learning models often deteriorates when applied to data distributions that are different from the original training distributions \citep{ovadia19}. This vulnerability under distribution shift\footnote{This phenomenon occurs when the statistical properties of the target distribution that the model aims to predict shift over time, leading to discrepancies between the distributions of training data and test data.} has been widely observed across various domains including healthcare, computer vision, and NLP \citep{leek:10, bandi18, lazaridou21, miller:20, miller:21}. 
Therefore, theoretical guarantees to maintain robustness against distribution shift is a critical concern within modern deep learning communities. However, to the best of our knowledge, it has yet been explored to address this specific issue in the context of Neural SDEs. 

To address these research gaps, this study introduces three classes of Neural SDEs to capture complex dynamics and improve robustness under distribution shift in time series data. While the drift and diffusion terms in the existing Neural SDEs are directly approximated by neural networks, the proposed Neural SDEs are trained based on theoretically well-defined SDEs. As a result, the new framework achieves state-of-the-art results in extensive experiments. 

The main contributions of this study are summarized as follows. Firstly, we present three stable classes of Neural SDEs based on Langevin-type SDE, Linear Noise SDE, and Geometric SDE, and then combine the concept of controlled paths into the drift term of the proposed Neural SDEs to effectively capture sequential observations. Secondly, we show the existence and uniqueness of the solutions of these SDEs. In particular, we show that the Neural Geometric SDE shares properties with deep \texttt{ReLU} networks. Thirdly, we theoretically demonstrate that our proposed Neural SDEs maintain excellent performance under distribution shift due to missing data, while effectively preventing overfitting. Lastly, we conduct extensive experiments on four benchmark datasets for interpolation, forecasting, and classification tasks, and analyze the robustness of our methods with 30 public datasets under different missing rates. The proposed Neural SDEs show not only powerful empirical performance in terms of various measures but also remain robust to missing data.

\section{Related Work \&  Preliminaries}\label{Related_work}
\paragraph{Notations.} Let $(\Omega, \gF, P)$ be a probability space. Fix integer $d\geq 1$. For an $\sR^d$-valued random variable $X$, its law is denoted by $\gL(X)$. $|\cdot|$ is the Euclidean norm where the dimension of the space may vary depending on the context. We denote by $L^p(\Omega)$ the set of $X$ with $(\E[|X|^p])^{1/p}<\infty$. Let $\gP(\sR^d)$ denote the set of probability measures on $\gB(\sR^d)$. For two probability measures $\mu, \nu\in \gP(\sR^d)$, let $\gC(\mu,\nu)$ denote the set of probability measures $\Pi$ on $\gB(\sR^{2d})$ such that its respective marginals are $\mu$ and $\nu$. Then, the Wasserstein distance of order $p\geq 1$ is defined as
$$
\gW_p(\mu, \nu) = \inf_{\Pi \in \gC(\mu,\nu)}\left(\int_{\sR^d}\int_{\sR^d} |x -x'|^p \Pi(\rd x, \rd x')\right)^{1/p}.
$$

\subsection{Neural differential equation methods}

\paragraph{Neural ODEs.} Let $\vx \in \sR^{d_x}$ denote the input data where $d_x$ is its dimension. Consider a latent representation $\vz(t) \in \sR^{d_z}$ at time $t$, which is given by
\begin{align}\label{eq:neural_ode}
    \vz(t) & = \vz(0) + \int_0^t f(s,\vz(s);\theta_f) \rd s\quad \mbox{with }\vz(0) = h(\vx;\theta_h),
\end{align}
where $h: \mathbb{R}^{d_x} \rightarrow \mathbb{R}^{d_z}$ is an affine function with parameter $\theta_h$. In Neural ODEs, $f(t,\vz(t);\theta_f)$ is a neural network parameterized by $\theta_f$ to approximate $\frac{\rd \bm{z}(t)}{\rd t}$. To solve the integral problem in \Eqref{eq:neural_ode}, Neural ODEs rely on ODE solvers, such as the explicit Euler method \citep{chen2018neural}. 
Since we can freely choose the upper limit $t$ of the integration, we can predict $\bm{z}$ at any time $t$. That is, once $h(\cdot;\theta_h)$ and $f(\cdot, \cdot;\theta_f)$ have been learned, then we are able to compute $\vz(t)$ for any $t\geq 0$. Note that the extracted features $\vz(t)$ are used for various tasks such as classification and regression. However, the solution to a Neural ODE is determined by its initial condition, making it inadequate for incorporating incoming information into a differential equation.

\paragraph{Neural CDEs.} To address this issue, \citet{kidger2020neural} proposed Neural CDEs as a continuous analogue of RNNs by combining a controlled path $X(t)$ of the underlying time-series data with Neural ODEs. Specifically, given the sequential data $\vx=(x_0, x_1, \ldots, x_n)$, $\vz(t)$ is determined by 
\begin{align}\label{eq:neural_cde}
    \bm{z}(t) = \bm{z}(0) + \int_0^t f(s,\bm{z}(s); \theta_f) \rd X(s)\quad \mbox{with }\vz(0) = h(x_0;\theta_h),
\end{align}
where the integral is the Riemann–Stieltjes integral and $X(t)$ is chosen as a natural cubic spline path~\citep{kidger2020neural} or hermite cubic splines with backward differences~\citep{morrill2021neural} of the underlying time-series data. Differently from Neural ODEs, $f(t,\vz(t); \theta_f)$, called CDE function, is a neural network parameterized by $\theta_f$ to approximate $\frac{\rd \bm{z}(t)}{\rd X(t)}$. 
The integral problem in \Eqref{eq:neural_cde} can be solved by using existing ODE solvers since $\frac{\rd \bm{z}(t)}{\rd t} = f(t,\vz(t); \theta_f) \frac{\rd X(t)}{\rd t}$.

\paragraph{Neural SDEs.} 
Neural SDEs are an extension of Neural ODEs, which allows for describing the stochastic evolution of sample paths, rather than the deterministic evolution \citep{han2017deep, tzen2019neural,jia2019neural,liu2019neural, kidger2021neural,kidger2021efficient,park2021neural}. The latent representation $\vz(t)$ of Neural SDEs is governed by the following SDE:
\begin{align}\label{eq:neural_sde}
    \vz(t) = \vz(0) + \int_0^t f(s,\vz(s);\theta_f) \rd s + \int_0^t g(s,\vz(s);\theta_g) \rd W(s)\quad \mbox{with } \vz(0) = h(\vx;\theta_h), 
\end{align}
where $\{W_t\}_{t\geq 0}$ is a $d_z$-dimensional Brownian motion, $f(\cdot,\cdot;\theta_f)$ is the drift function, $g(\cdot,\cdot;\theta_g)$ is the diffusion function and the second integral on the right-hand side is the It\^o integral. Neural SDEs assume that drift and diffusion functions are represented by neural networks.

\subsection{Limitations of na\"ive Neural SDEs}\label{limitation}

\begin{wrapfigure}{r}{.5\textwidth}
\vspace{-25pt}
\begin{minipage}{\linewidth}
    \centering\captionsetup{justification=centering, skip=5pt}
    \captionsetup[subfigure]{justification=centering, skip=5pt}
    \includegraphics[width=\linewidth]{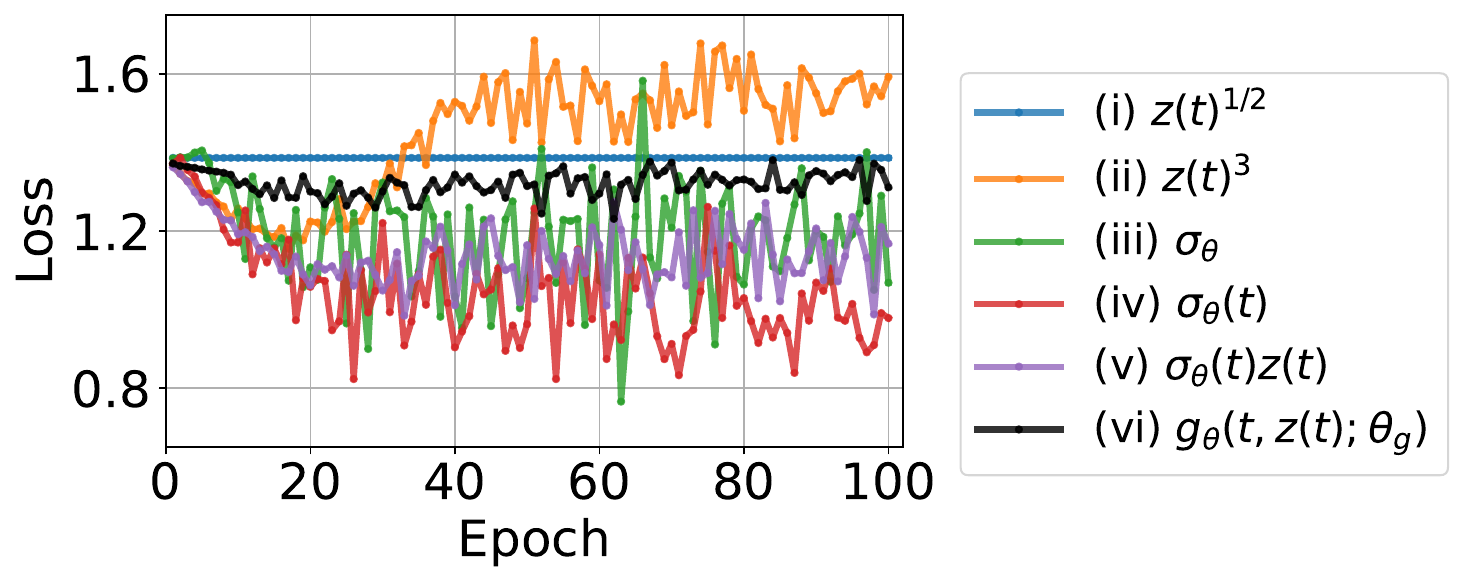}
    \caption{Comparison of test losses for Neural SDEs with six different diffusion functions on the `BasicMotions' dataset at a 50\% missing rate}
    \label{fig:limitation}
\end{minipage}
\vspace{-15pt}
\end{wrapfigure}

The na\"ive implementation of Neural SDEs without sufficient consideration can lead to adverse results such as the absence of a unique strong solution \citep{mao2007stochastic, oksendal2003stochastic}, stochastic destabilization \citep{mao:94,appleby:08}, and/or unstable Euler discretization\footnote{The numerical approximation of Euler discretization for SDEs may explode in a finite time.} \citep{martin:11, roberts:96}. 
Therefore, careful design of the diffusion term is essential to improve the stability and efficacy of Neural SDEs.

The significance of a well-defined diffusion term becomes evident in the following illustrative example. Figure~\ref{fig:limitation} displays a comparison of test losses for Neural SDEs employing six distinct diffusion functions $g(t, \vz(t);\theta_g)$:  (i) $\sqrt{\vz(t)}$ (H\"older continuity with the exponent $1/2$), (ii) $|\vz(t)|^3$ (non-Lipschitz continuity), (iii) $\sigma_\theta$ (constant), (iv) time-dependent function $\sigma(t;\theta_\sigma)$ (additive noise), (v) time-dependent function $\sigma(t;\theta_\sigma) \vz(t)$ (multiplicative noise), and (vi) neural network $g_\theta (t, \vz(t); \theta_g)$ (na\"ive Neural SDE). As depicted in the figure, the performance of Neural SDEs can vary significantly depending on the choice of diffusion functions. In particular, we observe that the non-Lipschitz continuous diffusion function $|\vz(t)|^3$ and the na\"ive diffusion function are inadequate for stabilizing Neural SDEs. However, surprisingly, the influence of drift function and diffusion function design on the performance of Neural SDEs has been underexplored and remains poorly understood.

\section{Methodology}\label{Methodology}
In this section, our objective is to explore the characteristics of well-designed Neural SDEs. To achieve this, we propose three distinct classes of Neural SDEs, each with unique drift and diffusion functions, and conduct a comprehensive investigation of their theoretical properties. These three classes of Neural SDEs include Neural Langevin-type SDE (LSDE), Neural Linear Noise SDE (LNSDE), and Neural Geometric SDE (GSDE).

\subsection{The proposed Neural SDEs}\label{subsec:proposed}

\paragraph{Langevin-type SDE.} Neural LSDE, motivated by a class of Langevin SDEs, is defined by  
\begin{align}\label{eq:langevin_sde}
     \rd\vz(t) &= \gamma(\vz(t);\theta_\gamma)\rd t + \sigma(t;\theta_\sigma) \rd W(t)\quad \mbox{with }\vz(0)= h(\vx;\theta_h), 
\end{align}
where the initial condition $h(\cdot;\theta_h)$ is an affine function with parameter $\theta_h$, the drift function $\gamma(\cdot;\theta_\gamma)$ is a neural network with parameter $\theta_\gamma$ and the diffusion function $\sigma(\cdot;\theta_\sigma)$ is a neural network with parameter $\theta_\sigma$. Note that the drift function of Neural LSDE is not explicitly dependent on time. 
While the Langevin SDE is a popular tool in recent years for Markov Chain Monte Carlo (MCMC) and stochastic optimization, it has been never explored in the context of Neural SDEs. 

Assume that there exists $U$ such that $\nabla U(x) = \gamma(x;\theta_\gamma)$ for all $x\in \sR^{d_z}$ and  $\lim_{t\rightarrow \infty} \sigma(t;\theta_\sigma)=\sigma_0$ for some positive constant $\sigma_0$. Then, it is well known that, under mild conditions\footnote{For example, $U$ can be nonconvex and have superlinear gradients \citep{raginsky:17, lim23}.}, the Langevin SDE in \Eqref{eq:langevin_sde} admits a unique invariant measure $\pi$, which is indeed the Gibbs measure $\pi(x) \propto \exp\left(\frac{2U(x)}{\sigma^2_{0}}\right)$ \citep{chiang:87, raginsky:17}.
The existence of a unique invariant measure of the Langevin SDEs may be suitable for illustrating the phenomenon where the distributions of latent state values become invariant as the depth of the layers increases.

\paragraph{Linear Noise SDE.}  Neural LNSDE is governed by the following SDE with linear multiplicative noise:
\begin{align}\label{eq:ln_sde}
     \rd\vz(t) &= \gamma(t, \vz(t);\theta_\gamma)\rd t + \sigma(t;\theta_\sigma)\vz(t) \rd W(t)\quad \mbox{with }\vz(0)= h(\vx;\theta_h),
\end{align}
where the initial condition $h(\cdot;\theta_h)$ is an affine function with parameter $\theta_h$, the drift function $\gamma(\cdot,\cdot;\theta_\gamma)$ is a neural network with parameter $\theta_\gamma$ and the diffusion function $\sigma(\cdot;\theta_\sigma)$ is a neural network with parameter $\theta_\sigma$. The form of Neural LNSDE was also proposed in \citet{liu2019neural}, which merely reiterates the well-known results for stability such as exponential stability. In contrast, our work provides a novel theoretical analysis on the robustness of the trained model under drift shift, a crucial topic in recent deep learning research.

\paragraph{Geometric SDE.}
Neural GSDE considers the following SDE:
\begin{align}\label{eq:sde_geometric}
     \frac{\rd\vz(t)}{\vz(t)} =  \gamma(t,\vz(t);\theta_\gamma) \rd t + \sigma(t;\theta_\sigma) \rd W(t) \quad \mbox{with }\vz(0)= h(\vx;\theta_h),
\end{align}
where the initial condition $h(\cdot;\theta_h)$ is an affine function with parameter $\theta_h$, the drift function $\gamma(\cdot,\cdot;\theta_\gamma)$ is a neural network with parameter $\theta_\gamma$ and the diffusion function $\sigma(\cdot;\theta_\sigma)$ is a neural network with parameter $\theta_\sigma$. Due to its exponential form, Neural GSDE has distinct characteristics associated with deep \texttt{ReLU} networks such as a unique positive solution and an absorbing state of $0$, which are not observed in Neural LSDE and Neural LNSDE (see Proposition~\ref{thm:sde_solutions} (ii)).

\subsection{Properties of the proposed Neural SDEs}\label{subsec:sdes}

We impose some assumptions on neural networks used in Neural SDEs as follows. 
\begin{assumption}\label{ass:lip_con} For any neural network $s(t,x;\theta_s):\sR_+ \times \sR^{d_1} \rightarrow \sR^{d_2}$ with parameter $\theta_s$, there exists a positive constant $L_{s}>0$ such that for all $t\geq0$ and $x,x'\in \sR^{d_1}$
$$
|s(t,x;\theta_s) - s(t,x';\theta_s)| \leq L_{s}|x-x'|.
$$
\end{assumption}

Neural networks with activation functions such as \texttt{tanh}, \texttt{ReLU}, and \texttt{sigmoid} functions generally satisfy the Lipschitz continuity condition in Assumption~\ref{ass:lip_con} \citep{chen2018neural, liu2019neural, virmaux2018lipschitz, fazlyab2019efficient, latorre2020lipschitz}.

\begin{assumption}\label{ass:linear_growth} For any neural network $s(t,x;\theta_s):\sR_+ \times \sR^{d_1} \rightarrow \sR^{d_2}$  with parameter $\theta_s$ where the last layer is the \texttt{ReLU} or linear function, there exists a positive constant $K_{s}>0$ such that
$$
|s(t,x;\theta_s)| \leq K_s (1+|x|), \quad \mbox{for all $t\geq0$, $x\in \sR^{d_1}$.}
$$
On the other hand, when \texttt{tanh} function or \texttt{sigmoid} function is applied at the last layer, we have 
\begin{align}\label{ass:bdd_nn}
|s(t,x;\theta_s)| \leq K_s, \quad \mbox{for all $t\geq0$, $x\in \sR^{d_1}$.}    
\end{align}
\end{assumption}

Assumption~\ref{ass:linear_growth} implies that neural networks satisfy the linear growth condition. 
Assumptions~\ref{ass:lip_con} and ~\ref{ass:linear_growth} are commonly imposed in the literature on NDE-based methods to ensure the existence and uniqueness of solutions for SDEs \citep{chen2018neural, liu2019neural, kong:20}.

\begin{assumption}\label{ass:dissipave}For any neural network $s(t,x;\theta_s):\sR_+ \times \sR^{d_1}\rightarrow \sR^{d_2}$  with parameter $\theta_s$, there exist positive constants $m,b>0$ such that for all $t\geq0$, $x\in \sR^{d_1}$
$$
\ip{s(t, x;\theta_s)}{x} \leq -m|x|^2 + b.
$$
\end{assumption}

The condition in Assumption~\ref{ass:dissipave} is a typical assumption necessary for ensuring the uniform boundedness of the solutions to SDEs and diffusion approximation \citep{jonathan:02, raginsky:17, xu:18, zhang:17}. We emphasize that we do not impose convexity on the drift and diffusion functions.

Under Assumptions~\ref{ass:lip_con} and \ref{ass:linear_growth}, we show that SDEs in Equations~(\ref{eq:langevin_sde}), ~(\ref{eq:ln_sde}), and ~(\ref{eq:sde_geometric}) have their unique strong solutions. In particular, the geometric SDE has interesting properties that are suitable for modeling deep \texttt{ReLU} networks.

\begin{proposition}\label{thm:sde_solutions}
Let Assumptions~\ref{ass:lip_con} and \ref{ass:linear_growth} hold. Then, we have 
\begin{itemize}
    \item[(i)] The SDEs in \Eqref{eq:langevin_sde} and \Eqref{eq:ln_sde} have their unique strong solutions.
    \item[(ii)] Assume that the activation function in $\gamma$ is either a \texttt{tanh} or \texttt{sigmoid} function. Then, the SDE in \Eqref{eq:sde_geometric} has a unique strong solution $\vz(t)$, which is nonnegative almost surely, i.e., $P(\vz(t) \geq 0 \mbox{ for all $t\geq 0$}) = 1$ a.s. Furthermore, state $0$ is an absorbing state.
\end{itemize}
\end{proposition}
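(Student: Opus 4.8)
The plan is to reduce both parts to the classical It\^o existence--uniqueness theory (as in \citet{mao2007stochastic, oksendal2003stochastic}), handling (i) with the global Lipschitz version of the theorem and (ii) with the local Lipschitz plus linear growth version, and then to exploit the multiplicative structure of the geometric equation to obtain positivity and the absorbing state.

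For part (i), I would simply verify the hypotheses of the standard theorem guaranteeing a unique strong solution, namely global Lipschitz continuity and linear growth in the state variable. For \Eqref{eq:langevin_sde} the drift $\gamma(\cdot;\theta_\gamma)$ is Lipschitz by Assumption~\ref{ass:lip_con} and of linear growth by Assumption~\ref{ass:linear_growth}, while the diffusion $\sigma(t;\theta_\sigma)$ is state-independent and hence trivially Lipschitz in $\vz$ and bounded on each compact time interval; the theorem then applies directly. For \Eqref{eq:ln_sde} the drift is treated identically, and the diffusion $\sigma(t;\theta_\sigma)\vz(t)$ is linear in $\vz$ with coefficient $\sigma(t;\theta_\sigma)$, which is continuous hence bounded on each $[0,T]$, so it is Lipschitz with constant $\sup_{t\in[0,T]}|\sigma(t;\theta_\sigma)|$ and satisfies $|\sigma(t)\vz|\le C(1+|\vz|)$. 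Invoking the theorem on each finite horizon and patching yields the unique global strong solution.

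For part (ii), I would first rewrite \Eqref{eq:sde_geometric} in It\^o form as $\rd\vz(t)=\vz(t)\gamma(t,\vz(t);\theta_\gamma)\,\rd t+\vz(t)\sigma(t;\theta_\sigma)\,\rd W(t)$, interpreted componentwise. The crucial observation is that these coefficients are only \emph{locally} Lipschitz rather than globally Lipschitz, since the factor $\vz$ inflates the effective Lipschitz constant; this is exactly why the \texttt{tanh}/\texttt{sigmoid} hypothesis on $\gamma$ is imposed. Under that hypothesis $\gamma$ is bounded by \eqref{ass:bdd_nn}, so $|\vz\gamma(t,\vz)|\le K_\gamma|\vz|$ and $|\vz\sigma(t)|\le C|\vz|$ on $[0,T]$, which gives linear growth; combined with local Lipschitz continuity, the existence--uniqueness theorem for locally Lipschitz coefficients with linear growth produces a unique global strong solution with no finite-time explosion.

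To obtain the absorbing state and nonnegativity, I would note that both coefficients vanish at $\vz=0$, so the constant path $\vz\equiv 0$ solves the equation from initial value $0$; uniqueness then forces any solution started at $0$ to remain there, so $0$ is absorbing. For nonnegativity with $\vz(0)>0$, I would apply It\^o's formula to $\log\vz(t)$ to produce the explicit representation $\vz(t)=\vz(0)\exp\!\big(\int_0^t(\gamma(s,\vz(s);\theta_\gamma)-\tfrac12\sigma^2(s;\theta_\sigma))\,\rd s+\int_0^t\sigma(s;\theta_\sigma)\,\rd W(s)\big)$, which is strictly positive; rigorously one defines this exponential process, checks via It\^o that it solves the SDE, and identifies it with $\vz$ by uniqueness (alternatively, a one-dimensional comparison argument using that $\vz\equiv0$ is a solution shows trajectories cannot cross zero). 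The main obstacle is the loss of global Lipschitz continuity in part (ii): one must carefully confirm the linear growth bound (which hinges entirely on the boundedness of $\gamma$ supplied by the \texttt{tanh}/\texttt{sigmoid} assumption) to rule out explosion of the local solution, and justify the logarithmic transformation on the positive half-line before appealing to uniqueness.
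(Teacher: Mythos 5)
Your proposal is correct and, at its core, follows the same route as the paper: part (i) is handled identically (Lipschitz continuity plus linear growth, then the classical existence--uniqueness theorem, Theorem~\ref{thm:ex_unq_sde}), and part (ii) rests on the same two ideas, namely an exponential representation for nonnegativity and the vanishing of both coefficients at the origin for absorption. The differences are in organization and in one point of rigor. Where you obtain existence and uniqueness for the geometric SDE abstractly from the local-Lipschitz-plus-linear-growth version of the theorem, the paper instead asserts that the It\^o-form coefficients $\vz\gamma(t,\vz;\theta_\gamma)$ and $\sigma(t;\theta_\sigma)\vz$ satisfy the hypotheses of Theorem~\ref{thm:ex_unq_sde} directly; your observation that these coefficients are only \emph{locally} Lipschitz (their effective Lipschitz constant grows with $|\vz|$) is correct, so your invocation of the local-Lipschitz refinement is actually the more careful step, with the bound $K_\gamma$ from the \texttt{tanh}/\texttt{sigmoid} hypothesis supplying linear growth exactly as you say. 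Conversely, for positivity your primary route---It\^o's formula applied to $\log\vz(t)$---is circular until positivity is known, as you yourself concede; the paper makes your fallback the main argument: it introduces an auxiliary process $\vy(t)$ solving $\rd\vy = \bigl(\gamma(t,e^{\vy};\theta_\gamma) - \tfrac12\diag(\sigma(t;\theta_\sigma)\sigma(t;\theta_\sigma)^\top)\bigr)\rd t + \sigma(t;\theta_\sigma)\,\rd W(t)$, sets $\vz=e^{\vy}$ componentwise (nonnegative by construction), and checks by It\^o's formula that this $\vz$ solves \Eqref{eq:sde_geometric}, so existence and positivity are obtained simultaneously rather than sequentially. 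Finally, your absorbing-state argument (the zero path solves the equation, so uniqueness forces any solution started at zero to stay there) is the rigorous form of the paper's remark that $\rd z_i(t)=0$ for $t\geq T_0^i$; both hinge on the coefficients vanishing at zero.
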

The proof for Proposition~\ref{thm:sde_solutions} can be found in Appendix~\ref{app:proof_sol}.
Intuitively, Proposition~\ref{thm:sde_solutions} (ii) implies that the latent representation $\vz(t)$ of the Neural GSDE has always nonnegative values. Furthermore, once the one of values of the latent representation reaches $0$, it remains there forever. This property is consistent with that of \texttt{ReLU} networks which have positive values for activated neurons and zeros for deactivated neurons. In particular, the fact that deactivated neurons never turn to activated neurons corresponds to the property of state $0$ being an absorbing state.

\subsection{Robustness under distribution shift}\label{subsec:stability}

This subsection provides insight into why the proposed Neural SDEs remain robust to input perturbations, such as missing data, and prevent overfitting caused by differences between training and test datasets. To obtain our main results, we heavily rely on the analysis of stochastic stability. An overview of stochastic stability and relevant results are summarized in Appendix~\ref{app:stability}.  

Let $\vx$ denote the input data, and its law be represented by $\gL(\vx)$, i.e., $\vx$ is a $\sR^{d_x}$-valued random variable. Consider $\widetilde \vx$ to be a perturbed version of the input data, with the law $\gL(\widetilde \vx)$, such that
\begin{align}\label{ineq:initial}
\sqrt{\E[|\vx - \widetilde \vx|^2]}\leq \rho,
\end{align}
where $\rho>0$ represents the degree of distribution shift. Suppose that we consider the task of classification or regression for time series data and use a feed-forward neural network $F$ composed of two fully connected layers as follows:
\begin{align}\label{eq:output_MLP}
    y = F(\vz(T);\theta_{F}), \quad \widetilde y = F(\widetilde \vz(T);\theta_{F}),
\end{align}
where $y, \widetilde y$ represent the predictions from the extracted feature $\vz(T)$ and its perturbed version $\widetilde \vz(T)$, respectively. Note that $\vz(T)$ and $\widetilde \vz(T)$ are outputs of Neural SDEs with input data $\vx$ and $\widetilde \vx$, respectively. 
For our stability analysis, we assume $\sigma(t;\theta_\sigma)$ to be either a constant $\sigma_{\theta}$ or to have a limit such that 
$
\lim_{t\rightarrow \infty}\sigma(t;\theta_\sigma) =: \sigma_{\theta}.
$

\begin{theorem}\label{thm:stability_lsde} (Robustness of Neural LSDE) Let Assumptions~\ref{ass:lip_con}, \ref{ass:linear_growth} and \ref{ass:dissipave} hold. Let $\vx \in L^4(\Omega)$ and $\widetilde \vx \in L^4(\Omega)$ denote the input data and its perturbed version satisfying \Eqref{ineq:initial}. For the outputs $\vy$, $\widetilde \vy$ of Neural LSDEs in \Eqref{eq:output_MLP}, we have 
\begin{align*}
   \gW_1(\gL(\vy), \gL(\widetilde \vy)) &\leq \sqrt{3}L_F L_h c_1 e^{-c_2 T}  \sqrt{\left(5 +  2\E[|\vx|^4] + 2\E[|\widetilde \vx|^4]\right)}\rho,\\
   \gW_2(\gL(\vy), \gL(\widetilde \vy)) &\leq \sqrt{2 \sqrt{3} L_h c_1} L_F e^{-c_2 T/2} \left(5 +  2\E[|\vx|^4] + 2\E[|\widetilde \vx|^4]\right)^{1/4}\sqrt{\rho},
\end{align*}
where $L_F$ is the Lipschitz constant of the neural network $F$ defined in \Eqref{eq:output_MLP}, $L_h$ is the Lipschitz constant of the initial condition $h$ in \Eqref{eq:langevin_sde} and positive constants $c_1$, $c_2$ are independent of $T$. 
\end{theorem}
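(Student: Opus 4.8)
The plan is to bound the output Wasserstein distances by latent-state distances, reduce those to moment estimates on the difference of two coupled latent trajectories, and then control that difference by combining a uniform-in-time moment bound with the geometric contraction inherited from the dissipative Langevin drift. First I would exploit the fact that $F$ in \Eqref{eq:output_MLP} is $L_F$-Lipschitz and that $\gW_p$ is non-expansive under a common Lipschitz pushforward: for any coupling of $(\vz(T),\widetilde \vz(T))$ one has $\gW_1(\gL(\vy),\gL(\widetilde \vy))\le L_F\,\E[|\vz(T)-\widetilde \vz(T)|]$ and $\gW_2(\gL(\vy),\gL(\widetilde \vy))\le L_F\,(\E[|\vz(T)-\widetilde \vz(T)|^2])^{1/2}$. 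Everything then reduces to estimating the first and second moments of $\Delta(T):=\vz(T)-\widetilde \vz(T)$ for a convenient coupling. The natural choice is the synchronous coupling, driving both copies by the same Brownian motion $W$. Because the diffusion coefficient $\sigma(t;\theta_\sigma)$ of the Neural LSDE in \Eqref{eq:langevin_sde} does not depend on the state, the It\^o integrals cancel in the difference and $\Delta(t)$ solves the random ODE $\dot\Delta(t)=\gamma(\vz(t);\theta_\gamma)-\gamma(\widetilde \vz(t);\theta_\gamma)$ with $\Delta(0)=h(\vx;\theta_h)-h(\widetilde \vx;\theta_h)$; affinity of $h$ gives $\E[|\Delta(0)|^2]\le L_h^2\,\E[|\vx-\widetilde \vx|^2]\le L_h^2\rho^2$ by \Eqref{ineq:initial}.

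Next I would establish two ingredients. The first is a uniform-in-time moment bound: applying It\^o's formula to $|\vz(t)|^4$ and using the linear-growth Assumption~\ref{ass:linear_growth} together with the dissipativity Assumption~\ref{ass:dissipave}, a Gr\"onwall argument yields $\sup_{t\ge0}\E[|\vz(t)|^4]\le C(1+\E[|\vx|^4])$ and likewise for $\widetilde \vz$; these combine into the factor $M:=5+2\E[|\vx|^4]+2\E[|\widetilde \vx|^4]$. The second is a geometric stability estimate for $\Delta$, which is where I would invoke the stochastic-stability machinery of Appendix~\ref{app:stability}. Under Assumptions~\ref{ass:lip_con}--\ref{ass:dissipave} the confining drift forces the two synchronously coupled trajectories together at an exponential rate, giving a weighted contraction of the schematic form $\E[|\Delta(T)|]\le c_1 e^{-c_2 T}\,\E[\,|\Delta(0)|\,(1+|\vz(0)|^2+|\widetilde \vz(0)|^2)\,]$ with $c_1,c_2$ independent of $T$. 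A Cauchy--Schwarz split of the right-hand side, using the elementary bound $(1+a^2+b^2)^2\le 3(1+a^4+b^4)$, separates the initial perturbation $(\E[|\Delta(0)|^2])^{1/2}\le L_h\rho$ from the uniform fourth-moment bound, producing exactly $\sqrt3\,L_h c_1 e^{-c_2 T}\sqrt{M}\,\rho$ and hence the stated $\gW_1$ inequality.

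Finally, for $\gW_2$ I would feed the analogous second-moment version of the stability estimate, $\E[|\Delta(T)|^2]\le 2\sqrt3\,L_h c_1 e^{-c_2 T}\sqrt{M}\,\rho$, into $\gW_2(\gL(\vy),\gL(\widetilde \vy))\le L_F(\E[|\Delta(T)|^2])^{1/2}$; taking the square root converts the linear-in-$\rho$, $\sqrt{M}$ bound into the claimed $e^{-c_2 T/2}$, $M^{1/4}$, $\sqrt\rho$ form. I expect the main obstacle to be the contraction ingredient: since the paper deliberately avoids convexity of the potential $U$ with $\nabla U=\gamma$, the one-point dissipativity of Assumption~\ref{ass:dissipave} does not by itself supply the two-trajectory monotonicity $\ip{\gamma(x)-\gamma(x')}{x-x'}\le -m|x-x'|^2$. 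Consequently the synchronous-coupling identity $\tfrac{\rd}{\rd t}|\Delta(t)|^2=2\,\ip{\Delta(t)}{\gamma(\vz(t);\theta_\gamma)-\gamma(\widetilde \vz(t);\theta_\gamma)}$ must be closed either by balancing the confinement from dissipativity against the non-contractive cross terms (controlled by Young's inequality and absorbed into the uniformly bounded moments) or by appealing to an Eberle/Harris-type geometric-ergodicity result in a weighted metric. Arranging the weight and constants so that the Cauchy--Schwarz step reproduces precisely $M$ and the prefactor $\sqrt3$ is the delicate part of the argument.
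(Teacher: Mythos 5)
Your overall architecture---the $L_F$-Lipschitz pushforward of $\gW_p$, the Cauchy--Schwarz/Jensen arithmetic that produces the factor $\sqrt{3}$ and $M = 5+2\E[|\vx|^4]+2\E[|\widetilde\vx|^4]$, and the reduction of everything to an exponential stability estimate for the latent states---matches the paper's, and you correctly located the crux in the non-convexity of the drift. However, your primary mechanism, synchronous coupling plus a contraction of $\E[|\Delta(t)|]$, genuinely fails under the stated assumptions. With additive noise the synchronous difference solves $\dot\Delta(t)=\gamma(\vz(t);\theta_\gamma)-\gamma(\widetilde\vz(t);\theta_\gamma)$, and the one-point dissipativity of Assumption~\ref{ass:dissipave} together with Lipschitzness only yields $\frac{\rd}{\rd t}\E[|\Delta(t)|^2]\leq 2L_\gamma\,\E[|\Delta(t)|^2]$, i.e.\ exponential \emph{growth}. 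This is not something Young's inequality and uniformly bounded moments can repair: for a drift with two wells (realizable within Assumptions~\ref{ass:lip_con}--\ref{ass:dissipave}, which only constrain behavior at infinity), two synchronously coupled trajectories started in different wells feel identical noise and different drifts, so $|\Delta(t)|$ need not decay at any rate. Obtaining exponential convergence in this regime is exactly what requires reflection-type couplings or Harris-type arguments; your first proposed fix does not work, and your second ("Eberle/Harris-type result in a weighted metric") is the paper's actual route, but it is incompatible with your synchronous-coupling frame: such results give contraction of a truncated, weighted Wasserstein-type functional \emph{of the laws}, realized by a non-synchronous coupling, not a moment bound for $\Delta$ under the coupling you fixed.

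Concretely, the paper invokes Proposition~\ref{prop:contraction} (cited from Chau et al.), the contraction $w_{1,2}(\gL(\vz(t)),\gL(\vz'(t)))\leq c_1e^{-c_2t}\,w_{1,2}(\gL(\vz(0)),\gL(\vz'(0)))$ for the functional $w_{1,2}$ built from $[1\wedge|x-x'|]\,(1+V_2(x)+V_2(x'))$, and chains $\gW_1\leq w_{1,2}$, the contraction, the Lipschitzness of $h$, and Cauchy--Schwarz/Jensen applied to the joint law of $(\vx,\widetilde\vx)$ as an admissible coupling. Your $\gW_2$ step has a second, independent gap: there is no ``second-moment version'' of that contraction available to feed into $\gW_2\leq L_F(\E[|\Delta(T)|^2])^{1/2}$. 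The paper instead proves the elementary comparison $\gW_2^2(\mu,\nu)\leq 2\,w_{1,2}(\mu,\nu)$, by splitting $|x-x'|^2$ on $\{|x-x'|\geq 1\}$ and its complement and absorbing both pieces into the weight $1+V_2(x)+V_2(x')$, and then applies the same $w_{1,2}$ contraction; this inequality is the missing ingredient that converts the linear-in-$\rho$, $\sqrt{M}$, $e^{-c_2T}$ bound into the stated $\sqrt{\rho}$, $M^{1/4}$, $e^{-c_2T/2}$ form.
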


\begin{theorem}\label{thm:stability_lnsde} (Robustness of Neural LNSDE and Neural GSDE) Let Assumptions~\ref{ass:lip_con}, \ref{ass:linear_growth} and \ref{ass:dissipave} hold. Let $\vx \in L^2(\Omega)$ and $\widetilde \vx \in L^2(\Omega)$ denote the input data and its perturbed version satisfying \Eqref{ineq:initial}, and $\vy$ and $\widetilde \vy$ denote their outputs defined in \Eqref{eq:output_MLP}. 
\begin{itemize}
    \item[(i)](Neural LNSDE) For $|\sigma_\theta|^2>2L_\gamma$ where $L_\gamma$ is the Lipschitz constant  of the neural network $\gamma$ and for sufficiently large $T$, we have 
    \begin{align*}
    \gW_1(\gL(\vy), \gL(\widetilde \vy)) \leq L_F \exp\{-(|\sigma_\theta|^2 -2L_\gamma)T/2\}(1+\rho). 
    \end{align*}
    \item[(ii)](Neural GSDE)   For $|\sigma_\theta|^2>2K_\gamma$ where $K_\gamma$ is the upper bound of the neural network $\gamma$ defined in \Eqref{ass:bdd_nn}, and for sufficiently large $T$, we have 
    \begin{align*}
        \gW_1(\gL(\vy), \gL(\widetilde \vy)) \leq L_F \exp\{-(|\sigma_\theta|^2 -2K_\gamma)T/2\}(1+\rho).
    \end{align*}
\end{itemize}
\end{theorem}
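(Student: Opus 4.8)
The plan is to reduce the Wasserstein bound on the network outputs to a moment estimate on the latent processes, and then to extract decay from the \emph{stabilizing} effect of the multiplicative (resp.\ geometric) noise via a logarithmic Lyapunov computation. Since $\vy = F(\vz(T);\theta_F)$ and $\widetilde\vy = F(\widetilde\vz(T);\theta_F)$ are built from the same Brownian motion, I would use the synchronous coupling of $(\vz(T),\widetilde\vz(T))$ together with the Lipschitz continuity of $F$ (Assumption~\ref{ass:lip_con}, constant $L_F$) to write
\[
\gW_1(\gL(\vy),\gL(\widetilde\vy)) \le \E[|\vy-\widetilde\vy|] \le L_F\,\E[|\vz(T)-\widetilde\vz(T)|].
\]
This turns both parts (i) and (ii) into the problem of showing that the expected latent gap decays exponentially in $T$.

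Next I would treat the two models separately, according to the mechanism producing the gain. For the Neural LNSDE the two copies share the diffusion coefficient, so the difference process $e(t):=\vz(t)-\widetilde\vz(t)$ solves a linear-multiplicative-noise SDE $\rd e(t) = [\gamma(t,\vz(t))-\gamma(t,\widetilde\vz(t))]\,\rd t + \sigma(t)e(t)\,\rd W(t)$, whose drift increment is controlled by $L_\gamma|e(t)|$. For the Neural GSDE, passing to the log-states $\log\vz(t),\log\widetilde\vz(t)$ (legitimate by the strict positivity in Proposition~\ref{thm:sde_solutions}(ii)) shows the common noise cancels in their difference, so there the stabilization comes not from a contraction of $e$ but from the absorbing state $0$: each nonnegative trajectory is driven toward $0$, and one bounds $|\vz(T)-\widetilde\vz(T)|\le\vz(T)+\widetilde\vz(T)$, reducing to the decay of each individual state whose drift is bounded by $K_\gamma$ through \Eqref{ass:bdd_nn}.

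The heart of the argument is an application of It\^o's formula to $\log|e(t)|$ (for LNSDE) and to $\log\vz(t)$ (for GSDE). The $-\tfrac12\sigma(t)^2$ It\^o correction generated by the noise is exactly what beats the drift's Lipschitz constant $L_\gamma$ (resp.\ bound $K_\gamma$), giving pathwise inequalities with deterministic exponent $L_\gamma-\tfrac12|\sigma_\theta|^2$ (resp.\ $K_\gamma-\tfrac12|\sigma_\theta|^2$), which is strictly negative precisely under the thresholds $|\sigma_\theta|^2>2L_\gamma$ and $|\sigma_\theta|^2>2K_\gamma$. Replacing $\sigma(t)$ by its limit $\sigma_\theta$ is where ``sufficiently large $T$'' enters: the transient mismatch between $\sigma(t)$ and $\sigma_\theta$ contributes only lower-order terms absorbed for large $T$.

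Finally I would assemble the moment bound $\E[|\vz(T)-\widetilde\vz(T)|]\le c\,e^{-(|\sigma_\theta|^2-2L_\gamma)T/2}(1+\rho)$, using $\E[|e(0)|]\le L_h\rho$ from the Lipschitz initial map and a uniform-in-time moment bound on $\vz,\widetilde\vz$ furnished by the dissipativity Assumption~\ref{ass:dissipave}, which supplies the $O(1)$ baseline and hence the factor $1+\rho$. The main obstacle will be exactly this last conversion from a pathwise (almost sure) exponential estimate to a first-moment bound: the exponential $\exp\!\big(\int_0^T \sigma\,\rd W(s)\big)$ arising from the noise has mean $\exp\!\big(\int_0^T \tfrac12\sigma^2\,\rd s\big)$, which on naive expectation cancels the stabilizing $-\tfrac12|\sigma_\theta|^2$ and restores the useless rate $L_\gamma$. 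Overcoming this cleanly—by controlling a sufficiently small fractional moment, for which the quadratic-in-$p$ It\^o term keeps the exponent negative, and then invoking Assumption~\ref{ass:dissipave} for the uniform integrability needed to pass back to $\gW_1$—is the delicate technical step, and is the reason the bound is asymptotic in $T$ and retains the additive constant rather than vanishing as $\rho\to 0$.
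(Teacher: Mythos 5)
Your proposal is correct in its architecture and, for part (i), is essentially the paper's own proof: the paper forms the same synchronously coupled difference process $\varepsilon(t)=\vz(t)-\widetilde\vz(t)$ (\Eqref{eq:sde_epsilon}) and obtains the almost-sure Lyapunov exponent $-(|\sigma_\theta|^2-2L_\gamma)/2$ by invoking Mao's stability criterion (Theorem~\ref{thm:exp_stability}) with $V(t,x)=|x|^2$, which is exactly your It\^o-on-$\log|e(t)|$ computation packaged as a citable lemma, and it reduces $\gW_1$ of the outputs to this difference via the Lipschitz constant $L_F$ just as you do. You genuinely diverge in two places. First, for the GSDE the paper repeats the LNSDE argument verbatim, asserting that the drift of the difference process satisfies $|\gamma_\Delta(t,x)-\gamma_\Delta(t,x')|\le K_\gamma|x-x'|$ ``due to the bounded activation functions''; this silently drops the cross term $(\gamma(t,\vz)-\gamma(t,\widetilde\vz))\odot\widetilde\vz$, which is only controlled by $L_\gamma|x-x'|\,|\widetilde\vz|$ and is not uniformly Lipschitz. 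Your alternative---use nonnegativity from Proposition~\ref{thm:sde_solutions}(ii) to bound $|\vz(T)-\widetilde\vz(T)|\le|\vz(T)|+|\widetilde\vz(T)|$ and show each trajectory decays almost surely at rate $K_\gamma-\tfrac12|\sigma_\theta|^2$ via the log transform---never forms that difference and is the sounder route for part (ii). Second, the conversion from almost-sure decay to an $L^1$ (hence $\gW_1$) bound: the paper uses the uniform second-moment bound of Lemma~\ref{lem:l2_bdd} (from Assumption~\ref{ass:dissipave}) together with Egorov's theorem and Cauchy--Schwarz on the exceptional set, which nominally retains the full exponent but is purely qualitative (``sufficiently large $T$'') and absorbs a $T$-independent error into the factor $(1+\rho)$ rather loosely; your fractional-moment route ($\E[|e(T)|^p]$ decays at rate $p\bigl(\tfrac{1-p}{2}|\sigma_\theta|^2-L_\gamma\bigr)$ for small $p>0$, then interpolate against the uniform $L^2$ bound) is quantitative and rigorous, but the interpolation strictly degrades the exponent below $(|\sigma_\theta|^2-2L_\gamma)/2$, so as written it proves the theorem only with a smaller rate constant. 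In short: same stabilization mechanism and same thresholds; a cleaner decomposition for the GSDE that repairs a weak step in the paper; and at the final step a trade-off---your version buys rigor and explicit constants at the price of the advertised exponent, while the paper's buys the exponent at the price of being non-quantitative.
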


The proofs for Theorem~\ref{thm:stability_lsde} and
Theorem~\ref{thm:stability_lnsde} can be found in Appendix~\ref{app:proof_langevin_sde} and ~\ref{app:proof_lnsde}, respectively. Theorem~\ref{thm:stability_lsde} and Theorem~\ref{thm:stability_lnsde} provide non-asymptotic upper bounds for the differences between the output distributions of the original input data and its perturbed version, influenced by the degree of distribution shift $\rho$ and the depth of the Neural SDEs $T$. Smaller perturbations and larger depths yield smaller differences. Thus, the proposed Neural SDEs do not undergo abrupt changes with respect to $\rho$, exhibiting robust performance even if the input distribution shifts. In contrast, the lack of such stability in other Neural SDEs can yield dramatically different solutions from small changes in the input data, causing dramatic performance degradation or overfitting. The experimental findings presented in Section~\ref{Experiment} and Appendix~\ref{supplementary:empirical} provide empirical support for our theoretical results.

\subsection{Incorporating a controlled path to Neural SDEs}\label{subsec:control}

To further improve empirical performance and effectively capture sequential observations like time-series data, we propose $\overline{\bm{z}}(t)$ that incorporates a controlled path in a nonlinear way as follows:
\begin{align}\label{eq:controlled_z}
\overline{\vz}(t) = \zeta(t, \vz(t), X(t); \theta_\zeta),
\end{align}
where $X(t)$ is the controlled path 
and $\zeta$ is a neural network parameterized by $\theta_\zeta$. Then, we replace $\vz(t)$ in the drift functions of the proposed Neural SDEs with $\overline \vz(t)$ as presented in \Eqref{eq:controlled_z}.
The effectiveness of utilizing $\overline \vz(t)$ is confirmed through an ablation study in Section~\ref{subsec:ablation}.

\begin{remark}\label{rem:sol_controlled} When combined with $\overline \vz(t)$, the proposed Neural SDEs, including Neural LSDE, Neural LNSDE, and Neural GSDE,  have their unique strong solutions. We refer to Appendix~\ref{app:remark_sol_controlled} for a detailed discussion and verification.    
\end{remark}

\section{Experiments}\label{Experiment}

\subsection{Superior performance with regular and irregular time series data}
In this section, we conducted interpolation and classification experiments and evaluated the proposed Neural SDEs using three real-world datasets: PhysioNet Mortality~\citep{silva2012predicting}, PhysioNet Sepsis~\citep{reyna2019early}, and Speech Commands~\citep{warden2018speech}.
For forecasting experiments, we utilized the MuJoCo~\citep{tassa2018deepmind} dataset, and the results are detailed in Appendix~\ref{supplementary:benchmarks_all}.

\paragraph{Datasets.} The PhysioNet Mortality dataset contains multivariate time series data from 37 variables of Intensive Care Unit (ICU) records, with irregular measurements taken within the first 48 hours of admission. For our interpolation experiments, we use all 4,000 labeled and 4,000 unlabeled instances. 

The PhysioNet Sepsis dataset includes 40,335 ICU patient cases and 34 time-dependent variables, aiming to classify each case according to the sepsis-3 criteria. This dataset represents irregular time series data, as only 10\% of the values are sampled with their respective timestamps for each patient.
We considered two scenarios: one with observational intensity (OI) and one without. In the scenario with OI, the observation index is attached to every value in the time series.

The Speech Commands dataset is a one-second-long audio data recorded with 35 distinct spoken words with the background noise. To create a balanced classification problem, ten labels\footnote{They include `yes', `no', `up', `down', `left', `right', `on', `off', `stop', and `go'.} were selected from 34,975 time-series samples. 
Each preprocessed sample, utilizing Mel-frequency cepstral coefficients, has a time-series length of 161 and a 20-dimensional input size.

\paragraph{Experimental Protocols.}  We followed the recommended pipeline for interpolation experiments with the PhysioNet Mortality dataset, as outlined in \citet{shukla2021multi} and its corresponding GitHub repository\footnote{\url{https://github.com/reml-lab/mTAN}}. 
For classification experiments involving the PhysioNet Sepsis and Speech Commands datasets, we adhered to the experimental protocols described in \citet{kidger2020neural} and its GitHub repository\footnote{\url{https://github.com/patrick-kidger/NeuralCDE}}. 
Our experiments were performed using a server on Ubuntu 22.04 LTS, equipped with an Intel(R) Xeon(R) Gold 6242 CPU and six NVIDIA A100 40GB GPUs. The source code can be accessed at \url{https://github.com/yongkyung-oh/Stable-Neural-SDEs}.

\paragraph{Models.} We considered a range of models, including state-of-the-art models for both interpolation and classification tasks. These are
\textbf{RNN-based models} ( RNN~\citep{rumelhart1986learning,medsker1999recurrent}, LSTM~\citep{hochreiter1997long}, GRU~\citep{chung2014empirical}, GRU-$\Delta t$, and GRU-D~\citep{che2016recurrent}),
\textbf{Attention-based models} (MTAN~\citet{shukla2021multi}, and MIAM~\citet{lee2022multi}),
\textbf{Neural ODEs:} Neural ODEs~\citep{chen2018neural}, GRU-ODE~\citep{de2019gru}, ODE-RNN~\citep{rubanova2019latent}, ODE-LSTM~\citep{lechner2020learning}, Latent-ODE~\citep{rubanova2019latent}, Augmented-ODE~\citep{dupont2019augmented}, and ACE-NODE~\citep{jhin2021ace})),
\textbf{Neural CDEs} (Neural CDE~\citet{kidger2020neural}, Neural RDE~\citep{morrill2021neural}, ANCDE~\citep{jhin2023attentive}, EXIT~\citep{jhin2022exit}, and LEAP~\citep{jhin2023learnable}), 
and \textbf{Neural SDEs} (Neural SDEs~\citep{tzen2019neural}, and Latent SDE~\citet{li2020scalable}).

\paragraph{Results.} The results of the interpolation and classification experiments are summarized in the following tables. We have highlighted \textbf{the best} and \underline{the second best} methods in the result tables.

Table~\ref{tab:interpolation} compares interpolation performance from 50\% to 90\% observed values in the test dataset. The proposed Neural SDEs consistently outperform all benchmark models, including the state-of-the-art method mTAND-Full (MTAN encoder--MTAN decoder model), at all observed time point settings.

\begin{table*}[htb]
\scriptsize\centering\captionsetup{justification=centering, skip=5pt}\renewcommand{\arraystretch}{1.0}
\caption{Interpolation performance versus percent observed time points on PhysioNet Mortality }\label{tab:interpolation}
\begin{tabular}{@{}lccccc@{}}
\toprule
\textbf{Methods} & \multicolumn{5}{c}{\textbf{Mean Squared Error ($\times 10^{-3}$)}}                                                                                   \\ \midrule
RNN-VAE                           & 13.418 ± 0.008         & 12.594 ± 0.004         & 11.887 ± 0.005         & 11.133 ± 0.007         & 11.470 ± 0.006         \\
L-ODE-RNN                         & 8.132 ± 0.020          & 8.140 ± 0.018          & 8.171 ± 0.030          & 8.143 ± 0.025          & 8.402 ± 0.022          \\
L-ODE-ODE                         & 6.721 ± 0.109          & 6.816 ± 0.045          & 6.798 ± 0.143          & 6.850 ± 0.066          & 7.142 ± 0.066          \\
mTAND-Full                              & 4.139 ± 0.029          & 4.018 ± 0.048          & 4.157 ± 0.053          & 4.410 ± 0.149          & 4.798 ± 0.036          \\
LatentSDE                         & 8.862 ± 0.036          & 8.864 ± 0.058          & 8.686 ± 0.122          & 8.716 ± 0.032          & 8.435 ± 0.077          \\ \midrule
\textbf{Neural SDE}               & 8.592 ± 0.055          & 8.591 ± 0.052          & 8.540 ± 0.051          & 8.318 ± 0.010          & 8.252 ± 0.023          \\ \midrule
\textbf{Neural LSDE}              & \textbf{3.799 ± 0.055} & \textbf{3.584 ± 0.055} & {\ul 3.457 ± 0.078}    & \textbf{3.262 ± 0.032} & \textbf{3.111 ± 0.076} \\
\textbf{Neural LNSDE}             & {\ul 3.808 ± 0.078}    & {\ul 3.617 ± 0.129}    & \textbf{3.405 ± 0.089} & {\ul 3.269 ± 0.057}    & 3.154 ± 0.084          \\
\textbf{Neural GSDE}              & 3.824 ± 0.088          & 3.667 ± 0.079          & 3.493 ± 0.024          & 3.287 ± 0.070          & {\ul 3.118 ± 0.065}    \\ 
\midrule
              \textbf{Observed \%}                    & \textbf{50\%}          & \textbf{60\%}          & \textbf{70\%}          & \textbf{80\%}          & \textbf{90\%}     \\   \bottomrule
\end{tabular}
\end{table*}

Tables~\ref{tab:sepsis} and \ref{tab:speech} demonstrate that the proposed Neural SDEs consistently outperform all other methods in classification tasks. For PhysioNet Sepsis, we report AUROC rather than accuracy due to dataset imbalance. While CDE-based methods typically benefit from OI \citep{kidger2020neural}, our models perform well even without OI. This indicates that the proposed Neural SDEs leverage the advantages of incorporating a controlled path in the drift function and introducing stochasticity through the diffusion function. In Table~\ref{tab:speech}, we do not report the performance of the na\"ive Nerual SDE due to training instability. Note that the proposed methods are less memory-efficient compared to CDE-based methods, and we leave memory usage improvement for future research.

\begin{table*}[htb]
\begin{minipage}{.52\linewidth}
\scriptsize\centering\captionsetup{justification=centering, skip=5pt}\renewcommand{\arraystretch}{1.0}
\caption{AUROC and memory usage (in MB) on PhysioNet Sepsis}\label{tab:sepsis}
\begin{tabular}{@{}lccccc@{}} 
\toprule
\multirow{2.5}{*}{\textbf{Methods}} & \multicolumn{2}{c}{\textbf{Test AUROC}} & & \multicolumn{2}{c}{\textbf{Memory}} \\ \cmidrule{2-3}\cmidrule{5-6} 
\multicolumn{1}{c}{}  & \multirow{1}{*}{\textbf{OI}}  & \multirow{1}{*}{\textbf{No OI}}  & & \multirow{1}{*}{\textbf{OI}} & \multirow{1}{*}{\textbf{No OI}} \\ \midrule
GRU-$\Delta t$ & 0.878 ± 0.006 & 0.840 ± 0.007 & & 837 & 826  \\
GRU-D & 0.871 ± 0.022 & 0.850 ± 0.013 & & 889 & 878  \\
GRU-ODE & 0.852 ± 0.010 & 0.771 ± 0.024 & & 454 & 273  \\
ODE-RNN & 0.874 ± 0.016 & 0.833 ± 0.020 & & 696 & 686  \\
Latent-ODE & 0.787 ± 0.011 & 0.495 ± 0.002 & & 133 & 126  \\
ACE-NODE & 0.804 ± 0.010 & 0.514 ± 0.003 & & 194 & 218  \\
Neural CDE  & 0.880 ± 0.006 & 0.776 ± 0.009 & & 244 & 122 \\
ANCDE & 0.900 ± 0.002 & 0.823 ± 0.003 & & 285 & 129 \\ \midrule
\textbf{Neural SDE} & 0.799 ± 0.007 & 0.796 ± 0.006 & & 368 & 240 \\ \midrule
\textbf{Neural LSDE} & {\ul 0.909 ± 0.004} & 0.879 ± 0.008 & & 373 & 436 \\ 
\textbf{Neural LNSDE} & \textbf{0.911 ± 0.002} & {\ul 0.881 ± 0.002} & & 341 & 445 \\ 
\textbf{Neural GSDE} & {\ul 0.909 ± 0.001} & \textbf{0.884 ± 0.002} & & 588 & 280 \\ 
\bottomrule
\end{tabular}
\end{minipage}
\hfill
\begin{minipage}{.46\linewidth}
\scriptsize\centering\captionsetup{justification=centering, skip=5pt}\renewcommand{\arraystretch}{1.0}
\caption{Accuracy and memory usage (in MB) on Speech Commands}\label{tab:speech}
\begin{tabular}{@{}clcc@{}}
\toprule
\multicolumn{2}{l}{\textbf{Methods}}                                                   & \textbf{Test Accuracy} & \textbf{Memory} \\ \midrule
\parbox[t]{1mm}{\multirow{5}{*}{\rotatebox[origin=c]{90}{RNN-based}}}     & RNN                & 0.197 ± 0.006          & 1905                      \\
                                              & LSTM               & 0.684 ± 0.034          & 4080                      \\
                                              & GRU                & 0.747 ± 0.050          & 4609                      \\
                                              & GRU-$\Delta t$     & 0.453 ± 0.313          & 1612                      \\
                                              & GRU-D              & 0.346 ± 0.286          & 1717                      \\ \midrule
\parbox[t]{1mm}{\multirow{12}{*}{\rotatebox[origin=c]{90}{NDE-based}}} & GRU-ODE            & 0.487 ± 0.018          & 171                        \\
                                              & ODE-RNN            & 0.678 ± 0.276          & 1472                      \\
                                              & Latent-ODE         & 0.912 ± 0.006          & 2668                      \\
                                              & Augmented-ODE      & 0.911 ± 0.008          & 2626                      \\
                                              & ACE-NODE           & 0.911 ± 0.003          & 3046                      \\
                                              & Neural CDE          & 0.898 ± 0.025          & 175                        \\
                                              & ANCDE              & 0.807 ± 0.075          & 180                        \\
                                              & LEAP               & 0.922 ± 0.002    & 391                        \\ \cmidrule(l){2-4} 
                                              & \textbf{Neural LSDE} & \textbf{0.927 ± 0.004} & 1187                        \\
                                              & \textbf{Neural LNSDE} & {\ul 0.923 ± 0.001} & 1164                        \\
                                              & \textbf{Neural GSDE} & 0.913 ± 0.001 & 1565                        \\ \bottomrule
\end{tabular}
\end{minipage}
\vspace{-10pt}
\end{table*}


\subsection{Robustness to missing data}\label{subsec:exp_robust}

We evaluated the performance of the proposed Neural SDEs on 30 datasets, including 15 univariate and 15 multivariate datasets, from the University of East Anglia (UEA) and the University of California Riverside (UCR) Time Series Classification Repository~\citep{bagnall2018uea,dau2019ucr}.
We assess our models under regular (0\% missing rate) and three missing rate conditions (30\%, 50\%, and 70\%), adhering to the protocol by \citet{kidger2020neural}. Details on datasets and protocols along with the implementation of benchmark methods are summarized in Appendices~\ref{supplementary:data} and ~\ref{supplementary:model}. We compare the classification performance of each method using average accuracy and average rank.

\begin{table*}[!htb]
\scriptsize\centering\captionsetup{justification=centering, skip=5pt}\renewcommand{\arraystretch}{1.0}
\caption{Classification performance on 30 datasets with regular and three missing rates \\ (The values within the parentheses indicate the average of 30 individual standard deviations.)
}\label{tab:result1}
\begin{tabular}{@{}lcccccccc@{}}
\toprule
\multirow{2.5}{*}{\textbf{Methods}} & \multicolumn{2}{c}{\textbf{Regular datasets}} & \multicolumn{2}{c}{\textbf{Missing datasets   (30\%)}} & \multicolumn{2}{c}{\textbf{Missing datasets   (50\%)}} & \multicolumn{2}{c}{\textbf{Missing datasets   (70\%)}} \\ \cmidrule(l){2-3} \cmidrule(l){4-5} \cmidrule(l){6-7} \cmidrule(l){8-9} 
                                  & \textbf{Accuracy}          & \textbf{Rank}    & \textbf{Accuracy}              & \textbf{Rank}         & \textbf{Accuracy}              & \textbf{Rank}         & \textbf{Accuracy}              & \textbf{Rank}         \\ \midrule
RNN                               & 0.582 (0.064)              & 13.9             & 0.513 (0.087)                  & 15.3                  & 0.485 (0.088)                  & 16.6                  & 0.472 (0.072)                  & 15.6                  \\
LSTM                              & 0.633 (0.053)              & 11.2             & 0.595 (0.060)                  & 12.1                  & 0.567 (0.061)                  & 12.9                  & 0.558 (0.058)                  & 12.5                  \\
GRU                               & 0.672 (0.059)              & 8.3              & 0.621 (0.063)                  & 10.2                  & 0.610 (0.055)                  & 10.3                  & 0.597 (0.062)                  & 10.3                  \\
GRU-$\Delta t$                            & 0.641 (0.070)              & 10.3             & 0.636 (0.066)                  & 8.9                   & 0.634 (0.056)                  & 8.7                   & 0.618 (0.065)                  & 10.2                  \\
GRU-D                             & 0.648 (0.071)              & 10.3             & 0.624 (0.075)                  & 10.4                  & 0.611 (0.073)                  & 11.1                  & 0.604 (0.067)                  & 10.8                  \\
MTAN                              & 0.648 (0.080)              & 12.0             & 0.618 (0.099)                  & 10.7                  & 0.618 (0.091)                  & 10.1                  & 0.607 (0.078)                  & 9.8                   \\
MIAM                              & 0.623 (0.048)              & 11.0             & 0.603 (0.066)                  & 11.1                  & 0.589 (0.063)                  & 12.2                  & 0.569 (0.056)                  & 12.3                  \\
GRU-ODE                           & 0.671 (0.067)              & 9.8              & 0.663 (0.064)                  & 9.5                   & 0.666 (0.059)                  & 8.3                   & 0.655 (0.062)                  & 7.8                   \\
ODE-RNN                           & 0.658 (0.063)              & 9.1              & 0.635 (0.064)                  & 9.3                   & 0.636 (0.067)                  & 8.2                   & 0.630 (0.055)                  & 8.5                   \\
ODE-LSTM                          & 0.619 (0.063)              & 11.4             & 0.584 (0.064)                  & 12.1                  & 0.561 (0.065)                  & 13.3                  & 0.530 (0.085)                  & 12.9                  \\
Neural CDE                        & 0.709 (0.061)              & 8.2              & 0.706 (0.073)                  & 6.4                   & 0.696 (0.064)                  & 6.5                   & 0.665 (0.072)                  & 7.6                   \\
Neural RDE                        & 0.607 (0.071)              & 13.9             & 0.514 (0.064)                  & 14.9                  & 0.468 (0.068)                  & 15.2                  & 0.415 (0.077)                  & 16.3                  \\
ANCDE                             & 0.693 (0.067)              & 7.8              & 0.687 (0.068)                  & 7.2                   & 0.683 (0.078)                  & 6.9                   & 0.655 (0.067)                  & 7.1                   \\
EXIT                              & 0.636 (0.073)              & 11.1             & 0.633 (0.078)                  & 10.2                  & 0.616 (0.075)                  & 10.6                  & 0.599 (0.075)                  & 11.1                  \\
LEAP                              & 0.444 (0.068)              & 15.2             & 0.401 (0.078)                  & 16.3                  & 0.425 (0.073)                  & 14.9                  & 0.414 (0.070)                  & 14.7                  \\
Latent SDE                        & 0.456 (0.073)              & 16.6             & 0.455 (0.073)                  & 15.4                  & 0.455 (0.069)                  & 15.0                  & 0.446 (0.066)                  & 15.1                  \\ \midrule
\textbf{Neural SDE}               & 0.526 (0.068)              & 13.4             & 0.508 (0.066)                  & 13.1                  & 0.517 (0.058)                  & 13.2                  & 0.512 (0.066)                  & 12.9                  \\ \midrule
\textbf{Neural LSDE}              & {\ul 0.717 (0.056)}        & {\ul 5.6}        & 0.690 (0.050)                  & 6.4                   & 0.686 (0.051)                  & {\ul 6.1}             & 0.682 (0.067)                  & {\ul 5.2}             \\
\textbf{Neural LNSDE}             & \textbf{0.727 (0.047)}     & \textbf{5.4}     & \textbf{0.723 (0.050)}         & \textbf{5.0}          & \textbf{0.717 (0.054)}         & \textbf{4.3}          & \textbf{0.703 (0.054)}         & \textbf{4.2}          \\
\textbf{Neural GSDE}              & 0.716 (0.065)              & 5.7              & {\ul 0.707 (0.069)}            & {\ul 5.3}             & {\ul 0.698 (0.063)}            & 6.1                   & {\ul 0.689 (0.056)}            & 5.3                   \\ \bottomrule
\end{tabular}
\end{table*}

Table~\ref{tab:result1} shows that the proposed Neural SDEs consistently achieve top-tier performance in accuracy and rank across different missing rates, maintaining stable accuracy even with increased missing rates. Conversely, traditional RNN-based methods, such as RNN, LSTM, and GRU, experience noticeable performance drops as the missing rate increases. Furthermore, the proposed methods converge more rapidly and achieve lower loss values than the na\"ive Neural SDE, as depicted in Figure~\ref{fig:training}. Particularly, when comparing Figure~\ref{fig:training}(a) with Figure~\ref{fig:limitation}, it becomes evident that the proposed methods are overcoming the stability limitations of the na\"ive Neural SDE.

\begin{figure*}[!htb]
    \centering\captionsetup{justification=centering, skip=5pt}
    \captionsetup[subfigure]{justification=centering, skip=5pt}
    \subfloat[BasicMotions]{
      \includegraphics[clip,width=0.24\linewidth]{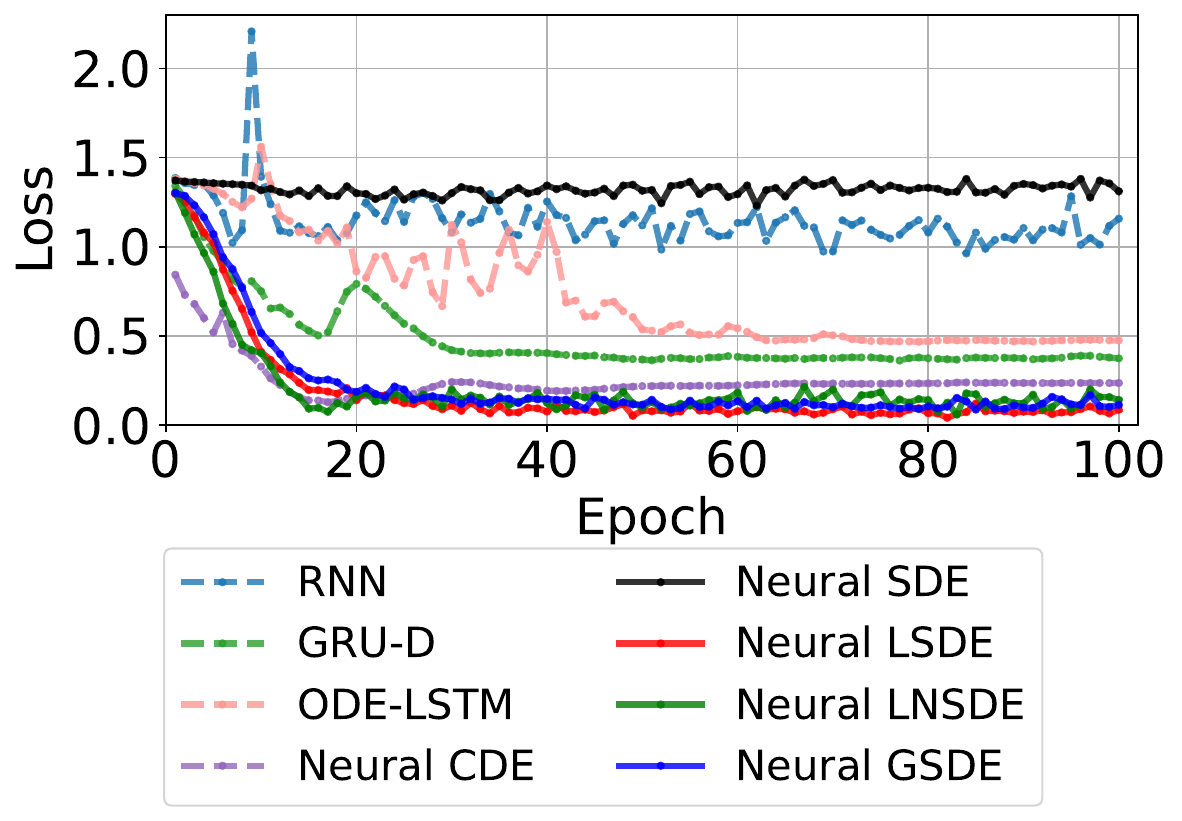}} 
    \subfloat[CharacterTrajectories]{
      \includegraphics[clip,width=0.24\linewidth]{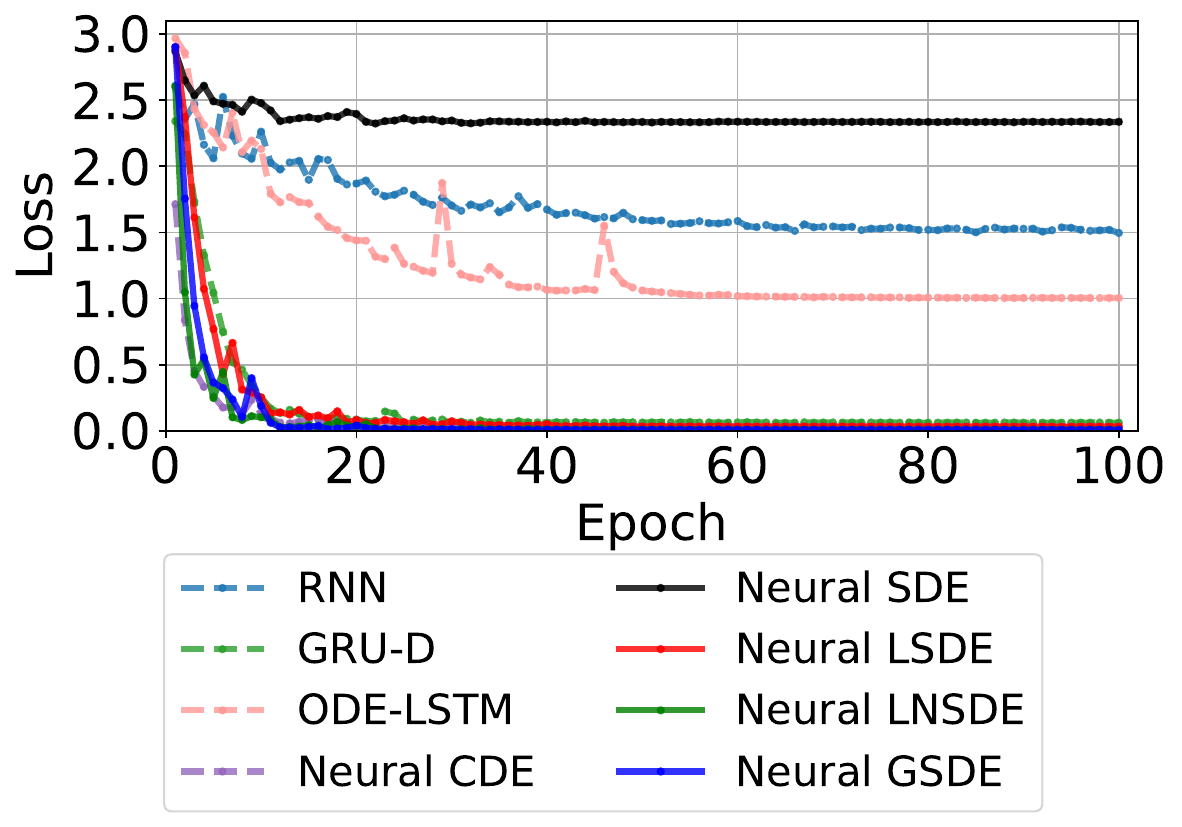}} 
    \subfloat[SpokenArabicDigits]{
      \includegraphics[clip,width=0.24\linewidth]{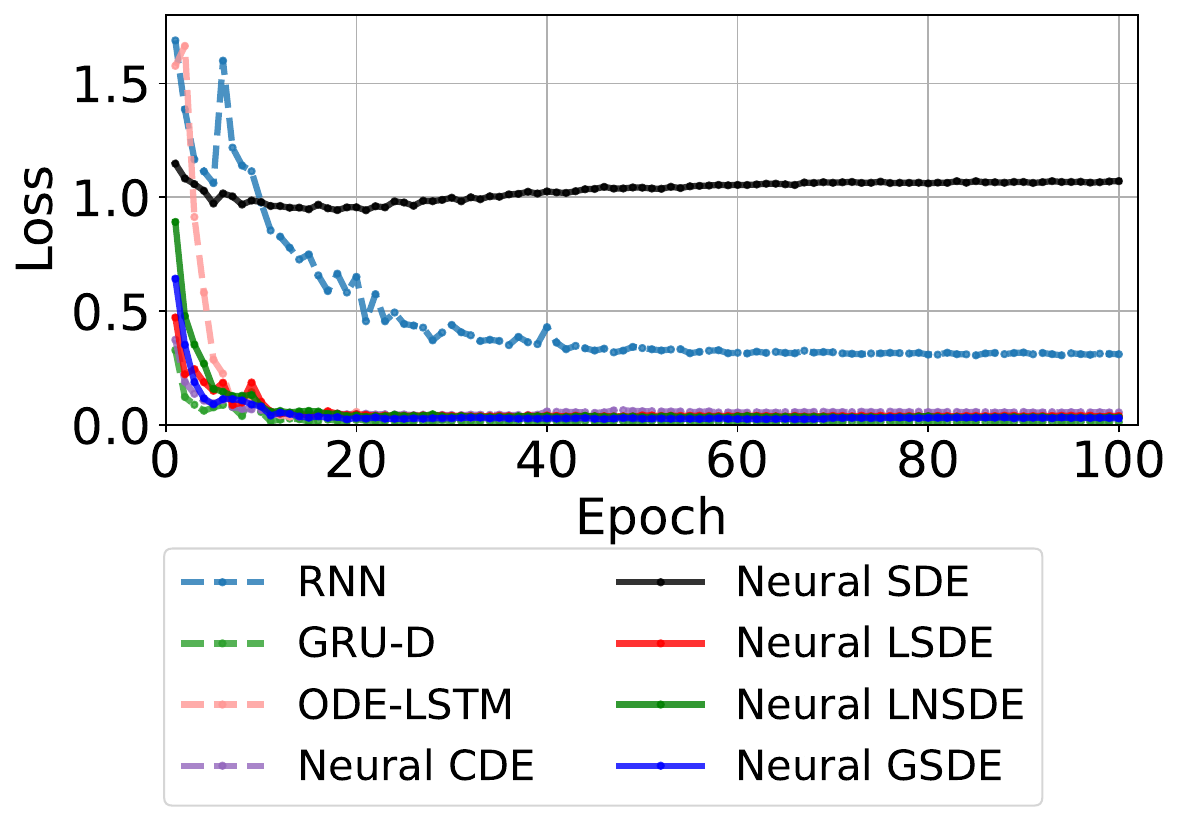}} 
    \subfloat[Trace]{
      \includegraphics[clip,width=0.24\linewidth]{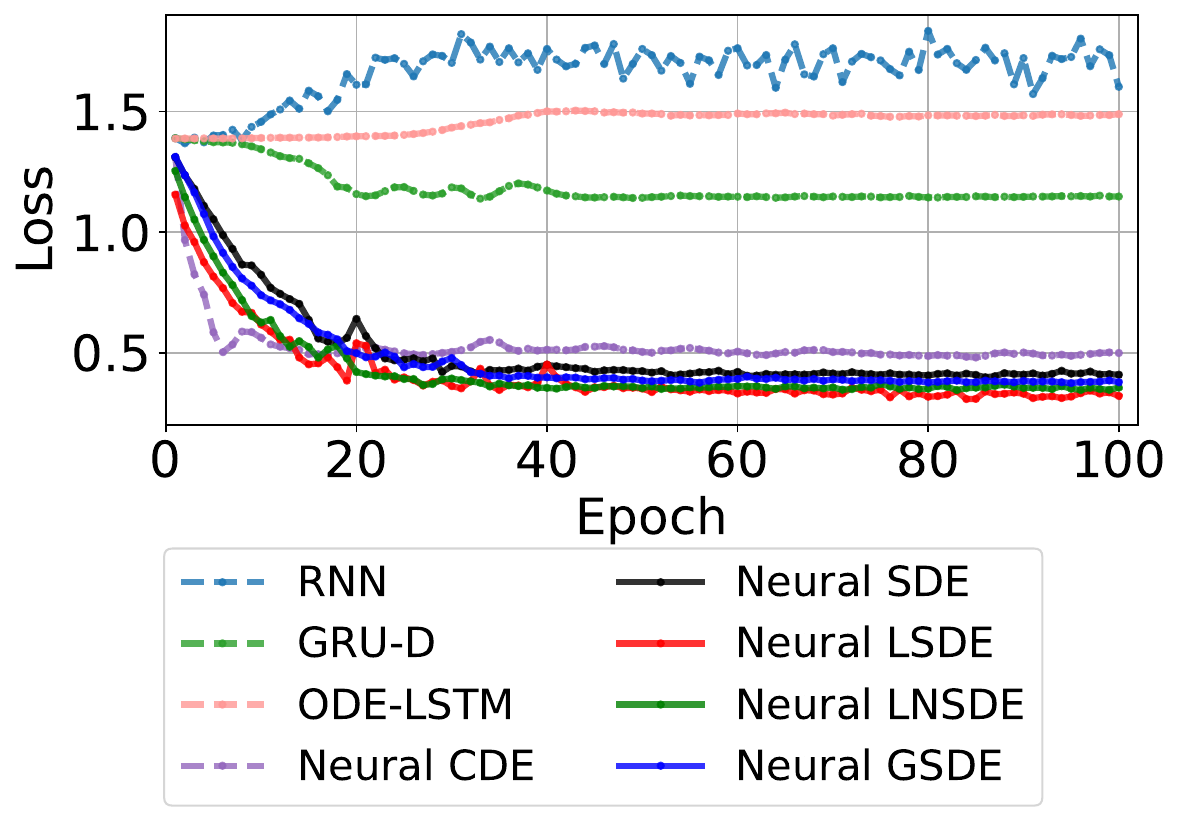}}
    \caption{Comparing stability of test loss during model training with the four datasets at a 50\% missing rate with the selected methods (Training 100 epochs without early-stopping.)}
    \label{fig:training}
\end{figure*}

\subsection{Ablation study}\label{subsec:ablation}
We conduct ablation experiments to show the performance impact of incorporating a controlled path into the drift function and using a nonlinear neural network for the diffusion function. Table~\ref{tab:ablation} illustrates the effectiveness of a nonlinear neural network compared to an affine function in the diffusion function. The results reveal a fundamental performance difference between the three proposed Neural SDEs and the na\"ive Neural SDE, with the latter unable to bridge the gap by adopting a controlled path and a nonlinear diffusion function. The superior and consistent performance of our models across 15 univariate and 15 multivariate datasets underscores their robustness and generality. 
Ablation studies for the controlled path design and solver for Neural SDEs are described in Appendix~\ref{supplementary:model}. Additionally, detailed results for each individual dataset are provided in Appendix~\ref{supplementary:results_all}.

\begin{table}[!htbp]
\scriptsize\centering\captionsetup{justification=centering, skip=5pt}\renewcommand{\arraystretch}{1.0}
\caption{
Comparison of average accuracy and average cross-entropy loss from the ablation study
\\ ($\zeta$ indicates whether the drift function incorporates the controlled path or not. `N' or `L' denote the architecture of diffusion function $\sigma$ with a nonlinear neural network or an affine function.)
}\label{tab:ablation}
\begin{tabular}{@{}lcccccccc@{}}
\toprule
\multirow{2.5}{*}{\textbf{Methods}}      & \multirow{2.5}{*}{\textbf{$\zeta$}} & \multirow{2.5}{*}{\textbf{$\sigma$}} & \multicolumn{2}{c}{\textbf{Univariate datasets (15)}} & \multicolumn{2}{c}{\textbf{Multivariate datasets (15)}} & \multicolumn{2}{c}{\textbf{All datasets (30)}}       \\ \cmidrule(l){4-5} \cmidrule(l){6-7} \cmidrule(l){8-9} 
                                       &                                   &                                    & \textbf{Accuracy}       & \textbf{Loss}          & \textbf{Accuracy}        & \textbf{Loss}           & \textbf{Accuracy}      & \textbf{Loss}          \\ \midrule
\multirow{4}{*}{\textbf{Neural SDE}}   & \multirow{2}{*}{O}                & N                                  & 0.615 (0.090)           & {\ul 0.736 (0.089)}    & 0.760 (0.036)            & 0.618 (0.092)           & 0.688 (0.063)          & 0.677 (0.091)          \\
                                       &                                   & L                                  & 0.535 (0.095)           & 0.838 (0.094)          & 0.752 (0.040)            & 0.633 (0.087)           & 0.643 (0.068)          & 0.736 (0.090)          \\
                                       & \multirow{2}{*}{X}                & N                                  & 0.516 (0.084)           & 0.914 (0.069)          & 0.515 (0.045)            & 1.248 (0.085)           & 0.516 (0.064)          & 1.081 (0.077)          \\
                                       &                                   & L                                  & 0.498 (0.087)           & 0.929 (0.078)          & 0.514 (0.050)            & 1.255 (0.081)           & 0.506 (0.068)          & 1.092 (0.080)          \\ \midrule
\multirow{4}{*}{\textbf{Neural LSDE}}  & \multirow{2}{*}{O}                & N                                  & 0.604 (0.081)           & 0.752 (0.084)          & \textbf{0.783 (0.031)}   & \textbf{0.572 (0.080)}  & 0.694 (0.056)          & {\ul 0.662 (0.082)}    \\
                                       &                                   & L                                  & 0.533 (0.089)           & 0.877 (0.086)          & 0.745 (0.038)            & 0.668 (0.085)           & 0.639 (0.064)          & 0.772 (0.085)          \\
                                       & \multirow{2}{*}{X}                & N                                  & 0.530 (0.069)           & 0.856 (0.060)          & 0.527 (0.054)            & 1.177 (0.082)           & 0.528 (0.060)          & 1.039 (0.074)          \\
                                       &                                   & L                                  & 0.505 (0.064)           & 0.912 (0.061)          & 0.518 (0.057)            & 1.210 (0.101)           & 0.512 (0.059)          & 1.083 (0.088)          \\ \midrule
\multirow{4}{*}{\textbf{Neural LNSDE}} & \multirow{2}{*}{O}                & N                                  & \textbf{0.654 (0.073)}  & \textbf{0.701 (0.091)} & {\ul 0.780 (0.029)}      & {\ul 0.577 (0.070)}     & \textbf{0.717 (0.051)} & \textbf{0.639 (0.080)} \\
                                       &                                   & L                                  & 0.586 (0.087)           & 0.765 (0.083)          & 0.764 (0.044)            & 0.617 (0.108)           & 0.675 (0.066)          & 0.691 (0.096)          \\
                                       & \multirow{2}{*}{X}                & N                                  & 0.532 (0.077)           & 0.878 (0.060)          & 0.502 (0.051)            & 1.254 (0.082)           & 0.517 (0.064)          & 1.066 (0.071)          \\
                                       &                                   & L                                  & 0.528 (0.085)           & 0.890 (0.069)          & 0.510 (0.047)            & 1.257 (0.089)           & 0.519 (0.066)          & 1.074 (0.079)          \\ \midrule
\multirow{4}{*}{\textbf{Neural GSDE}}  & \multirow{2}{*}{O}                & N                                  & {\ul 0.633 (0.091)}     & 0.742 (0.086)          & 0.772 (0.036)            & 0.598 (0.077)           & {\ul 0.703 (0.063)}    & 0.670 (0.081)          \\
                                       &                                   & L                                  & 0.572 (0.083)           & 0.796 (0.084)          & 0.748 (0.039)            & 0.653 (0.094)           & 0.660 (0.061)          & 0.724 (0.089)          \\
                                       & \multirow{2}{*}{X}                & N                                  & 0.531 (0.074)           & 0.868 (0.052)          & 0.510 (0.045)            & 1.261 (0.093)           & 0.520 (0.060)          & 1.064 (0.072)          \\
                                       &                                   & L                                  & 0.525 (0.078)           & 0.893 (0.072)          & 0.509 (0.052)            & 1.261 (0.093)           & 0.517 (0.065)          & 1.077 (0.083)          \\ \bottomrule
\end{tabular}
\end{table}

\section{Conclusion}\label{Conclusion}
In this study, we proposed three stable classes of Neural SDEs - Langevin-type SDE, Linear Noise SDE, and Geometric SDE - with the aim of capturing complex dynamics and improving stability in time series data. While the drift and diffusion terms in the existing Neural SDEs are directly approximated by neural networks, the proposed Neural SDEs are trained based on theoretically well-defined SDEs. We investigated the theoretical properties of the proposed Neural SDE classes, particularly their robustness under distribution shift, and corroborated their effectiveness in handling real-world irregular time series data through extensive experiments. As a result, the proposed Neural SDEs achieved state-of-the-art results in a wide range of experiments. However, it is important to acknowledge that our methods require more computational resources compared to Neural CDE-based models. Despite this, we found that our methods significantly enhance the stability of Neural SDE training and improve classification performance under challenging circumstances.

\subsubsection*{Ethic Statement}
We commit to conducting our research with integrity, ensuring ethical practices and responsible use of technology in alignment with established academic and scientific standards.
\subsubsection*{Reproducibility Statement}
We make our code and implementation details publicly accessible for reproducibility purposes. Code is available at \url{https://github.com/yongkyung-oh/Stable-Neural-SDEs}.
\subsubsection*{Acknowledgement}
We thank the teams and individuals for their efforts in dataset preparation and curation for our research, especially the UEA \& UCR repository for the 30 datasets we extensively analyzed.

This work was partly supported by the Korea Health Technology R\&D Project through the Korea Health Industry Development Institute (KHIDI), funded by the Ministry of Health and Welfare, Republic of Korea (Grant number: HI19C1095), the National Research Foundation of Korea (NRF) grant funded by the Korea government (MSIT)(No.RS-2023-00253002), the National Research Foundation of Korea (NRF) grant funded by the Korea government (MSIT)(No.RS-2023-00218913), and the Institute of Information \& communications Technology Planning \& Evaluation (IITP) grant funded by the Korea government (MSIT) (No.2020-0-01336, Artificial Intelligence Graduate School Program (UNIST)).

{\small
\bibliographystyle{iclr2024_conference}
\bibliography{references}
}

\clearpage
\newpage
\appendix

\section{Proofs for Section~\ref{subsec:sdes}}\label{app:proof_sol}
It is a classical result that SDE has a unique strong solution when the drift and diffusion functions are Lipschitz continuous and at most linearly growing as stated in the following Theorem. 
\begin{theorem}\label{thm:ex_unq_sde}(\citet{mao2007stochastic}) Consider the following $d$-dimensional stochastic differential equation
$$
\rd X(t) = f(t, X(t)) \rd t + g(t, X(t)) \rd W(t)
$$
where $W(t)$ is the $d$-dimensional Brownian motion. Assume that $f(t,x), g(t,x)$ are Lipschitz continuous in $(t, x)$ and satisfy the linear growth condition for every finite subinterval $[0,T]$, i.e., there exist constants  $K_{T}, L_{T}>0$ such that for all $t\in [0,T]$ and $x,x'\in \sR^d$,
$$
|f(t,x) -f(t,x')| + |g(t,x) -(t,x')| \leq L_{T}|x-x'|,
$$
and 
$$
|f(t,x)| + |g(t,x)| \leq K_{T}(1+|x|).
$$
Then, there exists a unique strong (global) solution $\{X(t)\}_{t\geq 0}$ satisfying 
$$
\E \left[\int_0^\infty |X(t)|^2 \rd t \right]<\infty. 
$$ 
\end{theorem}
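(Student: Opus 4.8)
The plan is to prove existence and uniqueness on each finite horizon $[0,T]$ by the classical method of successive approximations (Picard iteration), and then to concatenate the solutions across consecutive intervals to obtain the global solution; the square-integrability is read off from the a priori moment bounds generated along the way. Throughout I would work with continuous adapted processes equipped with the norm $\|Y\|_T^2 := \E[\sup_{t\le T}|Y(t)|^2]$, under which the relevant space is complete, and lean on the two workhorse estimates for the stochastic term: the It\^o isometry $\E[|\int_0^t g(s,Y(s))\,\rd W(s)|^2] = \E[\int_0^t |g(s,Y(s))|^2\,\rd s]$ and Doob's $L^2$ maximal inequality, which together let me pass from pointwise-in-time bounds to bounds on the running supremum. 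Note that $\|Y\|_T<\infty$ already forces $\E[\int_0^T|Y(s)|^2\,\rd s]<\infty$, so all stochastic integrals encountered are well-defined.

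First I would define the Picard iterates by $X^{(0)}(t)\equiv X(0)$ and
\begin{equation*}
X^{(n+1)}(t) = X(0) + \int_0^t f(s,X^{(n)}(s))\,\rd s + \int_0^t g(s,X^{(n)}(s))\,\rd W(s).
\end{equation*}
Using the linear growth bound $|f|+|g|\le K_T(1+|x|)$ I would verify by induction that each $X^{(n)}$ has finite $\|\cdot\|_T$. The heart of the existence argument is a recursive estimate for the increments: combining $|a+b+c|^2\le 3(|a|^2+|b|^2+|c|^2)$, Cauchy--Schwarz in time on the drift integral, Doob's inequality together with the It\^o isometry on the diffusion integral, and the Lipschitz bound $|f(s,x)-f(s,x')|+|g(s,x)-g(s,x')|\le L_T|x-x'|$, I would obtain a constant $C_T$ depending only on $L_T$ and $T$ such that
\begin{equation*}
\E\Big[\sup_{s\le t}|X^{(n+1)}(s)-X^{(n)}(s)|^2\Big] \le C_T \int_0^t \E\Big[\sup_{r\le s}|X^{(n)}(r)-X^{(n-1)}(r)|^2\Big]\,\rd s .
\end{equation*}
Iterating this bound produces the summable majorant proportional to $(C_T t)^n/n!$, so $\{X^{(n)}\}$ is Cauchy in $\|\cdot\|_T$ and converges to a limit $X$. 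Passing to the limit in the integral equation, where both integrals converge by the same Lipschitz estimates, shows that $X$ solves the SDE on $[0,T]$.

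For uniqueness I would take two solutions $X,Y$ with the same initial condition, set $\Delta(t)=X(t)-Y(t)$, and rerun the identical drift and diffusion estimates to get $\E[\sup_{s\le t}|\Delta(s)|^2]\le C_T\int_0^t \E[\sup_{r\le s}|\Delta(r)|^2]\,\rd s$. Gronwall's inequality then forces $\E[\sup_{s\le T}|\Delta(s)|^2]=0$, i.e.\ $X$ and $Y$ are indistinguishable. To globalize, I would solve successively on $[0,1],[1,2],\dots$, using the terminal value on one interval as the initial value on the next; uniqueness guarantees the pieces agree on overlaps, yielding a single solution on $[0,\infty)$, and the finite-horizon moment bounds deliver $\E[\int_0^T|X(t)|^2\,\rd t]<\infty$ for every finite $T$.

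I expect the main obstacle to be the diffusion contribution to the recursive inequality: controlling the supremum $\E[\sup_{s\le t}|\int_0^s g\,\rd W|^2]$, rather than merely the pointwise quantity, requires applying Doob's $L^2$ maximal inequality \emph{before} invoking the It\^o isometry, and the bookkeeping of constants when merging the drift estimate (Cauchy--Schwarz in time) with the diffusion estimate (isometry) must be arranged so that the resulting $C_T$ is genuinely uniform in $n$. Once this estimate is in place, the moment-closedness of the iterates, the summability of the majorant, and the Gronwall step are all routine.
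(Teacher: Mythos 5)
Your proof is correct, but there is nothing in the paper to compare it against: the paper states this theorem as a quoted classical result from \citet{mao2007stochastic} and gives no proof, using it only as a black box in the proof of Proposition~\ref{thm:sde_solutions}. The Picard-iteration argument you outline --- completeness under $\E[\sup_{t\le T}|\cdot|^2]$, Cauchy--Schwarz on the drift, Doob's $L^2$ maximal inequality followed by the It\^o isometry on the diffusion, the $(C_T t)^n/n!$ majorant, Gronwall for uniqueness, and concatenation over $[0,1],[1,2],\dots$ for globality --- is precisely the proof given in the cited reference, so you have in effect reproduced the source's argument, including the one genuinely delicate point you flag (applying Doob before the isometry to control the running supremum).

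One discrepancy deserves attention, and it is a defect of the statement rather than of your proof. As transcribed in the paper, the conclusion reads $\E\left[\int_0^\infty |X(t)|^2 \,\rd t\right]<\infty$, an \emph{infinite}-horizon bound. Your argument delivers only $\E\left[\int_0^T |X(t)|^2 \,\rd t\right]<\infty$ for every finite $T$, and this is the most that can be true: under the stated hypotheses the infinite-horizon claim is false in general (take $f\equiv 0$, $g\equiv 0$, $X(0)=1$, so that $X(t)\equiv 1$ and the integral diverges). Mao's original theorem is formulated on a finite interval $[t_0,T]$, and the paper's adaptation to a global-in-time statement mangles the integrability conclusion; your finite-horizon version is the correct one, so do not regard the mismatch as a gap to be closed.
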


\begin{proof}[\textbf{Proof of Proposition~\ref{thm:sde_solutions}}]
(i) For the Langevin SDE and linear noise SDE, from Assumptions~\ref{ass:lip_con} and \ref{ass:linear_growth}, the drift and diffusion coefficients are Lipschitz continuous and at most linearly growing for every finite subinterval $t\in [0,T]$. Thus, they have their unique strong solutions by Theorem~\ref{thm:ex_unq_sde}. 

(ii) Recall that we consider \texttt{tanh} or \texttt{sigmoid} functions as activation functions for the GSDE. Hence, the drift and diffusion coefficients in \Eqref{eq:sde_geometric} satisfy the conditions stated in Theorem~\ref{thm:ex_unq_sde}. Therefore, \Eqref{eq:sde_geometric} has a unique strong global solution. 

To prove the nonnegativity of the solution, consider a stochastic process $\{\vy(t)\}_{t\geq 0}$ defined on the same probability space, which is governed by the following SDE:
\begin{equation}\label{eq:sde_y}
\rd \vy(t) = \left(\gamma(t,e^{\vy(t)};\theta_\gamma) - \frac{1}{2}\diag(\sigma(t;\theta_\sigma)\sigma(t;\theta_\sigma)^\top)\right) \rd t + \sigma(t;\theta_\sigma)  \rd W(t). 
\end{equation}
where $\diag(A)$ is the vector of the diagonal elements of matrix $A$. Since the diffusion term of \Eqref{eq:sde_y} is Lipschitz continuous and is at most linearly growing, from Theorem~\ref{thm:ex_unq_sde},  \Eqref{eq:sde_y} has a unique strong solution, so $\{\vy(t)\}_{t\geq 0}$ is well-defined. 

Let $\vz(t) = \exp(\vy(t))$, i.e. $\vz_i(t) = \exp(\vy_i(t))$ for $i=1,2,\ldots, d_z$ where $\vz_i$ and $\vy_i$ are the $i$-th components of $\vz$ and $\vy$, respectively. For all $i=1,2,\ldots, d_z$, we note that $\vz_i(t)$ is nonnegative almost surely for all $t\geq 0$ by its construction. Also, using Ito's formula we have
\begin{align*}
\rd \vz_i(t) &= e^{\vy_i(t)} \rd \vy_i(t) + \frac{1}{2} e^{\vy_i(t)} \rd[\vy_i(t), \vy_i(t)] \\
& = \vz_i(t)\gamma_i(t,\vz(t);\theta_f) \rd t + \vz_i(t) \sum_{j=1}^{d_z} (\sigma(t;\theta_\sigma))_{i,j} \rd W_{j}(t). 
\end{align*}
where $\gamma_i$ is the $i$-th element of $\gamma$ and $(\sigma(t;\theta_\sigma))_{ij}$ represents the $(i,j)$-th element of $\sigma(t;\theta_\sigma)$, and $[\vy_i(t), \vy_i(t)]$ is the quadratic variation of $\vy_i(t)$ up to $t$ for $i=1,2,\ldots, d_z$. In the matrix form, $\vz(t)$ satisfies 
$$
\frac{\rd \vz(t)}{\vz(t)} = \gamma(t, \vz(t);\theta_f) \rd t + \sigma(t;\theta_\sigma) \rd W(t).
$$
Therefore, $\vz(t)$ is the solution of \Eqref{eq:sde_geometric}. Let $T_0^i= \inf\{t>0 | z_i(t)=0\}$ be the first hitting time to state $0$ with $z_i(0)>0$ for $i=1,2,\ldots, d_z$ where $z_i(t)$ is the $i$-th component of $\vz(t)$. Since $\rd z_i(t) = 0$ for $t\geq T_0^i$, then $z_i(t)=0$ for all $t\geq T_0^i$. This implies that once $z_i(t)$ reaches $0$, it remains there forever. Therefore, $0$ is an absorbing state. 
\end{proof}  

\subsection{Details to Remark~\ref{rem:sol_controlled}}\label{app:remark_sol_controlled}

To incorporate a controlled path of the underlying time-series data, we replace $\vz(t)$ in the drift functions of the proposed Neural SDEs with $\overline z(t)$ defined in \Eqref{eq:controlled_z}. More specifically, when combined with $\overline \vz(t)$, the proposed Neural SDEs are given by 
\begin{align}
     \mbox{(Neural LSDE)}\quad \rd\vz(t) &= \overline \gamma_1(\vz(t);\theta_\gamma,\theta_\zeta)\rd t + \sigma(t;\theta_\sigma) \rd W(t), \label{eq:langevin_sde_controlled} \\ 
     \mbox{(Neural LNSDE)}\quad  \rd\vz(t) &= \overline \gamma_2(t,\vz(t);\theta_\gamma,\theta_\zeta)\rd t + \sigma(t;\theta_\sigma)\vz(t) \rd W(t), \label{eq:ln_sde_controlled} \\
     \mbox{(Neural GSDE)}\quad \frac{\rd\vz(t)}{\vz(t)} &=  \overline \gamma_2 (t,\vz(t);\theta_\gamma,\theta_\zeta) \rd t + \sigma(t;\theta_\sigma) \rd W(t).\label{eq:sde_geometric_controlled}
\end{align}
where $\overline \gamma_1 (\vz(t);\theta_\gamma,\theta_\zeta):=\gamma\left(\overline \vz(t);\theta_\gamma)=\gamma(\zeta(t,\vz(t),X(t);\theta_\zeta);\theta_\gamma\right)$ and $\overline \gamma_2 (t,\vz(t);\theta_\gamma,\theta_\zeta):=\gamma\left(t, \overline \vz(t);\theta_\gamma)=\gamma(t, \zeta(t,\vz(t),X(t);\theta_\zeta);\theta_\gamma\right)$ for given $\{X(t)\}_{t\geq0}$, and $\zeta$ is a neural network with parameter $\theta_\zeta$.

We highlight that the proposed Neural SDEs defined in Equations~\ref{eq:langevin_sde_controlled},~\ref{eq:ln_sde_controlled}, and ~\ref{eq:sde_geometric_controlled} have their unique strong solutions. To show this, it is enough to show that $\overline \gamma_1(\cdot;\theta_\gamma, \theta_\zeta)$ and $\overline \gamma_2(\cdot,\cdot;\theta_\gamma, \theta_\zeta)$ are Lipschitz continuous and at most linearly growing (or bounded for GSDE).

First of all, one can show that $\overline \gamma_1(\cdot;\theta_\gamma,\theta_\zeta)$ is Lipschitz continuous for $x,y\in\sR^d$ since 
\begin{align*}
    |\overline \gamma_1(x;\theta_\gamma,\theta_\zeta)-\overline \gamma_1(y;\theta_\gamma,\theta_\zeta)|&\leq |\gamma( \zeta(t,x,X(t);\theta_\zeta);\theta_\gamma)-\gamma( \zeta(t,y,X(t);\theta_\zeta);\theta_\gamma)|\\
    &\leq L_{\gamma} |\zeta(t,x,X(t);\theta_\zeta)-\zeta(t,y,X(t);\theta_\zeta)|\\
    &\leq L_{\gamma} L_\zeta |x-y|,
\end{align*}
where $L_{\gamma}$ and $L_\zeta$ are Lipschitz constants for the neural networks $\gamma$ and $\zeta$, respectively. In addition, $\overline \gamma_1(\cdot;\theta_\gamma, \theta_\zeta)$ is at most linearly growing since for all $x\in \sR^d$,
\begin{align*}
    |\overline \gamma_1(x;\theta_\gamma, \theta_\zeta)| &=|\gamma\left(\zeta(x;\theta_\zeta);\theta_\gamma\right)| \\
    &\leq K_\gamma (1 + |\zeta(x;\theta_\zeta)|) \\
    &= K_\gamma + K_\gamma K_\zeta (1+ |x|) \\
    &\leq K_\gamma (1+K_\zeta) (1+ |x|).
\end{align*}
Therefore, due to Theorem~\ref{thm:ex_unq_sde}, Neural LSDE combined with $\overline \vz(t)$ defined in \Eqref{eq:langevin_sde_controlled} has a unique strong solution. The proofs for Neural LNSDE and Neural can be shown in the same manner.

\section{Stochastic Stability and Proofs for Section~\ref{subsec:stability}}
\subsection{Background on stability of SDEs}\label{app:stability}
Stability is an important concept that describes the behavior of a solution in a given differential equation with respect to changes in the initial conditions. It is well-known that stability of ODEs can be determined without solving the equation by using the Lyapunov direct method. Fortunately, it turns out that the Lyapunov technique for analyzing stability of ODEs can be applied to study stability of SDEs with slight modifications. This is significantly useful since explicit solutions for SDEs cannot be obtained except in special cases. In this Appendix, we provide a brief overview of stochastic stability. We refer to \citet{mao2007stochastic, rafail} for more details. 

Consider the $d$-dimensional SDE:
\begin{equation}\label{eq:sde_app}
\rd X(t) = f(t, X(t))\rd t + g(t, X(t))\rd W(t), 
\end{equation}
with the initial value $x_0 \in \sR^d$. We assume that \Eqref{eq:sde_app} has a unique global solution denoted by $X(t; x_0)$ for all $t\geq0$ with the initial value $x_0=0$. Furthermore, assume that 
$$
f(t, 0) = 0 \quad \mbox{and} \quad g(t, 0)=0 
$$
for $t\geq 0$. Then, $X(t;x_0)\equiv0$ is the solution of \Eqref{eq:sde_app} and is called the trivial solution. For $V\in C^{1,2}(\sR_+ \times \sR^d)$, we define the infinitesimal generator $\gG$ of $\vz(t)$:
$$
\gG V(t,x) = V_t(t,x) + V_x(t,x)^\top f(t,x) + \frac{1}{2} \Tr[g(t,x)g(t,x)^\top V_{xx} (t,x)]
$$
where $V_x$ is the gradient of $V$ with respect to $x$ and $V_{xx}$ is the Hessian of $V$ with respect to $x$.

In a stochastic system driven by a SDE, stability can be defined in several ways such as stability in probability, almost sure exponential stability, and moment exponential stability. For our purpose, we focus on almost sure exponential stability.

\begin{definition}\label{def:exp_stable} The trivial solution of \Eqref{eq:sde_app} is \textit{almost surely exponential stable} if for all $x_0\in \sR^d$
$$
\lim\sup_{t\rightarrow \infty} \frac{1}{t} \log |X(t;x_0)| <0\quad a.s.
$$
\end{definition}

The almost surely exponential stability implies that almost all sample paths of the solution will converge to $x=0$ exponentially fast. We record a criterion (sufficient condition) for the almost surely exponential stability. 

\begin{theorem}\label{thm:exp_stability}(\citet{mao2007stochastic}) Assume that there is a function $V(t,x)\in \gC^{1,2}(\sR_+\times \sR^d)$, and constants $p>0, c_1, c_2\in \sR, c_3\geq 0$ such that 
\begin{itemize}
    \item[(i)] $c_1 |x|^p \leq V(t,x),$
    \item[(ii)] $\gG V(t,x) \leq c_2 V(t,x)$, 
    \item[(iii)] $|V_x(t,x)g(t,x)|^2\geq c_3 V^2(t,x)$,
\end{itemize}
for all $(t,x)\in \sR_+ \times \sR^d$. Then, 
$$
\lim\sup_{t\rightarrow \infty} \frac{1}{t} \log |X(t;x_0)|\leq -\frac{c_3-2c_2}{2p} \quad a.s.
$$
for all $x_0 \in \sR^d$. In particular, if $c_3>2c_2$, the given SDE is almost surely exponentially stable.      
\end{theorem}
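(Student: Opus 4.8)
The plan is to reduce the statement to a pathwise upper estimate on $\log V(t,X(t))$ derived from It\^o's formula, and then to tame the resulting stochastic-integral term using the exponential martingale inequality together with the Borel--Cantelli lemma. Fix $x_0\in\sR^d$; the case $x_0=0$ is trivial, since then $X(t;x_0)\equiv 0$ and $\log|X(t;x_0)|=-\infty$, so I assume $x_0\neq0$. The first preliminary point is that $V(t,X(t))>0$ for all $t\geq0$ almost surely: by (i) with $c_1>0$ we have $V>0$ away from the origin, and by pathwise uniqueness a solution started at $x_0\neq0$ cannot reach the equilibrium $0$ in finite time a.s.; hence $\log V(t,X(t))$ is well-defined along the trajectory.

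Next I would apply It\^o's formula to $\log V(t,X(t))$. Writing $\rd V(t,X(t))=\gG V\,\rd t + V_x^\top g\,\rd W$ and using $\rd\log V = V^{-1}\rd V - \tfrac12 V^{-2}\rd[V,V]$, I obtain
\begin{equation*}
\log V(t,X(t)) = \log V(0,x_0) + \int_0^t \frac{\gG V}{V}\,\rd s - \frac12\langle M\rangle_t + M(t),
\end{equation*}
where $M(t):=\int_0^t \frac{V_x^\top g}{V}\,\rd W(s)$ is a continuous local martingale with quadratic variation $\langle M\rangle_t=\int_0^t \frac{|V_x^\top g|^2}{V^2}\,\rd s$. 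The two hypotheses enter exactly here: (ii) gives $\int_0^t \frac{\gG V}{V}\,\rd s \leq c_2 t$, and (iii) gives the lower bound $\langle M\rangle_t \geq c_3 t$.

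The crux is the pathwise control of $M(t)$. I would fix $\epsilon\in(0,1)$ and apply the exponential martingale inequality (see \citet{mao2007stochastic}) with $\alpha=\epsilon$, $\beta=\tfrac{2}{\epsilon}\log n$ on each interval $[0,n]$, $n\in\sN$, which yields $P\big(\sup_{0\le t\le n}[M(t)-\tfrac{\epsilon}{2}\langle M\rangle_t] > \tfrac{2}{\epsilon}\log n\big)\le n^{-2}$. Since $\sum_n n^{-2}<\infty$, Borel--Cantelli supplies, for almost every $\omega$, an index $n_0(\omega)$ such that $M(t)\le \tfrac{\epsilon}{2}\langle M\rangle_t + \tfrac{2}{\epsilon}\log n$ for all $t\in[0,n]$ and $n\ge n_0$. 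Substituting this bound together with (ii) and (iii) into the It\^o representation gives, for $n-1\le t\le n$,
\begin{equation*}
\log V(t,X(t)) \le \log V(0,x_0) + \Big(c_2 - \tfrac{(1-\epsilon)c_3}{2}\Big)t + \tfrac{2}{\epsilon}\log n.
\end{equation*}

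Finally I would divide by $t$ and let $t\to\infty$ (so $n/t\to1$ and $t^{-1}\log n\to0$), and invoke (i) in the form $p\log|X(t)|\le \log V(t,X(t)) - \log c_1$ to conclude
\begin{equation*}
\limsup_{t\to\infty}\frac1t\log|X(t;x_0)| \le \frac1p\Big(c_2 - \tfrac{(1-\epsilon)c_3}{2}\Big)\quad a.s.
\end{equation*}
Letting $\epsilon\downarrow0$ yields the claimed bound $-\tfrac{c_3-2c_2}{2p}$, and the final assertion (that $c_3>2c_2$ forces a.s. exponential stability) is immediate. I expect the main obstacle to be this martingale step: selecting $(\alpha,\beta)$ so that the Borel--Cantelli summability holds while the slack $\tfrac{\epsilon}{2}\langle M\rangle_t$ stays strictly dominated by the $-\tfrac12\langle M\rangle_t$ term, and handling the merely \emph{local} martingale nature of $M$ (e.g. via a localizing sequence of stopping times) together with the a.s. positivity of $V(t,X(t))$ that makes $\log V$ legitimate in the first place.
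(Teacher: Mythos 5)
Your proposal is correct, and in fact it reconstructs essentially verbatim the classical proof of this result (Theorem 4.3.3 in \citet{mao2007stochastic}): the paper itself offers no proof, quoting the theorem directly from Mao, and your route --- It\^o's formula applied to $\log V(t,X(t))$, the exponential martingale inequality with $\alpha=\epsilon$, $\beta=\tfrac{2}{\epsilon}\log n$ combined with Borel--Cantelli, then hypotheses (ii)--(iii) followed by $\epsilon\downarrow 0$ and hypothesis (i) --- is exactly the one used there, with all constants matching. The only step you state too casually is the a.s.\ non-vanishing of $X(t;x_0)$ for $x_0\neq 0$, which does \emph{not} follow from forward pathwise uniqueness alone (hitting $0$ in finite time and being absorbed would not contradict it); this is a separate lemma (Lemma 4.3.2 in \citet{mao2007stochastic}, proved via an It\^o estimate on $|x|^{-1}$ with stopping times, or alternatively via the flow-homeomorphism property under Lipschitz coefficients), and with that reference supplied your argument is complete; the concern you raise about $M$ being only a local martingale is harmless, since the exponential martingale inequality holds for continuous local martingales vanishing at zero.
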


A function $V(t,x)$ that satisfies the stability conditions of Theorem~\ref{thm:exp_stability} is called a Lyapunov function. Theorem~\ref{thm:exp_stability} states that one can determine stability of given SDEs by specifying a suitable Lyapunov function without solving the solution explicitly.

\subsection{Langevin SDE.}\label{app:proof_langevin_sde} The Langevin SDE has been extensively studied in the field of stochastic optimization and MCMC algorithms because it admits a unique invariant measure, which is indeed a Gibbs measure. For more detailed discussions, we refer to \citet{raginsky:17, chau:21, lim21, lim23, lim23b}.

Recall that we consider the following SDE:
\begin{align}\label{eq:langevin_sde2}
     \rd\vz(t) = \gamma(\vz(t))\rd t + \sigma_\theta \rd W(t), 
\end{align}
where $\sigma_\theta \in \sR$, which admits the unique invariant measure $\pi$. 
     
For $p\geq 1$, we define a Lyapunov function $V_p$ by 
\begin{align}
    V_p(x) := (1+|x|^2)^{p/2},
\end{align}
for $x\in \sR^{d_z}$. Let $\gP_{V_p} (\sR^d)$ be the set of $\mu \in \gP(\sR^d)$ satisfying $\int_{\sR^d} V_p(x) \mu(\rd x)$. We then introduce a functional which shows geometric ergodicity of the Langevin SDE. For $p\geq 1$, $\mu, \nu \in \gP_{V_p}$, define 
$$
w_{1,p}(\mu, \nu):= \inf_{\Pi \in \gC(\mu,\nu)}\left(\int_{\sR^d}\int_{\sR^d} [1 \wedge |x -x'|]\left(1+V_p(x)+V_p(x')\right) \Pi(\rd x, \rd x')\right).
$$
Notice that one can easily show that $\gW_1(\mu, \nu) \leq w_{1,p}(\mu, \nu)$. The key to analyzing stability of the Neural SDE based on the Langevin SDE is a contraction property of $w_{1,2}$. 

\begin{proposition}\label{prop:contraction} (\citet{chau:21}) Let Assumptions~\ref{ass:lip_con}, \ref{ass:linear_growth} and \ref{ass:dissipave} hold. Let $\vz(t), \vz'(t)$ be the solutions of \Eqref{eq:langevin_sde2} with the initial values $z(0), z'(0) \in L^4$, respectively. Then, there exists positive constants $c_1, c_2>0$ such that 
$$
w_{1,2} (\gL(\vz(t), \gL(\vz'(t)) \leq c_1 e^{-c_2 t}w_{1,2} (\gL(\vz(0))), \gL(\vz'(0)))
$$
for all $t>0$.    
\end{proposition}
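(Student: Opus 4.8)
The plan is to recognize this statement as an instance of the now-standard quantitative contraction estimate for diffusions with non-degenerate, additive noise, obtained by combining reflection coupling with a Lyapunov function, and to verify that Assumptions~\ref{ass:lip_con}--\ref{ass:dissipave} supply exactly the two ingredients such a result needs. Since the diffusion coefficient $\sigma_\theta$ is a nonzero constant, both copies of the equation can be driven by couplable, non-degenerate noise; this is what makes reflection coupling available and explains why the functional $w_{1,2}$, with its concave weight $1\wedge|x-x'|$ and confining factor $1+V_2(x)+V_2(x')$, is the natural object to contract. Because the result is quoted from \citet{chau:21}, the cleanest write-up just checks that our drift $\gamma$ and constant $\sigma_\theta$ meet the standing hypotheses of their contraction theorem (Lipschitz drift, dissipativity, non-degenerate additive noise) and invokes it; below I sketch the mechanism underlying that theorem.

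First I would establish the geometric drift (Foster--Lyapunov) condition for $V_2(x)=1+|x|^2$. Applying the generator $\gG$ of \Eqref{eq:langevin_sde2} gives $\gG V_2(x)=2\ip{x}{\gamma(x)}+\sigma_\theta^2 d_z$, and Assumption~\ref{ass:dissipave} together with $|x|^2=V_2(x)-1$ yields $\gG V_2(x)\le -\lambda V_2(x)+K$ with $\lambda=2m$ and $K=2m+2b+\sigma_\theta^2 d_z$. This confines the marginals to sublevel sets of $V_2$ and controls their moments; it is here that the hypothesis $z(0),z'(0)\in L^4\subset L^2$ is used, guaranteeing $\gL(z(0)),\gL(z'(0))\in\gP_{V_2}$ and hence $w_{1,2}(\gL(z(0)),\gL(z'(0)))<\infty$.

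Next I would set up the reflection coupling $(X_t,Y_t)$ of two solutions, so that the separation $r_t=|X_t-Y_t|$ satisfies the scalar equation $\rd r_t=\ip{e_t}{\gamma(X_t)-\gamma(Y_t)}\,\rd t+2\sigma_\theta\,\rd B_t$, where $e_t=(X_t-Y_t)/r_t$ and $B$ is a one-dimensional Brownian motion. Assumption~\ref{ass:lip_con} gives $\ip{e_t}{\gamma(X_t)-\gamma(Y_t)}\le L_\gamma r_t$, so on sublevel sets of $V_2$ the drift of $r_t$ is a bounded perturbation of the doubled noise. I would then introduce a concave, nondecreasing profile $f$ (with $f(0)=0$, $f'$ bounded away from $0$, $f''\le 0$, calibrated to $L_\gamma$ and $\sigma_\theta$) and apply It\^o's formula to the product functional $g(X_t,Y_t):=f(r_t)\,(1+V_2(X_t)+V_2(Y_t))$. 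Combining the radial contraction produced by $f$ where $V_2$ is moderate with the confinement produced by the drift condition where $|x|,|y|$ are large yields, after the usual balancing of constants, a single differential inequality $\gG g\le -c_2\,g$ for some $c_2>0$ independent of $t$ and of the initial laws.

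Finally, taking expectations under the coupling gives $\tfrac{\rd}{\rd t}\E[g(X_t,Y_t)]\le -c_2\,\E[g(X_t,Y_t)]$, and Gr\"onwall yields $\E[g(X_t,Y_t)]\le e^{-c_2 t}\,\E[g(X_0,Y_0)]$. Choosing the coupling so that its time-$0$ marginals are nearly optimal for $w_{1,2}(\gL(z(0)),\gL(z'(0)))$, and using that $f(r)$ and $1\wedge r$ are equivalent up to a constant $c_1$, bounds the left-hand side below by (a multiple of) $w_{1,2}(\gL(\vz(t)),\gL(\vz'(t)))$ by definition of the infimum; this delivers the asserted estimate with the stated $c_1,c_2$. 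I expect the genuine difficulty to lie in the third step: constructing the profile $f$ and patching the reflection-coupling contraction---effective only where the noise dominates the Lipschitz drift---together with the Lyapunov confinement---effective only for large $|x|,|y|$---into one inequality whose constants $c_1,c_2$ are uniform in $t$. The dissipativity constants $m,b$, the Lipschitz constant $L_\gamma$, and $\sigma_\theta$ all enter the final rate precisely through this balancing.
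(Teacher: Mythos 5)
Your proposal takes exactly the paper's route: the paper gives no proof of this proposition at all, simply quoting it from \citet{chau:21} under the standing Assumptions~\ref{ass:lip_con}--\ref{ass:dissipave}, which is precisely your primary plan of checking the hypotheses (Lipschitz drift, dissipativity, non-degenerate additive noise) and invoking the cited contraction theorem. Your additional sketch of the underlying mechanism --- the Foster--Lyapunov bound $\gG V_2 \leq -2mV_2 + (2m+2b+\sigma_\theta^2 d_z)$, the reflection-coupling radial equation with doubled noise, the concave profile patched together with the confinement term, and Gr\"onwall --- is a faithful account of how such results are actually proved in the literature the paper relies on, so it is correct context beyond what the paper records rather than a divergence from it.
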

Now we derive our main result using the contraction property stated in Proposition~\ref{prop:contraction}. 

\begin{proof}[\textbf{Proof of Theorem~\ref{thm:stability_lsde}}] 
Fix $T>0$. Recall that $\vy$, $\widetilde \vy$ represent the outputs of the Neural LSDE with the inputs $\vx$, $\widetilde \vx$, respectively. Assume that $\sqrt{\E[|\vx -\widetilde \vx|^2]}\leq \rho$. For notational convenience, we drop the dependence on $\theta_F$ in \Eqref{eq:output_MLP}. Due to Assumption~\ref{ass:lip_con}, there exists $L_F>0$ such that for all $x,x' \in \sR^{d_z}$
$$
|F(x) - F(x')| \leq L_F|x-x'|.
$$
Then, we observe that
\begin{align*}
\gW_1(\gL(\vy), \gL(\widetilde \vy)) &=  \gW_1\left(\gL(F(\vz(T))), \gL(F(\widetilde \vz(T))\right) \\
&= \inf_{\Pi \in \gC}\left(\int_{\sR^d}\int_{\sR^d} |F(\vz)-F(\widetilde \vz)| \Pi(\rd \vz, \rd \widetilde \vz)\right) \\
&\leq L_F \gW_1\left(\gL(\vz(T)), \gL(\widetilde \vz(T))\right)
\end{align*}
where $\gC$ is the set of probability measures such that its respective marginals are $\gL(\vz(T))$ and $\gL(\widetilde \vz(T))$.

Furthermore, one can obtain the bound for $\gW_1\left(\gL(\vz(T)), \gL(\widetilde \vz(T))\right)$ using Proposition~\ref{prop:contraction}, Cauchy-Schwartz inequality, Jensen's inequality and the definition of $V_2(\cdot)$ 
\begin{align*}
\gW_1\left(\gL(\vz(T)), \gL(\widetilde \vz(T))\right) &\leq w_{1,2}\left(\gL(\vz(T)), \gL(\widetilde \vz(T))\right) \\
& \leq c_1 e^{-c_2 T}w_{1,2} \left(\gL(\vz(0))), \gL(\widetilde \vz(0)) \right) \\ 
& \leq L_h c_1 e^{-c_2 T}w_{1,2} (\gL(\vx), \gL(\widetilde \vx)) \\
& \leq L_h c_1 e^{-c_2 T} \E \left[(1 \wedge |\vx -\widetilde\vx|)\left(1+V_2(\vx)+V_2(\widetilde \vx)\right) \right] \\
& \leq L_h c_1 e^{-c_2 T} \sqrt{\E[|\vx-\widetilde\vx|^2]}\sqrt{\E[\left(1+V_2(\vx)+V_2(\widetilde \vx)\right)^2]} \\
& \leq \sqrt{3}L_h c_1 e^{-c_2 T} \sqrt{\E[|\vx-\widetilde\vx|^2]}\sqrt{\E[\left(1+V_2^2(\vx)+V_2^2(\widetilde \vx)\right)]} \\
& \leq \sqrt{3}L_h c_1 e^{-c_2 T} \rho \sqrt{1 + \E[V_2^2(\vx)] + \E[V_2^2(\widetilde \vx)]} \\
& \leq \sqrt{3}L_h c_1 e^{-c_2 T}  \sqrt{\left(5 +  2\E[|\vx|^4] + 2\E[|\widetilde \vx|^4]\right)}\rho
\end{align*}
where $L_h$ is the Lipschitz constant of the initial condition $h$ defined in \Eqref{eq:langevin_sde}.

Combining the two inequalities above, it follows that 
$$
\gW_1(\gL(\vy), \gL(\widetilde \vy)) \leq \sqrt{3}L_F L_h c_1 e^{-c_2 T}  \sqrt{\left(5 +  2\E[|\vx|^4] + 2\E[|\widetilde \vx|^4]\right)}\rho.
$$

Now focus on deriving the upper bound for $\gW_2(\gL(\vy), \gL(\widetilde \vy))$. Observe that 
\begin{align}
\gW_2^2(\gL(\vy), \gL(\widetilde \vy)) &=  \gW_2^2\left(\gL(F(\vz(T))), \gL(F(\widetilde \vz(T))\right) \nonumber \\
&= \inf_{\Pi \in \gC}\left(\int_{\sR^d}\int_{\sR^d} |F(\vz)-F(\widetilde \vz)|^2 \Pi(\rd \vz, \rd \widetilde \vz)\right) \nonumber \\
&\leq L_F^2 \gW_2^2\left(\gL(\vz(T)), \gL(\widetilde \vz(T))\right). \label{ineq:W2-1}
\end{align}

From the definition of the Wassersten-$2$ distance, we write for $\mu, \nu \in \gP_{V_2}(\sR^d)$
\begin{align*}
    \gW_2(\mu, \nu)^2 &\leq \int_{\sR^d}\int_{\sR^d} |x -x'|^2 \Pi(\rd x, \rd x') \\
    &\leq \int_{\sR^d}\int_{\sR^d} |x -x'|^2  \1_{|x-x'|\geq 1}\Pi(\rd x, \rd x') + \int_{\sR^d}\int_{\sR^d} |x -x'|^2  \1_{|x-x'|< 1}\Pi(\rd x, \rd x')  \\
    &\leq 2\int_{\sR^d}\int_{\sR^d} (|x|^2 +|x'|^2)  \1_{|x-x'|\geq 1}\Pi(\rd x, \rd x') + 2\int_{\sR^d}\int_{\sR^d} |x -x'|(|x|+|x'|)  \1_{|x-x'|< 1}\Pi(\rd x, \rd x')  \\
    &\leq 2\int_{\sR^d}\int_{\sR^d} [1 \wedge |x-x'|](1 + V_2(x) + V_2(x')) \Pi(\rd x, \rd x').
\end{align*}
Taking infimum over $\Pi \in \gC(\mu, \nu)$, we have 
$$
\gW_2(\mu, \nu) \leq \sqrt{2 w_{1,2}(\mu, \nu)}.
$$
Therefore, one further calculates that from Proposition~\ref{prop:contraction}, Cauchy-Schwartz inequality and Jensen's inequality 
\begin{align}
\gW_2^2\left(\gL(\vz(T)), \gL(\widetilde \vz(T))\right) &\leq 2 w_{1,2}\left(\gL(\vz(T)), \gL(\widetilde \vz(T))\right) \nonumber \\
& \leq 2c_1 e^{-c_2 T}w_{1,2} \left(\gL(\vz(0))), \gL(\widetilde \vz(0)) \right) \nonumber \\ 
& \leq 2 L_h c_1 e^{-c_2 T}w_{1,2} (\gL(\vx), \gL(\widetilde \vx)) \nonumber \\
& \leq 2L_h c_1 e^{-c_2 T} \E \left[(1 \wedge |\vx -\widetilde\vx|)\left(1+V_2(\vx)+V_2(\widetilde \vx)\right) \right] \nonumber \\
& \leq 2\sqrt{3}L_h c_1 e^{-c_2 T} \sqrt{\E[|\vx-\widetilde\vx|^2]}\sqrt{\E[\left(1+V_2^2(\vx)+V_2^2(\widetilde \vx)\right)]} \nonumber \\
& \leq 2 \sqrt{3}L_h c_1 e^{-c_2 T} \sqrt{\left(5 +  2\E[|\vx|^4] + 2\E[|\widetilde \vx|^4]\right)}\rho\label{ineq:W2-2}
\end{align}

From \Eqref{ineq:W2-1} and \Eqref{ineq:W2-2}, we derive 
$$
\gW_2^2(\gL(\vy), \gL(\widetilde \vy)) \leq 2 \sqrt{3} L_F^2 L_h c_1 e^{-c_2 T} \sqrt{\left(5 +  2\E[|\vx|^4] + 2\E[|\widetilde \vx|^4]\right)}\rho.
$$

\end{proof}

\subsection{Linear Noise SDE and Geometric SDE}\label{app:proof_lnsde}

\begin{lemma}\label{lem:l2_bdd} Consider the following $d$-dimensional SDE:
$$
\rd X(t) = f(t, X(t)) \rd t + \sigma_\theta X(t)\rd W(t)
$$
with the initial condition $X(0) \in L^2$. Assume that $f$ satisfies the conditions in Assumptions~\ref{ass:lip_con} and ~\ref{ass:linear_growth}. Furthermore, the drift function $f$ satisfies the condition in Assumption~\ref{ass:dissipave}. That is, there exists positive constants $m>0$ and $b\geq 0$ such that 
\begin{align}
\ip{f(t,x)}{x} \leq -m|x|^2 + b. \label{eq:dissipative2}
\end{align}
for all $x \in \sR^d$. Then, 
$$
\sup_{t\geq 0} \E[|X(t)|^2] \leq \left(\E[|X(0)|^2] + \frac{b}{m} \right) \exp\left\{\frac{d\sigma_\theta^2}{2m}\right\}.
$$
and 
$$
\sup_{t\geq 0} \E[|X(t)|] \leq \sqrt{\left(\E[|X(0)|^2] + \frac{b}{m} \right) \exp\left\{\frac{d\sigma_\theta^2}{2m}\right\}}.
$$
\end{lemma}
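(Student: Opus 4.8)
The plan is to treat Lemma~\ref{lem:l2_bdd} as a uniform-in-time second-moment estimate, obtained through the Lyapunov/infinitesimal-generator machinery of Appendix~\ref{app:stability} applied to the test function $V(x)=|x|^2$. First I would apply It\^o's formula to $V(X(t))$. Since $V_x=2x$ and $V_{xx}=2I$, the generator of the linear-noise SDE reads
\[
\gG V(t,x) = 2\ip{f(t,x)}{x} + \Tr\!\left(g(t,x)g(t,x)^\top\right),
\]
where $g(t,x)=\sigma_\theta x$ is the multiplicative diffusion coefficient. The first term is precisely the quantity controlled by the dissipativity condition in Assumption~\ref{ass:dissipave}, which gives $2\ip{f(t,x)}{x}\le -2m|x|^2+2b$, while the trace of the noise covariance contributes a term proportional to $\sigma_\theta^2|x|^2$. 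Collecting these, I expect
\[
\gG V(t,x) \le -\left(2m - d\sigma_\theta^2\right)|x|^2 + 2b,
\]
so that the drift of $|X(t)|^2$ is dissipative as soon as the diffusion is dominated by the drift.

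Second, I would convert this into a deterministic differential inequality for $u(t):=\E[|X(t)|^2]$. It\^o's formula expresses $|X(t)|^2$ as $|X(0)|^2$ plus the time integral of $\gG V$ plus the stochastic integral $\int_0^t 2\ip{X(s)}{g(s,X(s))\,\rd W(s)}$, which is only a local martingale. To take expectations and discard it, I would localize with the stopping times $\tau_n=\inf\{t\ge 0 : |X(t)|\ge n\}$, take expectations on $[0,t\wedge\tau_n]$, and let $n\to\infty$, using the finite second-moment control guaranteed by Theorem~\ref{thm:ex_unq_sde} together with monotone convergence and Fatou's lemma. This yields the integral inequality $u(t)\le u(0)+\int_0^t\big[-(2m-d\sigma_\theta^2)u(s)+2b\big]\,\rd s$.

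Third, a Gr\"onwall/integrating-factor argument on this inequality produces the uniform-in-time bound. Multiplying by the integrating factor and integrating the residual noise contribution against the dissipative decay rate $2m$ is what yields the multiplicative correction $\exp\{d\sigma_\theta^2/(2m)\}$ over the noise-free bound $\E[|X(0)|^2]+b/m$. The companion $L^1$ estimate then follows immediately from Jensen's inequality, $\E[|X(t)|]\le\sqrt{\E[|X(t)|^2]}$, applied to the $L^2$ bound.

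The main obstacle I anticipate is controlling the It\^o-correction (trace) term generated by the \emph{multiplicative} nature of the noise: unlike the additive-noise Langevin case, the diffusion here contributes a term of order $|x|^2$ to the drift of $V$, so the estimate only closes when the dissipativity constant $m$ dominates the noise strength, and obtaining the sharp constant $\exp\{d\sigma_\theta^2/(2m)\}$ rather than a cruder one requires a careful integrating-factor computation. A secondary technical point is the localization argument needed to guarantee that the stochastic integral is a true martingale, which relies on the a priori finite-time moment bounds from Theorem~\ref{thm:ex_unq_sde}.
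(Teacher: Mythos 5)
Your proposal follows the same skeleton as the paper's proof (It\^o's formula on the squared norm, dissipativity to control the drift pairing, expectations, then a Gronwall-type argument), and your localization via stopping times is in fact more careful than the paper, which takes expectations of the stochastic integral directly. The genuine gap is in your third step. Having folded the It\^o correction into the linear coefficient, your inequality reads $u(t)\le u(0)+\int_0^t\bigl[-(2m-d\sigma_\theta^2)u(s)+2b\bigr]\rd s$ with $u(s)=\E[|X(s)|^2]$, and the integrating-factor/comparison bound that actually follows from it is
\[
u(t)\le u(0)e^{-(2m-d\sigma_\theta^2)t}+\frac{2b}{2m-d\sigma_\theta^2}\bigl(1-e^{-(2m-d\sigma_\theta^2)t}\bigr),
\]
which (i) is a uniform-in-time bound only when $2m>d\sigma_\theta^2$, a hypothesis that appears nowhere in the lemma, and (ii) produces the constant $u(0)+2b/(2m-d\sigma_\theta^2)$, not $\bigl(u(0)+b/m\bigr)\exp\{d\sigma_\theta^2/(2m)\}$. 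No manipulation of your step-two inequality yields the lemma's constant, so your assertion that the integrating-factor computation "yields the multiplicative correction $\exp\{d\sigma_\theta^2/(2m)\}$" is unsupported; as written, you prove a different, conditional statement.

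The paper diverges from you at exactly this point: it applies It\^o's formula to $e^{2mt}|X(t)|^2$, so the decay rate stays at $2m$ (coming from dissipativity alone) and the noise contribution survives as the separate term $d\sigma_\theta^2\int_0^t e^{2m(s-t)}\E[|X(s)|^2]\rd s$; it then invokes Gronwall with the $t$-dependent kernel $d\sigma_\theta^2 e^{2m(s-t)}$ and exponentiates $\int_0^t d\sigma_\theta^2 e^{2m(s-t)}\rd s\le d\sigma_\theta^2/(2m)$, which is the sole source of the claimed uniform constant. You should be aware, however, that the obstacle you anticipated is real and the paper's step does not dispose of it: Gronwall with a kernel of this form is not a valid application of the lemma (multiplying through by $e^{2mt}$ and applying standard Gronwall gives the factor $e^{d\sigma_\theta^2 t}$, which is unbounded in $t$), and the uniform-in-time conclusion genuinely fails without your smallness condition --- for $d=1$ and $f(t,x)=-mx$ one has exactly $\E[|X(t)|^2]=\E[|X(0)|^2]e^{(\sigma_\theta^2-2m)t}$, which explodes whenever $\sigma_\theta^2>2m$; note also that Assumption~\ref{ass:lip_con} together with Assumption~\ref{ass:dissipave} forces $m\le L_\gamma$, so the regime $|\sigma_\theta|^2>2L_\gamma$ in which Theorem~\ref{thm:stability_lnsde} invokes this lemma lies precisely in the problematic case. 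In short, your route is sound but proves a weaker, conditional bound with a different constant, while the lemma's stated constant can only be "recovered" by the kernel-exponentiation step, which is not a correct use of Gronwall's inequality.
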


\begin{proof}[\textbf{Proof of Lemma~\ref{lem:l2_bdd}}]
Let $Y(t):= |X(t)|^2$ for all $t\geq 0$. Then, It\^o's formula gives for 
\begin{align}
    \rd \left(e^{2mt} Y(t)\right) &= 2m e^{2mt} Y(t) \rd t + 2 e^{2mt} \ip{f(t,X(t))}{X(t)} \rd t \nonumber \\ 
    &+ e^{2mt} d \sigma_\theta^2\Tr (X(t)X(t)^\top) \rd t + 2 \ip{\sigma_\theta X(t)}{X(t)} \rd W_t,
\end{align}
which yields 
\begin{align}
    Y(t) &= e^{-2mt} Y(0) + 2m \int_0^t  e^{2m(s-t)} Y(s) \rd s + 2 \int_0^t e^{2m(s-t)}\ip{f(s,X(s))}{X(s)} \rd s \nonumber \\
    &+ d\sigma_\theta^2\int_0^t e^{2m(s-t)} Y(s) \rd s + 2\sigma_\theta e^{-2mt}\int_0^t Y(s) \rd W_s.   \label{eq:Y(t)} 
\end{align}

Due to \Eqref{eq:dissipative2}, we have
\begin{align}
\int_0^t e^{2m(s-t)}\ip{f(s,X(s))}{X(s)} \rd s  &\leq -m \int_0^t e^{2m(s-t)} |X(s)|^2 \rd s + b\int_0^t e^{2m(s-t)} \rd s \nonumber \\
&= -m \int_0^t e^{2m(s-t)}Y(s) \rd s  + \frac{b}{2m}(1-e^{-2mt}) \label{ineq:diss}.
\end{align}

Substituting \Eqref{ineq:diss} into \Eqref{eq:Y(t)}, we have 
\begin{align}
    Y(t) &\leq e^{-2mt} Y(0) + \frac{b}{m}(1-e^{-2mt}) \nonumber \\
    &+ d \sigma_\theta^2\int_0^t e^{2m(s-t)} Y(s)  \rd s + 2\sigma_\theta e^{-2mt}\int_0^t Y(s) \rd W_s, \nonumber
\end{align}
and then taking expectations yields
\begin{align*}
    \E[Y(t)] \leq e^{-2mt} \E[Y(0)] + \frac{b}{m}(1-e^{-2mt}) + d\sigma_\theta^2 \int_0^t e^{2m(s-t)}\E[Y(s)]\rd s.
\end{align*}

Using Gronwall's inequality leads to 
\begin{align*}
    \E[Y(t)] &\leq \left(\E[Y(0)] + \frac{b}{m}(1-e^{-2mt}) \right) \exp\left\{d \sigma_\theta^2 \int_0^t e^{2m(s-t)} \rd s \right\} \nonumber \\
    &\leq \left(\E[Y(0)] + \frac{b}{m}(1-e^{-2mt}) \right) \exp\left\{\frac{d\sigma_\theta^2}{2m}(1-e^{-2mt})\right\} \\
    &\leq \left(\E[Y(0)] + \frac{b}{m} \right) \exp\left\{\frac{d\sigma_\theta^2}{2m}\right\}. 
\end{align*}
Lastly, we have 
\begin{align*}
    \E[|X(t)|]\leq \sqrt{\E[|X(t)|^2]}\leq \sqrt{\left(\E[Y(0)] + \frac{b}{m} \right) \exp\left\{\frac{d\sigma_\theta^2}{2m}\right\}}.
\end{align*}
\end{proof}

\begin{proof}[\textbf{Proof of Theorem~\ref{thm:stability_lnsde}}]
(i) \textbf{Linear Noise SDE.} Using the same argument in the Proof of Theorem~\ref{thm:stability_lsde}, we have 
\begin{align}
\gW_1(\gL(\vy), \gL(\widetilde \vy)) &=  \gW_1\left(\gL(F(\vz(T))), \gL(F(\widetilde \vz(T))\right) \nonumber \\
&= \inf_{\Pi \in \gC}\left(\int_{\sR^d}\int_{\sR^d} |F(\vz)-F(\widetilde \vz)| \Pi(\rd \vz, \rd \widetilde \vz)\right) \\
&\leq L_F \gW_1\left(\gL(\vz(T)), \gL(\widetilde \vz(T))\right) \label{ineq:w1_intermediate}
\end{align}
where $\gC$ is the set of probability measures such that its respective marginals are $\gL(\vz(T))$ and $\gL(\widetilde \vz(T))$.

Now, we show that the solution of \Eqref{eq:ln_sde} is almost surely moment exponential stable defined in Definition~\ref{def:exp_stable}. Let $\vz(t), \widetilde \vz(t)$ be the solutions of \Eqref{eq:ln_sde} with the initial conditions $\vz(0), \widetilde \vz(0)$, respectively. Define $\varepsilon(t) = \vz(t)-\widetilde \vz(t)$ for all $t\geq 0$. Then, $\varepsilon(t)$ is the solution of the following SDE: 
\begin{align}
    \rd \varepsilon(t) &= \left(\gamma(t, \vz(t);\theta_\gamma) -\gamma(t,\widetilde \vz(t);\theta_\gamma)\right) \rd t + \sigma_\theta \left(\vz(t) - \widetilde \vz(t)\right) \rd W(t) \nonumber \\
    &= \gamma_\Delta (t, \varepsilon(t)) \rd t + \sigma_\Delta (t, \varepsilon(t)) \rd W_t, \label{eq:sde_epsilon}
\end{align}
where 
$$
\gamma_\Delta (t, \varepsilon(t)):= \gamma(t, \widetilde \vz(t) + \varepsilon(t);\theta_\gamma) -\gamma(t, \widetilde \vz(t);\theta_\gamma)  = \gamma(t, \vz(t);\theta_\gamma) -\gamma(t, \widetilde \vz(t);\theta_\gamma),
$$
and 
$$
\sigma_\Delta (t, \varepsilon(t)):= \sigma_\theta \varepsilon(t).
$$
Note that $\gamma_\Delta(t, 0) = 0$ and $\sigma_\Delta(t, 0) = 0$ for all $t$. Hence, \Eqref{eq:sde_epsilon} has the trivial solution $\varepsilon(t)=0$ when $\varepsilon(0)=0$ is given. Then, due to Assumptions~\ref{ass:lip_con} and \ref{ass:linear_growth}, one can show that for all $x,x' \in \sR^d$ 
\begin{align*}
|\gamma_\Delta(t, x) - \gamma_\Delta(t,x')|  &\leq L_\gamma |x-x'|, \\
|\sigma_\Delta(t, x) - \sigma_\Delta(t,x')|&\leq \sigma_\theta |x-x'|  \\
|\gamma_\Delta(t,x) | &\leq L_\gamma |x|, \\ 
|\sigma_\Delta(t,x) | & = \sigma_\theta |x|,
\end{align*}
yielding that the \Eqref{eq:sde_epsilon} has a unique strong solution from Theorem~\ref{thm:ex_unq_sde}.  

We introduce a Lyapunov function $V(t,x) = |x|^2$. Then, we have 
\begin{align*}
    \gG V(t,x) &= V_x(t,x)^\top \gamma_\Delta(t,x) + \frac{1}{2} \Tr[\sigma_\Delta(t,x)\sigma_\Delta(t,x)^\top V_{xx}(t,x)] \\
    &\leq (2 L_\gamma + |\sigma_\theta|^2) |x|^2=(2 L_\gamma + |\sigma_\theta|^2) V(t,x),  \\ 
    |V_x(t,x) \sigma_\Delta(t,x)|^2 &= 4|\sigma_\theta|^2 |x|^4 = 4|\sigma_\theta|^2 V(t,x)^2. 
\end{align*}
Using Theorem~\ref{thm:exp_stability}, we obtain 
$$
\limsup_{t\rightarrow \infty} \frac{1}{t} \log|\varepsilon(t;\varepsilon(0))| \leq -\frac{|\sigma_\theta|^2 - 2L_\gamma}{2} \quad a.s.
$$
In other words, for $|\sigma_\theta|^2 >2L_\gamma$, \Eqref{eq:sde_epsilon} is almost surely exponential stable. Observe  that 
$$
\sup_{t\geq 0} \E[|\varepsilon(t;\varepsilon(0))] \leq \sup_{t\geq 0}\E[(|\vz(t)|+|\widetilde \vz(t)|)]<\infty,
$$
due to Lemma~\ref{lem:l2_bdd}. Thus, by Egorov's theorem, there exists $T^*>0$ and $E\in \gF$ such that 
$$
 \frac{1}{T} \log|\varepsilon(T;\varepsilon(0))| \leq -\frac{|\sigma_\theta|^2 - 2L_\gamma}{2} , \quad \omega \in E,
$$
with $P(E)\geq 1-\delta$, and 
$$
 \frac{1}{T} \log|\varepsilon(T;\varepsilon(0))| > -\frac{|\sigma_\theta|^2 - 2L_\gamma}{2}, \quad \omega \in E^c,
$$
with $P(E^c)\leq \delta$ for $T\geq T^*$. Therefore, we can write 
\begin{align}
\E[|\varepsilon(T;\varepsilon(0))|] &= \E[|\varepsilon(T;\varepsilon(0))\1_E] + \E[|\varepsilon(T;\varepsilon(0))|\1_{E^c}] \nonumber \\
&\leq \exp\left\{-\frac{|\sigma_\theta|^2 -2L_\gamma}{2}\right\} +  \sqrt{\E[|\varepsilon(T;\varepsilon(0))]\E[\1_{E^c}]} \nonumber \\
&\leq \exp\left\{-\frac{|\sigma_\theta|^2 -2L_\gamma}{2}\right\} +  \sqrt{\sup_{t\geq0}\E[|\varepsilon(t;\varepsilon(0))]}\sqrt{\delta}
\end{align}

Hence, for any $\epsilon>0$, we can choose sufficiently large $T$ satisfying 
\begin{align}
\E[|\varepsilon(T;\varepsilon(0))|] \leq \exp\{-(|\sigma_\theta|^2 - 2L_\gamma) T/2 \} + \epsilon, 
\end{align}
which leads to 
\begin{align}
\E[|\varepsilon(T;\varepsilon(0))|] \leq \exp\{-(|\sigma_\theta|^2 - 2L_\gamma) T/2 \} (1+\rho), \label{ineq:W1_intermediate2}
\end{align}

Combining \Eqref{ineq:w1_intermediate} and \Eqref{ineq:W1_intermediate2}, we have the desired result
$$
\gW_1(\gL(\vy), \gL(\widetilde \vy)) \leq L_F\exp\{-(|\sigma_\theta|^2 - 2L_\gamma) T/2\}(1+\rho),
$$
for sufficiently large $T$.  

(ii) \textbf{Geometric SDE.} We take the same argument as in the case of Linear Noise SDEs and adopt the same notations. The only difference is that the drift term $\gamma_\Delta(t,\varepsilon(t))$ for $\varepsilon(t)$ is given by 
$$
\gamma_\Delta(t,\varepsilon(t)):= \gamma(t,\widetilde \vz(t)+\varepsilon(t);\theta_\gamma) (\widetilde \vz(t) + \varepsilon(t)) - \gamma(t,\widetilde \vz(t);\theta_\gamma) \widetilde \vz(t) 
$$
satisfying that
$$
|\gamma_\Delta (t,x) - \gamma_\Delta(t,x') |\leq K_\gamma |x-x'|.
$$
due to the use of the bounded activation functions. The remaining part can be proved in the same way of the case of Linear Noise SDEs. 
\end{proof}

\clearpage
\newpage
\section{Description of datasets}\label{supplementary:data}

\paragraph{PhysioNet Mortality.} 
The 2012 PhysioNet Mortality dataset~\citep{silva2012predicting} is a compilation of multivariate time series data, encompassing 37 distinct variables derived from Intensive Care Unit (ICU) records. Each dataset entry comprises irregular and infrequent measurements taken during the initial 48-hour period post ICU admission. Adhering to the methodologies proposed by \citet{rubanova2019latent}, we've adjusted the observation timestamps, rounding them to the closest minute, resulting in a potential 2880 distinct measurement instances for each time series. 

For our interpolation experiments, we utilized all 8000 instances with the experiment pipeline, which is suggested by \citet{shukla2021multi}. 
We base our predictions on a selected subset of data points and aim to determine values for the remaining time intervals. This interpolation is executed with an observation range that spans from $50\%$ to $90\%$ of the total data points. During testing, the models are conditioned on the observed values and tasked with deducing the values for the remaining time intervals within the test set. 
The efficacy of the models is evaluated using the Mean Squared Error (MSE) with five random initializations of model parameters.

\paragraph{MuJoCo.} 
The MuJoCo dataset, which stands for \underline{Mu}lti-\underline{Jo}int dynamics with \underline{Co}ntact \citep{todorov2012mujoco}, employed in this study utilizes the Hopper model from the DeepMind control suite, as detailed in \citet{tassa2018deepmind}. The dataset comprises 10,000 simulations of the Hopper model, each forming a 14-dimensional time series with 100 regularly sampled data points. 

In our experimental setup, the standard training and testing horizon involves analyzing the first 50 observations in a sequence to forecast the subsequent 10 observations. To simulate more complex scenarios, we introduce variability by randomly omitting 30\%, 50\%, and 70\% of the values in each sequence, as \citet{jhin2023attentive}. This approach creates a range of challenging environments, particularly focusing on irregular time-series forecasting. The MSE metric is utilized for evaluation.

\paragraph{PhysioNet Sepsis.} 
The 2019 PhysioNet / Computing in Cardiology (CinC) challenge on Sepsis prediction serves as the foundation for time-series classification experiments~\citep{reyna2019early}. Sepsis, a life-threatening condition triggered by bacteria or toxins in the blood, is responsible for a significant number of deaths in the United States of America. 
The dataset used in these experiments contains 40,335 cases of patients admitted to intensive care units, with 34 time-dependent variables such as heart rate, oxygen saturation, and body temperature. The primary objective is to classify whether each patient has sepsis or not according to the sepsis-3 definition.

The PhysioNet dataset is an irregular time series dataset, as only 10\% of the values are sampled with their respective timestamps for each patient. To address this irregularity, two types of time-series classification are performed: (i) classification using observation intensity (OI), and (ii) classification without using observation intensity (no OI). Observation intensity is a measure of the degree of illness, and when incorporated, an index number is appended to each value in the time series. Due to the imbalanced nature of the data, the Area Under the Receiver Operating Characteristic curve (AUROC) score is employed to evaluate the performance.

\paragraph{Speech Commands.} 
The Speech Commands dataset is an extensive collection of one-second audio recordings that encompass spoken words and background noise~\citep{warden2018speech}. This dataset is comprised of 34,975 time-series samples, representing 35 distinct spoken words. To create a balanced classification problem, ten labels (including `yes', `no', `up', `down', `left', `right', `on', `off', `stop', and `go') were selected from the dataset. The dataset is preprocessed by calculating Mel-frequency cepstral coefficients, which are used as features to better represent the characteristics of the audio recordings and improve the performance of machine learning algorithms applied to the dataset. Each sample in the dataset has a time-series length of 161 and an input size of 20 dimensions.

\begin{table}[htb]
\scriptsize\centering\captionsetup{justification=centering, skip=5pt}\renewcommand{\arraystretch}{1.0}
\caption{Data description for `Robustness to missing data' experiments}\label{tab:data}
\begin{tabular}{@{}lrrrr@{}}
\toprule
\multicolumn{1}{c}{\textbf{Dataset}} & \multicolumn{1}{c}{\textbf{Total number of samples}} & \multicolumn{1}{c}{\textbf{Number of classes}} & \multicolumn{1}{c}{\textbf{Dimension of time series}} & \multicolumn{1}{c}{\textbf{Length of time series}} \\ \midrule
\textbf{ArrowHead}                 & 211                           & 3                              & 1                            & 251                     \\
\textbf{Car}                       & 120                           & 4                              & 1                            & 577                     \\
\textbf{Coffee}                    & 56                            & 2                              & 1                            & 286                     \\
\textbf{GunPoint}                  & 200                           & 2                              & 1                            & 150                     \\
\textbf{Herring}                   & 128                           & 2                              & 1                            & 512                     \\
\textbf{Lightning2}                & 121                           & 2                              & 1                            & 637                     \\
\textbf{Lightning7}                & 143                           & 7                              & 1                            & 319                     \\
\textbf{Meat}                      & 120                           & 3                              & 1                            & 448                     \\
\textbf{OliveOil}                  & 60                            & 4                              & 1                            & 570                     \\
\textbf{Rock}                      & 70                            & 4                              & 1                            & 2844                    \\
\textbf{SmoothSubspace}            & 300                           & 3                              & 1                            & 15                      \\
\textbf{ToeSegmentation1}          & 268                           & 2                              & 1                            & 277                     \\
\textbf{ToeSegmentation2}          & 166                           & 2                              & 1                            & 343                     \\
\textbf{Trace}                     & 200                           & 4                              & 1                            & 275                     \\
\textbf{Wine}                      & 111                           & 2                              & 1                            & 234                     \\ \midrule
\textbf{ArticularyWordRecognition} & 575                           & 25                             & 9                            & 144                     \\
\textbf{BasicMotions}              & 80                            & 4                              & 6                            & 100                     \\
\textbf{CharacterTrajectories}     & 2858                          & 20                             & 3                            & 60-182                  \\
\textbf{Cricket}                   & 180                           & 12                             & 6                            & 1197                    \\
\textbf{Epilepsy}                  & 275                           & 4                              & 3                            & 206                     \\
\textbf{ERing}                     & 300                           & 6                              & 4                            & 65                      \\
\textbf{EthanolConcentration}      & 524                           & 4                              & 3                            & 1751                    \\
\textbf{EyesOpenShut}              & 98                            & 2                              & 14                           & 128                     \\
\textbf{FingerMovements}           & 416                           & 2                              & 28                           & 50                      \\
\textbf{Handwriting}               & 1000                          & 26                             & 3                            & 152                     \\
\textbf{JapaneseVowels}            & 640                           & 9                              & 12                           & 7-29                    \\
\textbf{Libras}                    & 360                           & 15                             & 2                            & 45                      \\
\textbf{NATOPS}                    & 360                           & 6                              & 24                           & 51                      \\
\textbf{RacketSports}              & 303                           & 4                              & 6                            & 30                      \\
\textbf{SpokenArabicDigits}        & 8798                          & 10                             & 13                           & 4-93                    \\ \bottomrule
\end{tabular}
\end{table}

\paragraph{Robustness to missing data.} We examined the performance of the proposed Neural SDEs on 30 datasets from the University of East Anglia (UEA) and the University of California Riverside (UCR) Time Series Classification Repository \footnote{\url{http://www.timeseriesclassification.com/}}~\citep{bagnall2018uea,dau2019ucr} using the python library \texttt{sktime}~\citep{loning2019sktime}. The archive was comprised of univariate and multivariate time series datasets from various real-world applications. 

According to Table~\ref{tab:data}, the datasets have distinct sample sizes, dimensions, lengths, and the number of classes. Some datasets contain variable-length samples. To tackle the issue of varying time series lengths, we applied uniform scaling~\citep{keogh2003efficiently,yankov2007detecting,gao2018efficient,tan2019time} to match all series to the length of the longest one. Subsequently, we generated random missing observations for each time series and then combined the modified series.
The data was divided into train, validation, and test sets in a 0.70/0.15/0.15 ratio. This was done after aggregating the original split as the training and testing splits provided by \citet{bagnall2018uea,dau2019ucr} had inconsistent ratios across different datasets.
After that, random missing observations for each variable were generated, and the modified variables were combined. In each set of cross validation, we used different random seed to investigate the robustness of the methods.

\section{Details of experimental settings}\label{supplementary:model}
All experiments were performed using a server on Ubuntu 22.04 LTS, equipped with an Intel(R) Xeon(R) Gold 6242 CPU and six NVIDIA A100 40GB GPUs. The source code for our experiments can be accessed at \url{https://github.com/yongkyung-oh/Stable-Neural-SDEs}.

\subsection{Benchmark methods}
\begin{itemize}
    \item RNN-based methods: Conventional recurrent neural networks including RNN~\citep{rumelhart1986learning,medsker1999recurrent}, LSTM~\citep{hochreiter1997long} and GRU~\citep{chung2014empirical}. When dealing with irregularly-sampled or missing data, we apply mean imputation for the missing values. 
    Furthermore, \citet{choi2016doctor} proposed a model that incorporates lapses in time between observations, alongside the observations themselves. In~\citet{che2016recurrent}, GRU-D uses a sequence of observed values, missing data indicators, and elapsed times between observations as inputs and learns exponential decay between observations.
    \item Attention-based methods: \citet{shukla2021multi} proposed a technique known as Multi-Time Attention Networks (MTAN), which involves learning an embedding of continuous time values and utilizing an attention mechanism. Similarly, \citet{lee2022multi} introduced a method named the Multi-Integration Attention Module (MIAM) for extracting intricate information from irregular time series data with additional attention mechanism.
    \item Neural ODEs: Neural ODEs~\citep{chen2018neural}, which are widely utilized for learning continuous latent representations, come in various forms including  GRU-ODE~\citep{de2019gru}, ODE-RNN~\citep{rubanova2019latent}, ODE-LSTM~\citep{lechner2020learning}, Latent-ODE~\citep{rubanova2019latent}, Augmented-ODE~\citep{dupont2019augmented}, and Attentive co-evolving neural ordinary differential equations (ACE-NODE)~\citep{jhin2021ace}. 
    \item Neural CDEs: \citet{kidger2020neural} introduced the Neural CDE to consider a continuous change of the entire input over time. Neural Rough Differential Equation (Neural RDE)~\citep{morrill2021neural}, make use of log-signature transformations to directly incorporate time series into the path space. We used depth 2 with mean imputation for the Neural RDE.
    Attentive Neural Controlled Differential Equation (ANCDE)~\citep{jhin2023attentive}, employ dual Neural CDEs to compute attention scores. EXtrapolation and InTerpolation-based model (EXIT)~\citep{jhin2022exit} and LEArnable Path-based model (LEAP)~\citep{jhin2023learnable} utilize an explicit encoder-decoder structure to create the latent control path.
    \item Neural SDEs: Neural SDEs have been proposed to model genuine random phenomena. SDEs, which represent a logical progression from ODEs, can be applied to the analysis of continuously evolving systems and accommodate uncertainty using drift and diffusion terms~\citep{tzen2019neural,jia2019neural,liu2019neural}. On the other hand, \citet{li2020scalable} introduced a method called Latent SDE that uses an adjoint method, which can be considered as an instantaneous analog of the chain rule for solving Neural SDEs. We incorporate Kullback–Leibler (KL) divergence loss for training Latent SDE.
\end{itemize}

\subsection{Network Architecture}
There are multiple neural networks in the proposed method, as shown in Equations~(\ref{eq:langevin_sde_controlled}), ~(\ref{eq:ln_sde_controlled}), and ~(\ref{eq:sde_geometric_controlled}): $\overline \gamma_1(\vz(t);\theta_\gamma,\theta_\zeta)$, $\overline \gamma_2(t,\vz(t);\theta_\gamma,\theta_\zeta)$, and $\sigma(t;\theta_\sigma)$. The drift terms with control include the mapping function $\zeta: \mathbb{R}_+ \times \mathbb{R}^{d_z} \times \mathbb{R}^{d_x} \rightarrow \mathbb{R}^{d_z}$, which uses the concatenated value of the latent value $\vz(t)$ and the controlled path $\bm{X}(t)$ as input. Both the drift functions $\overline \gamma_1$ and $\overline \gamma_2$ are implemented as Multi-layer Perceptrons (MLPs) with \texttt{ReLU} activation. The diffusion function $\sigma$ has two options: linear affine and a nonlinear neural network. Our approach involves constraining the SDEs to exhibit diagonal noise, a decision aimed at satisfying the commutativity property as delineated by \citet{rossler2004runge}.
Consequently, our designated networks under It\^o -Taylor schemes ensure the pathwise convergence from any given fixed starting point \citep{kloeden2007pathwise,li2020scalable}.

In practice, we implemented several design choices to optimize the model's performance and stability. Firstly, we utilized the idea of sinusoidal positional encoding, as proposed by \citet{vaswani2017attention}, for the time variable, ensuring that each time step possesses a unique encoding. 
Secondly, we opted for the \texttt{tanh} nonlinearity as the final operation for drift, diffusion, and all other vector fields, following the recommendation by \citet{kidger2020neural}. This choice is intended to prevent the model from experiencing issues related to excessively large values or gradients. 
Lastly, we employed layer-wise learning rates for the model's final layer (e.g. $\times 100$), facilitating more precise and adaptive learning for the classification task. This approach can result in improved classification outcomes, as each layer can adjust its learning rate according to the complexity of the features it aims to capture. 

\subsection{Training strategy}

In addressing datasets characterized by irregular sampling or missing values, the initial value of the process is established based on available observations. The initial value, denoted as $x_0$, is interpolated from the observed data $\mathbf{x}$ and subsequently used to define the mapping for $\textbf{z}(0)$. This methodology is consistent with the approaches outlined by \citet{kidger2020neural} and subsequent research. 

Subsequently, the latent trajectory value at $\textbf{z}(T)$ is specifically utilized for designated tasks. This approach ensures that both the estimation of the SDE and the model's output are co-optimized.
In our task of classifying time series data, we utilize the cross-entropy loss function, taking the final value of the latent representation $\vz(T)$ as input. We employ an MLP, comprising two fully connected layers with \texttt{ReLU} activation, to process the extracted feature $\bm{z}:[0,T]$, as follows:
\begin{align}
    \hat{y} = \textit{MLP}(\bm{z}(T);\theta_{\textit{MLP}}),
\end{align}
where $\hat{y}$ is the predicted label of the given time series sample. To mitigate the risk of overfitting and regularize the model, we incorporate a dropout rate of 10\%. 
Also, we employed an early-stopping mechanism, ceasing the training when the validation loss didn't improve for 10 successive epochs. The training approach for our proposed Neural SDEs for classification is outlined in Algorithm~\ref{algo:train}. 
The adjoint sensitivity method's gradient computation facilitates the integration of mini-batch algorithms in solving Neural SDEs. For this purpose, python library \texttt{torchsde}\footnote{\url{https://github.com/google-research/torchsde}}~\citep{li2020scalable} is employed, which is adept at handling both na\"ive Neural SDEs and the proposed Neural SDEs in our research.

\begin{algorithm}[!htb]
\small
\caption{Train procedure for classification task}\label{algo:train}
\begin{algorithmic}[1]
    \STATE Divide training data into a train set $D_{train}$ and a validation set $D_{val}$. Set the maximum iteration $epoch_{max}$. 
    \STATE Initialize the parameters for the control neural network $\theta_\zeta$,  the drift term $\theta_\gamma$, and the diffusion term $\theta_\sigma$. 
    \STATE Initialize the parameters for the MLP classifier $\theta_{\textit{MLP}}$.
    
    \FOR{$i=1$ {\bfseries to} $epoch_{max}$}
        \STATE Train parameters $\bm{\theta} = \left[\theta_\zeta, \theta_\gamma, \theta_\sigma, \theta_{\textit{MLP}} \right]$ using $D_{train}$ and classification loss $\mathcal{L}(y,\hat{y})$. \\
        $$\argmin_{\bm{\theta}} \mathcal{L}(y, \hat{y}).$$    
        \STATE Validate using $D_{val}$ and update parameters. 
        \STATE Find the best parameters $\bm{\theta}^*$ to minimize validation loss. 
    \ENDFOR
    
    \STATE \textbf{Return} $\theta_\zeta^*$, $\theta_\gamma^*$, $\theta_\sigma^*$, $\theta_{\textit{MLP}}^*$. 
\end{algorithmic}
\end{algorithm}

\subsection{Empirical study}\label{supplementary:empirical}

\paragraph{Impact of the Neural SDEs' depth $T$.} 

\begin{table}[!htb]
\scriptsize\centering\captionsetup{justification=centering, skip=5pt}\renewcommand{\arraystretch}{1.0}
\caption{Analysis of sensitivity for Neural SDEs' depth $T$ and Classification performance utilizing the `BasicMotions' dataset under each scenario (* denoting the final value)} \label{tab:depth}
\subfloat[Neural SDE]{%
\begin{tabular}{@{}ccccc@{}}
\toprule
\multicolumn{1}{l}{\textbf{Depth}} & \textbf{Regular datasets} & \textbf{Missing datasets (30\%)} & \textbf{Missing datasets (50\%)} & \textbf{Missing datasets (70\%)} \\ \midrule
10                                 & 0.417 ± 0.132             & 0.433 ± 0.070                    & 0.467 ± 0.095                    & 0.417 ± 0.132                    \\
50                                 & 0.383 ± 0.112             & 0.417 ± 0.144                    & 0.417 ± 0.118                    & 0.383 ± 0.162                    \\
90                                 & 0.483 ± 0.124             & 0.383 ± 0.095                    & 0.483 ± 0.091                    & 0.383 ± 0.151                    \\
100*                               & 0.417 ± 0.000             & 0.333 ± 0.059                    & 0.500 ± 0.059                    & 0.467 ± 0.095                    \\ \bottomrule
\end{tabular}} \\
\subfloat[Neural LSDE]{%
\begin{tabular}{@{}ccccc@{}}
\toprule
\multicolumn{1}{l}{\textbf{Depth}} & \textbf{Regular datasets} & \textbf{Missing datasets (30\%)} & \textbf{Missing datasets (50\%)} & \textbf{Missing datasets (70\%)} \\ \midrule
10                                 & 0.433 ± 0.137             & 0.417 ± 0.118                    & 0.300 ± 0.095                    & 0.450 ± 0.173                    \\
50                                 & 1.000 ± 0.000             & 0.967 ± 0.046                    & 0.950 ± 0.075                    & 0.867 ± 0.046                    \\
90                                 & 1.000 ± 0.000             & 0.967 ± 0.046                    & 0.967 ± 0.046                    & 0.917 ± 0.000                    \\
100*                               & 1.000 ± 0.000             & 1.000 ± 0.000                    & 1.000 ± 0.000                    & 0.983 ± 0.037                    \\ \bottomrule
\end{tabular}} \\ 
\subfloat[Neural LNSDE]{%
\begin{tabular}{@{}ccccc@{}}
\toprule
\multicolumn{1}{l}{\textbf{Depth}} & \textbf{Regular datasets} & \textbf{Missing datasets (30\%)} & \textbf{Missing datasets (50\%)} & \textbf{Missing datasets (70\%)} \\ \midrule
10                                 & 0.333 ± 0.102             & 0.400 ± 0.109                    & 0.450 ± 0.139                    & 0.367 ± 0.139                    \\
50                                 & 1.000 ± 0.000             & 0.967 ± 0.046                    & 0.933 ± 0.070                    & 0.900 ± 0.070                    \\
90                                 & 1.000 ± 0.000             & 0.983 ± 0.037                    & 0.983 ± 0.037                    & 0.933 ± 0.070                    \\
100*                               & 1.000 ± 0.000             & 1.000 ± 0.000                    & 1.000 ± 0.000                    & 0.983 ± 0.037                    \\ \bottomrule
\end{tabular}} \\
\subfloat[Neural GSDE]{%
\begin{tabular}{@{}ccccc@{}}
\toprule
\multicolumn{1}{l}{\textbf{Depth}} & \textbf{Regular datasets} & \textbf{Missing datasets (30\%)} & \textbf{Missing datasets (50\%)} & \textbf{Missing datasets (70\%)} \\ \midrule
10                                 & 0.483 ± 0.124             & 0.417 ± 0.102                    & 0.400 ± 0.070                    & 0.417 ± 0.144                    \\
50                                 & 0.983 ± 0.037             & 0.950 ± 0.046                    & 0.933 ± 0.070                    & 0.883 ± 0.095                    \\
90                                 & 1.000 ± 0.000             & 0.967 ± 0.046                    & 0.983 ± 0.037                    & 0.917 ± 0.059                    \\
100*                               & 1.000 ± 0.000             & 0.983 ± 0.037                    & 0.983 ± 0.037                    & 0.983 ± 0.037                    \\ \bottomrule
\end{tabular}}
\end{table}

Theorems~\ref{thm:stability_lsde} and \ref{thm:stability_lnsde} establish non-asymptotic upper bounds on the variance between output distributions of original and perturbed input data. To empirically verify this hypothesis, the impact of varying $T$ on Neural SDEs' performance is examined.

Table~\ref{tab:depth} illustrates the classification results obtained from the `BasicMotions' dataset at various depth levels. In alignment with our theoretical expectations, it is observed that the model's efficacy consolidates and its robustness intensifies as the depth parameter $T$ is increased, substantiating the durability of the proposed methods in scenarios of varying depths.

\begin{table*}[htb]
\begin{minipage}{.49\linewidth}
    \scriptsize\centering\captionsetup{justification=centering, skip=5pt}\renewcommand{\arraystretch}{1.0}
    \captionof{table}{Classification performance on the `BasicMotions' dataset at a 50\% missing rate \\ 
    using different solvers: the explicit Euler, the Milstein, and the Stochastic Runge-Kutta (SRK)\\ 
    (Average taken over five iterations. Runtime, in seconds, for 100 epochs without early-stopping.)}\label{tab:solver}
    \begin{tabular}{@{}clcc@{}}
    \toprule
    \multicolumn{1}{l}{\textbf{Methods}}   & \textbf{Solvers} & \textbf{Accuracy} & \textbf{Runtime} \\ \midrule
    \multirow{3}{*}{\textbf{Neural SDE}}            & \textbf{Euler}            & 0.500 ± 0.059     & 87.4 ± 0.3       \\
                                           & Milstein         & 0.483 ± 0.091     & 156.9 ± 0.9      \\
                                           & SRK              & 0.500 ± 0.059     & 749.4 ± 3.4      \\ \midrule
    \multirow{3}{*}{\textbf{Neural LSDE}}  & \textbf{Euler}   & 1.000 ± 0.000     & 89.4 ± 0.5       \\
                                           & Milstein         & 0.983 ± 0.037     & 112.6 ± 1.0      \\
                                           & SRK              & 1.000 ± 0.000     & 748.5 ± 2.4      \\ \midrule
    \multirow{3}{*}{\textbf{Neural LNSDE}} & \textbf{Euler}   & 1.000 ± 0.000     & 95.0 ± 0.4       \\
                                           & Milstein         & 0.983 ± 0.037     & 135.3 ± 0.8      \\
                                           & SRK              & 0.983 ± 0.037     & 822.5 ± 2.2      \\ \midrule
    \multirow{3}{*}{\textbf{Neural GSDE}}  & \textbf{Euler}   & 0.983 ± 0.037     & 96.4 ± 0.6       \\
                                           & Milstein         & 0.983 ± 0.037     & 135.7 ± 1.1      \\
                                           & SRK              & 0.967 ± 0.046     & 839.1 ± 4.1      \\ \bottomrule
    \end{tabular}
\end{minipage}
\hfill
\begin{minipage}{.49\linewidth}
    \scriptsize\centering\captionsetup{justification=centering, skip=5pt}\renewcommand{\arraystretch}{1.0}
    \captionof{table}{Classification performance on the `BasicMotions' dataset at a 50\% missing rate \\ 
    (Average taken over five iterations. Runtime, in seconds, for 100 epochs without early-stopping.)}\label{tab:runtime}
    \begin{tabular}{@{}lcc@{}}
    \toprule
    \textbf{Methods}      & \textbf{Accuracy} & \textbf{Runtime} \\ \midrule
    RNN                   & 0.600 ± 0.070     & 2.3 ± 0.1        \\
    LSTM                  & 0.700 ± 0.095     & 2.8 ± 0.9        \\
    GRU                   & 0.900 ± 0.091     & 3.0 ± 0.0        \\
    GRU-$\Delta t$        & 0.967 ± 0.046     & 34.4 ± 0.3       \\
    GRU-D                 & 0.933 ± 0.070     & 42.4 ± 0.3       \\
    MTAN                  & 0.717 ± 0.247     & 6.2 ± 0.4        \\
    MIAM                  & 0.933 ± 0.109     & 41.7 ± 0.6       \\
    GRU-ODE               & 0.983 ± 0.037     & 519.8 ± 0.3      \\
    ODE-RNN               & 1.000 ± 0.000     & 113.3 ± 0.2      \\
    ODE-LSTM              & 0.717 ± 0.126     & 73.0 ± 0.7       \\
    Neural CDE            & 0.983 ± 0.037     & 102.5 ± 0.3      \\
    Neural RDE            & 0.717 ± 0.046     & 93.8 ± 0.4       \\
    ANCDE                 & 0.983 ± 0.037     & 360.5 ± 0.5      \\
    EXIT                  & 0.417 ± 0.102     & 334.5 ± 0.3      \\
    LEAP                  & 0.317 ± 0.124     & 163.0 ± 0.6      \\
    Latent SDE            & 0.350 ± 0.109     & 238.5 ± 0.6      \\ \midrule
    \textbf{Neural SDE}   & 0.500 ± 0.059     & 87.4 ± 0.3       \\ \midrule
    \textbf{Neural LSDE}  & 1.000 ± 0.000     & 89.4 ± 0.5       \\
    \textbf{Neural LNSDE} & 1.000 ± 0.000     & 95.0 ± 0.4       \\
    \textbf{Neural GSDE}  & 0.983 ± 0.037     & 96.4 ± 0.6       \\ \bottomrule
    \end{tabular}
\end{minipage}
\end{table*} 

\paragraph{Choice of solver.} 
We considered three distinct numerical solvers for Neural SDEs: the explicit Euler-Maruyama method, the Milstein method, and the Stochastic Runge-Kutta (SRK) method. Each solver presents varying degrees of convergence and computational efficiency: the Euler method demonstrates a convergence order of 0.5, while the Milstein and SRK methods exhibit higher orders of 1.0 and 1.5, respectively, indicating progressively enhanced accuracy.

For our study involving high-dimensional SDEs, the explicit Euler method was selected due to its superior computational efficiency, despite its lower accuracy, in comparison to the more computationally intensive Milstein and SRK methods. This choice was motivated by the necessity to manage the computational demands of high-dimensional data effectively. As shown in Table~\ref{tab:solver}, this selection is further validated by the average runtimes observed using the `BasicMotions' dataset.

Recognizing the potential concerns about numerical stability in explicit numerical solutions, our Neural SDE models were carefully crafted to ensure robust performance. This design forethought is crucial in maintaining numerical stability throughout the computations utilizing the explicit Euler method. Table~\ref{tab:runtime} provides an investigation into both the classification accuracy and the average runtime, reflecting the efficiency and effectiveness of our chosen methodology versus benchmarks.

\section{Detailed results of the benchmark datasets}\label{supplementary:benchmarks_all}

For the interpolation task, we employed the dataset pipeline recommended for \textbf{PhysioNet Mortality}~\citep{silva2012predicting}, as outlined by \citet{shukla2021multi}, with additional resources available in its GitHub repository\footnote{\url{https://github.com/reml-lab/mTAN}}. The forecasting task utilized the \textbf{MuJoCo dataset}~\citep{tassa2018deepmind}, following the experimental procedures and resources found in \citet{jhin2023attentive} and its corresponding GitHub repository\footnote{\url{https://github.com/sheoyon-jhin/ANCDE}}.
For the datasets \textbf{PhysioNet Sepsis}~\citep{reyna2019early}, and \textbf{Speech Commands}~\citep{warden2018speech}, we adopted the preprocessing and experimental protocols detailed in \citet{kidger2020neural} and its corresponding Github repository\footnote{\url{https://github.com/patrick-kidger/NeuralCDE}}. 
We recommend referring to the original paper for further details regarding the data and experiment protocols.

\paragraph{PhysioNet Mortality.} 
We compare the proposed techniques with the benchmark method delineated in \citet{shukla2021multi}. The following models are considered: RNN-VAE model~\citep{chen2018neural}, employs a Variational AutoEncoder (VAE) framework wherein both the encoder and decoder are instantiated as conventional RNN architectures. L-ODE-RNN model~\citep{chen2018neural}, is a Latent ODE variant that utilizes an RNN encoder and a Neural ODE decoder. L-ODE-ODE~\citep{rubanova2019latent}, is another Latent ODE variant, but both its encoder and decoder are based on ODE-RNN and Neural ODE respectively. MTAN~\citep{shukla2021multi}, integrates a time attention mechanism alongside Bidirectional RNNs to encapsulate temporal features and interpolate data.

\begin{table}[htb]
\scriptsize\centering\captionsetup{justification=centering, skip=5pt}\renewcommand{\arraystretch}{1.0}
\caption{Results of hyperparameter tuning for `PhysioNet Mortality': Mean Squared Error (MSE, scaled by $10^{-3}$) for interpolation tasks across various observation levels}\label{tab:params1}
\subfloat[Neural SDE]{%
\begin{tabular}{@{}ccccccc@{}}
\toprule
\multirow{2.5}{*}{\textbf{$n_l$}} & \multirow{2.5}{*}{\textbf{$n_h$}} & \multicolumn{5}{c}{\textbf{Observed \%}}                                           \\ \cmidrule(l){3-7} 
                                &                                 & \textbf{50\%}  & \textbf{60\%}  & \textbf{70\%}  & \textbf{80\%}  & \textbf{90\%}  \\ \midrule
\multirow{4}{*}{1}              & 16                              & 8.900          & 8.892          & 8.666          & 8.593          & 8.359          \\
                                & 32                              & 8.953          & 8.785          & 8.719          & 8.603          & 8.377          \\
                                & 64                              & 8.739          & 8.718          & 8.613          & 8.392          & 8.374          \\
                                & 128                             & 8.770          & 8.686          & 8.560          & 8.464          & 8.296          \\ \midrule
\multirow{4}{*}{2}              & 16                              & 8.901          & 8.835          & 8.706          & 8.624          & 8.445          \\
                                & 32                              & 8.861          & 8.858          & 8.669          & 8.550          & 8.446          \\
                                & 64                              & 8.842          & 8.829          & 8.668          & 8.518          & 8.362          \\
                                & 128                             & 8.826          & 8.705          &  8.526    & 8.430          & 8.305          \\ \midrule
\multirow{4}{*}{3}              & 16                              & 8.899          & 8.782          & 8.652          & 8.517          & 8.427          \\
                                & 32                              & 8.812          & 8.829          & 8.607          & 8.470          & 8.314          \\
                                & 64                              & 8.687    & \textbf{8.481} & \textbf{8.471} & 8.325    & \textbf{8.147} \\
                                & 128                             & 8.696          & 8.652          & 8.533          & 8.412          & 8.312          \\ \midrule
\multirow{4}{*}{4}              & 16                              & 8.902          & 8.690          & 8.715          & 8.521          & 8.473          \\
                                & 32                              & 8.854          & 8.657          & 8.571          & 8.398          & 8.367          \\
                                & 64                              & 8.788          & 8.658          & 8.592          & 8.459          & 8.263          \\
                                & 128                             & \textbf{8.592} & 8.591    & 8.540          & \textbf{8.318} & 8.252    \\ \bottomrule
\end{tabular}} \hfil
\subfloat[Neural LSDE]{%
\begin{tabular}{@{}ccccccc@{}}
\toprule
\multirow{2.5}{*}{\textbf{$n_l$}} & \multirow{2.5}{*}{\textbf{$n_h$}} & \multicolumn{5}{c}{\textbf{Observed \%}}                                           \\ \cmidrule(l){3-7} 
                                &                                 & \textbf{50\%}  & \textbf{60\%}  & \textbf{70\%}  & \textbf{80\%}  & \textbf{90\%}  \\ \midrule
\multirow{4}{*}{1}              & 16                              & 5.315          & 5.017          & 4.765          & 4.654          & 4.446          \\
                                & 32                              & 5.081          & 5.039          & 4.596          & 4.632          & 3.732          \\
                                & 64                              & 4.623          & 4.521          & 4.437          & 4.364          & 3.347          \\
                                & 128                             & 3.822          & 3.658          & 3.516          & 3.355          & 3.098          \\ \midrule
\multirow{4}{*}{2}              & 16                              & 5.399          & 5.286          & 5.153          & 5.104          & 4.336          \\
                                & 32                              & 4.711          & 4.520          & 4.332          & 4.097          & 4.000          \\
                                & 64                              & 4.200          & 4.066          & 3.827          & 3.632          & 3.351          \\
                                & 128                             & 4.310          & 4.167          & 4.130          & 4.057          & 3.106          \\ \midrule
\multirow{4}{*}{3}              & 16                              & 5.585          & 5.367          & 5.295          & 5.242          & 4.683          \\
                                & 32                              & 4.806          & 4.567          & 4.220          & 3.988          & 3.789          \\
                                & 64                              & 4.123          & 4.015          & 3.631          & 3.441          & 3.340          \\
                                & 128                             & \textbf{3.793} & 3.645    & \textbf{3.350} & \textbf{3.209} & \textbf{3.089} \\ \midrule
\multirow{4}{*}{4}              & 16                              & 5.223          & 5.296          & 4.964          & 4.811          & 4.665          \\
                                & 32                              & 4.682          & 4.751          & 4.107          & 4.046          & 3.848          \\
                                & 64                              & 4.139          & 3.861          & 3.756          & 3.456          & 3.380          \\
                                & 128                             & 3.799    & \textbf{3.584} & 3.457    & 3.262    & 3.111    \\ \bottomrule
\end{tabular}} \\ 
\subfloat[Neural LNSDE]{%
\begin{tabular}{@{}ccccccc@{}}
\toprule
\multirow{2.5}{*}{\textbf{$n_l$}} & \multirow{2.5}{*}{\textbf{$n_h$}} & \multicolumn{5}{c}{\textbf{Observed \%}}                                           \\ \cmidrule(l){3-7} 
                                &                                 & \textbf{50\%}  & \textbf{60\%}  & \textbf{70\%}  & \textbf{80\%}  & \textbf{90\%}  \\ \midrule
\multirow{4}{*}{1}              & 16                              & 5.367          & 5.362          & 4.914          & 4.516          & 4.224          \\
                                & 32                              & 4.980          & 4.309          & 4.674          & 4.616          & 3.766          \\
                                & 64                              & 4.490          & 4.591          & 4.356          & 4.531          & 3.503          \\
                                & 128                             & \textbf{3.800} & 3.763          & 3.492          & 3.293          & 3.060    \\ \midrule
\multirow{4}{*}{2}              & 16                              & 5.297          & 5.374          & 4.842          & 5.030          & 4.703          \\
                                & 32                              & 4.645          & 4.628          & 4.531          & 4.197          & 4.060          \\
                                & 64                              & 4.171          & 4.043          & 3.838          & 3.705          & 3.303          \\
                                & 128                             & 4.309          & 4.199          & 4.130          & 3.535          & 3.162          \\ \midrule
\multirow{4}{*}{3}              & 16                              & 5.455          & 5.411          & 5.074          & 5.152          & 4.766          \\
                                & 32                              & 4.726          & 4.383          & 4.052          & 3.963          & 4.040          \\
                                & 64                              & 4.082          & 3.905          & 3.656          & 3.457          & 3.395          \\
                                & 128                             & 3.829          & \textbf{3.600} & \textbf{3.353} & \textbf{3.193} & \textbf{3.040} \\ \midrule
\multirow{4}{*}{4}              & 16                              & 5.365          & 5.119          & 4.809          & 4.760          & 4.608          \\
                                & 32                              & 4.674          & 4.180          & 4.260          & 3.945          & 4.064          \\
                                & 64                              & 4.114          & 3.823          & 3.689          & 3.458          & 3.398          \\
                                & 128                             & 3.808    & 3.617    & 3.405    & 3.269    & 3.154          \\ \bottomrule
\end{tabular}} \hfil
\subfloat[Neural GSDE]{%
\begin{tabular}{@{}ccccccc@{}}
\toprule
\multirow{2.5}{*}{\textbf{$n_l$}} & \multirow{2.5}{*}{\textbf{$n_h$}} & \multicolumn{5}{c}{\textbf{Observed \%}}                                           \\ \cmidrule(l){3-7} 
                                &                                 & \textbf{50\%}  & \textbf{60\%}  & \textbf{70\%}  & \textbf{80\%}  & \textbf{90\%}  \\ \midrule
\multirow{4}{*}{1}              & 16                              & 5.408          & 5.245          & 5.159          & 5.090          & 4.499          \\
                                & 32                              & 5.093          & 4.961          & 4.320          & 4.888          & 3.968          \\
                                & 64                              & 4.699          & 4.191          & 4.470          & 4.464          & 3.477          \\
                                & 128                             & 4.149          & 3.882          & 4.219          & 4.072          & 3.195          \\ \midrule
\multirow{4}{*}{2}              & 16                              & 5.508          & 5.491          & 5.090          & 4.650          & 4.541          \\
                                & 32                              & 4.832          & 4.766          & 4.595          & 4.196          & 4.094          \\
                                & 64                              & 4.215          & 4.078          & 3.936          & 3.702          & 3.427          \\
                                & 128                             & 4.369          & 4.274          & 4.222          & 3.952          & 3.158          \\ \midrule
\multirow{4}{*}{3}              & 16                              & 5.591          & 5.498          & 5.311          & 4.896          & 4.754          \\
                                & 32                              & 4.684          & 4.345          & 4.224          & 4.098          & 3.869          \\
                                & 64                              & 4.273          & 4.046          & 3.760          & 3.663          & 3.486          \\
                                & 128                             & 3.863    & \textbf{3.661} & \textbf{3.453} & 3.296    & \textbf{3.089} \\ \midrule
\multirow{4}{*}{4}              & 16                              & 5.449          & 4.918          & 4.911          & 4.780          & 4.649          \\
                                & 32                              & 4.672          & 4.496          & 4.377          & 4.145          & 3.902          \\
                                & 64                              & 4.242          & 3.966          & 3.753          & 3.555          & 3.497          \\
                                & 128                             & \textbf{3.824} & 3.667    & 3.493    & \textbf{3.287} & 3.118    \\ \bottomrule
\end{tabular}}
\end{table}

For the proposed methodology, the training process spans $300$ epochs, employing a batch size of $64$ and a learning rate of $0.001$. To train our models on a dataset comprising irregularly sampled time series, we adopt a strategy from \citet{shukla2021multi}. This involves the modified VAE training method, where we optimize a normalized variational lower bound of the log marginal likelihood, grounded on the evidence lower bound (ELBO).
Hyperparameter optimization is conducted through a grid search, focusing on the number of layers in vector field $n_l$ of $\left\{1,2,3,4\right\}$ and hidden vector dimensions $n_h$ of $\left\{16,32,64,128\right\}$. The optimal hyperparameters for the proposed methods are remarked in bold within Table~\ref{tab:params1}, encompassing observed data ranging from 50\% to 90\%.

\paragraph{MuJoCo.} 
We evaluate the proposed model against the performance provided on \citet{jhin2023attentive}. The methods we compared include GRU-$\Delta t$~\citep{choi2016doctor}, GRU-D~\citep{che2016recurrent}, GRU-ODE~\citep{de2019gru}, ODE-RNN~\citep{rubanova2019latent}, Latent-ODE~\citep{rubanova2019latent}, Augmented-ODE~\citep{dupont2019augmented}, ACE-NODE~\citep{jhin2021ace}, Neural CDE~\citep{kidger2020neural}, ANCDE~\citep{jhin2023attentive}, EXIT~\citep{jhin2022exit}, and LEAP~\citep{jhin2023learnable}.

\begin{table}[htb]
\scriptsize\centering\captionsetup{justification=centering, skip=5pt}\renewcommand{\arraystretch}{1.0}
\caption{Results of hyperparameter tuning for `MuJoCo' with regular dataset}\label{tab:params2}
\begin{tabular}{@{}cccccccccc@{}}
\toprule
\multirow{2.5}{*}{\textbf{$n_l$}} & \multirow{2.5}{*}{\textbf{$n_h$}} & \multicolumn{2}{c}{\textbf{Neural SDE}} & \multicolumn{2}{c}{\textbf{Neural LSDE}} & \multicolumn{2}{c}{\textbf{Neural LNSDE}} & \multicolumn{2}{c}{\textbf{Neural GSDE}} \\ \cmidrule(l){3-4} \cmidrule(l){5-6} \cmidrule(l){7-8} \cmidrule(l){9-10}
                                &                                 & Test MSE                   & Memory     & Test MSE                    & Memory     & Test MSE                    & Memory      & Test MSE                    & Memory     \\ \midrule
\multirow{4}{*}{1}              & 16                              & 0.057 ± 0.009              & 66         & 0.031 ± 0.001               & 72         & 0.032 ± 0.001               & 78          & 0.031 ± 0.004               & 87         \\
                                & 32                              & 0.040 ± 0.001              & 118        & 0.022 ± 0.001               & 127        & 0.021 ± 0.002               & 139         & 0.019 ± 0.001               & 157        \\
                                & 64                              & 0.041 ± 0.008              & 218        & 0.018 ± 0.001               & 237        & 0.016 ± 0.001               & 261         & 0.016 ± 0.001               & 298        \\
                                & 128                             & 0.046 ± 0.002              & 432        & 0.017 ± 0.001               & 460        & 0.016 ± 0.001               & 508         & \textbf{0.013 ± 0.001}      & 582        \\ \midrule
\multirow{4}{*}{2}              & 16                              & 0.054 ± 0.011              & 69         & 0.041 ± 0.001               & 75         & 0.033 ± 0.002               & 81          & 0.034 ± 0.000               & 90         \\
                                & 32                              & 0.038 ± 0.003              & 124        & 0.024 ± 0.002               & 133        & 0.024 ± 0.002               & 141         & 0.025 ± 0.001               & 160        \\
                                & 64                              & 0.032 ± 0.000              & 234        & 0.017 ± 0.001               & 249        & 0.017 ± 0.001               & 273         & 0.018 ± 0.001               & 306        \\
                                & 128                             & 0.031 ± 0.002              & 449        & \textbf{0.013 ± 0.000}      & 485        & \textbf{0.012 ± 0.000}      & 533         & 0.015 ± 0.001               & 607        \\ \midrule
\multirow{4}{*}{3}              & 16                              & 0.054 ± 0.012              & 72         & 0.043 ± 0.000               & 78         & 0.036 ± 0.003               & 84          & 0.039 ± 0.003               & 89         \\
                                & 32                              & 0.061 ± 0.012              & 130        & 0.030 ± 0.000               & 139        & 0.029 ± 0.000               & 151         & 0.029 ± 0.001               & 162        \\
                                & 64                              & 0.031 ± 0.001              & 246        & 0.023 ± 0.000               & 261        & 0.023 ± 0.002               & 286         & 0.022 ± 0.003               & 310        \\
                                & 128                             & \textbf{0.028 ± 0.001}     & 474        & 0.018 ± 0.001               & 509        & 0.017 ± 0.001               & 558         & 0.018 ± 0.001               & 615        \\ \midrule
\multirow{4}{*}{4}              & 16                              & 0.065 ± 0.004              & 75         & 0.046 ± 0.000               & 81         & 0.044 ± 0.005               & 86          & 0.036 ± 0.001               & 92         \\
                                & 32                              & 0.061 ± 0.010              & 136        & 0.035 ± 0.002               & 145        & 0.037 ± 0.001               & 157         & 0.031 ± 0.000               & 169        \\
                                & 64                              & 0.031 ± 0.003              & 259        & 0.027 ± 0.000               & 274        & 0.028 ± 0.000               & 298         & 0.027 ± 0.000               & 322        \\
                                & 128                             & 0.031 ± 0.004              & 507        & 0.023 ± 0.002               & 534        & 0.023 ± 0.001               & 583         & 0.023 ± 0.001               & 632        \\ \bottomrule
\end{tabular}
\end{table}

For the proposed methodology, we spanned $500$ epochs with a batch size of $1024$ and a learning rate set at $0.001$. We performed hyperparameter optimization using a grid search, focusing specifically on the number of layers in vector field $n_l$ of $\left\{1,2,3,4\right\}$ and hidden vector dimensions $n_h$ of $\left\{16,32,64,128\right\}$. The optimal hyperparameters, as identified for our methods, are highlighted in bold in Table~\ref{tab:params2} for regular datasets. For scenarios with 30\%, 50\%, and 70\% missing data, we applied the same hyperparameters.

\begin{table*}[htb]
\scriptsize\centering\captionsetup{justification=centering, skip=5pt}\renewcommand{\arraystretch}{1.0}
\caption{Forecasting performance versus percent observed time points on MuJoCo }\label{tab:forecasting}
\begin{tabular}{@{}lccccc@{}}
\toprule
\multirow{2.5}{*}{\textbf{Methods}} & \multicolumn{4}{c}{\textbf{Test MSE}}                                                    & \multirow{2.5}{*}{\textbf{\begin{tabular}[c]{@{}c@{}}Memory Usage\\ (MB)\end{tabular}}} \\ \cmidrule(l){2-2} \cmidrule(l){3-3} \cmidrule(l){4-4} \cmidrule(l){5-5}
                                  & \textbf{Regular} & \textbf{30\% dropped} & \textbf{50\% dropped} & \textbf{70\% dropped} &                                  \\ \midrule
GRU-$\Delta t$                            & 0.223 ± 0.020          & 0.198 ± 0.036          & 0.193 ± 0.015          & 0.196 ± 0.028          & 533                              \\
GRU-D                             & 0.578 ± 0.042          & 0.608 ± 0.032          & 0.587 ± 0.039          & 0.579 ± 0.052          & 569                              \\
GRU-ODE                           & 0.856 ± 0.016          & 0.857 ± 0.015          & 0.852 ± 0.015          & 0.861 ± 0.015          & 146                              \\
ODE-RNN                           & 0.328 ± 0.225          & 0.274 ± 0.213          & 0.237 ± 0.110          & 0.267 ± 0.217          & 115                              \\
Latent-ODE                        & 0.029 ± 0.011          & 0.056 ± 0.001          & 0.055 ± 0.004          & 0.058 ± 0.003          & 314                              \\
Augmented-ODE                     & 0.055 ± 0.004          & 0.056 ± 0.004          & 0.057 ± 0.005          & 0.057 ± 0.005          & 286                              \\
ACE-NODE                          & 0.039 ± 0.003          & 0.053 ± 0.007          & 0.053 ± 0.005          & 0.052 ± 0.006          & 423                              \\
NCDE                              & 0.028 ± 0.002          & 0.027 ± 0.000          & 0.027 ± 0.001          & 0.026 ± 0.001          & 52                               \\
ANCDE                             & 0.026 ± 0.001          & 0.025 ± 0.001          & 0.025 ± 0.001          & 0.024 ± 0.001          & 79                               \\
EXIT                              & 0.026 ± 0.000          & 0.025 ± 0.004          & 0.026 ± 0.000          & 0.026 ± 0.001          & 127                              \\
LEAP                              & 0.022 ± 0.002          & 0.022 ± 0.001          & 0.022 ± 0.002          & 0.022 ± 0.001          & 144                              \\ \midrule
\textbf{Neural SDE}               & 0.028 ± 0.004          & 0.029 ± 0.001          & 0.029 ± 0.001          & 0.027 ± 0.000          & 234                              \\ \midrule
\textbf{Neural LSDE}              & {\ul 0.013 ± 0.000}    & {\ul 0.014 ± 0.001}    & {\ul 0.014 ± 0.000}    & \textbf{0.013 ± 0.001} & 249                              \\
\textbf{Neural LNSDE}             & \textbf{0.012 ± 0.001} & {\ul 0.014 ± 0.001}    & {\ul 0.014 ± 0.001}    & {\ul 0.014 ± 0.000}    & 273                              \\
\textbf{Neural GSDE}              & {\ul 0.013 ± 0.001}    & \textbf{0.013 ± 0.001} & \textbf{0.013 ± 0.000} & {\ul 0.014 ± 0.000}    & 306                              \\ \bottomrule
\end{tabular}
\end{table*}

Table~\ref{tab:forecasting} presents the forecasting performance across varying data drop ratios. Our methods consistently demonstrate lower MSE scores, indicating their superior forecasting capabilities. However, the proposed methods demand higher computational resources due to the complex architecture.

\paragraph{PhysioNet Sepsis.} We evaluate the proposed model against the performance provided on \citet{jhin2023attentive}. The methods we compared include GRU-$\Delta t$~\citep{choi2016doctor}, GRU-D~\citep{che2016recurrent}, GRU-ODE~\citep{de2019gru}, ODE-RNN~\citep{rubanova2019latent}, Latent-ODE~\citep{rubanova2019latent}, ACE-NODE~\citep{jhin2021ace}, Neural CDE~\citep{kidger2020neural}, and ANCDE~\citep{jhin2023attentive}. We examine the results both with and without the observational intensity (OI), which is determined by attaching a mask indicating whether an observation was made or not to each input.
\begin{table}[htb]
\scriptsize\centering\captionsetup{justification=centering, skip=5pt}\renewcommand{\arraystretch}{1.0}
\caption{Results of hyperparameter tuning for `PhysioNet Sepsis' with observation intensity }\label{tab:params3}
\begin{tabular}{@{}cccccccccc@{}}
\toprule
\multirow{2.5}{*}{\textbf{$n_l$}} & \multirow{2.5}{*}{\textbf{$n_h$}} & \multicolumn{2}{c}{\textbf{Neural SDE}} & \multicolumn{2}{c}{\textbf{Neural LSDE}} & \multicolumn{2}{c}{\textbf{Neural LNSDE}} & \multicolumn{2}{c}{\textbf{Neural GSDE}} \\ \cmidrule(l){3-4} \cmidrule(l){5-6} \cmidrule(l){7-8} \cmidrule(l){9-10}
                                &                                 & Test AUROC                 & Memory     & Test AUROC                  & Memory     & Test AUROC                  & Memory      & Test AUROC                  & Memory     \\ \midrule
\multirow{4}{*}{1}              & 16                              & 0.792 ± 0.008              & 304        & 0.907 ± 0.003               & 314        & 0.910 ± 0.002               & 340         & 0.909 ± 0.002               & 338        \\
                                & 32                              & 0.780 ± 0.006              & 369        & 0.904 ± 0.002               & 353        & 0.899 ± 0.003               & 366         & 0.902 ± 0.004               & 369        \\
                                & 64                              & 0.790 ± 0.006              & 442        & 0.895 ± 0.002               & 421        & 0.881 ± 0.007               & 485         & 0.894 ± 0.007               & 509        \\
                                & 128                             & 0.769 ± 0.008              & 736        & 0.890 ± 0.010               & 747        & 0.870 ± 0.015               & 758         & 0.883 ± 0.007               & 924        \\ \midrule
\multirow{4}{*}{2}              & 16                              & 0.793 ± 0.004              & 338        & 0.903 ± 0.005               & 344        & \textbf{0.911 ± 0.002}      & 341         & 0.907 ± 0.001               & 298        \\
                                & 32                              & 0.786 ± 0.010              & 379        & \textbf{0.909 ± 0.004}      & 373        & 0.903 ± 0.005               & 351         & 0.903 ± 0.002               & 332        \\
                                & 64                              & 0.763 ± 0.005              & 486        & 0.907 ± 0.003               & 489        & 0.898 ± 0.006               & 522         & 0.900 ± 0.007               & 536        \\
                                & 128                             & 0.769 ± 0.010              & 802        & 0.906 ± 0.004               & 818        & 0.867 ± 0.007               & 860         & 0.882 ± 0.009               & 965        \\ \midrule
\multirow{4}{*}{3}              & 16                              & 0.793 ± 0.003              & 346        & 0.905 ± 0.001               & 341        & 0.909 ± 0.002               & 338         & 0.904 ± 0.006               & 281        \\
                                & 32                              & \textbf{0.799 ± 0.007}     & 368        & 0.908 ± 0.004               & 359        & 0.908 ± 0.003               & 341         & 0.906 ± 0.002               & 347        \\
                                & 64                              & 0.776 ± 0.006              & 491        & 0.901 ± 0.006               & 481        & 0.902 ± 0.007               & 495         & 0.901 ± 0.002               & 511        \\
                                & 128                             & 0.776 ± 0.006              & 774        & 0.902 ± 0.005               & 822        & 0.895 ± 0.004               & 863         & 0.895 ± 0.003               & 936        \\ \midrule
\multirow{4}{*}{4}              & 16                              & 0.782 ± 0.011              & 333        & 0.902 ± 0.007               & 299        & 0.906 ± 0.005               & 341         & 0.893 ± 0.007               & 297        \\
                                & 32                              & 0.784 ± 0.012              & 376        & 0.902 ± 0.001               & 355        & 0.908 ± 0.003               & 345         & 0.906 ± 0.004               & 362        \\
                                & 64                              & 0.780 ± 0.005              & 488        & 0.907 ± 0.004               & 476        & 0.899 ± 0.005               & 541         & \textbf{0.909 ± 0.001}      & 588        \\
                                & 128                             & 0.764 ± 0.013              & 844        & 0.900 ± 0.002               & 785        & 0.900 ± 0.004               & 896         & 0.905 ± 0.001               & 962        \\ \bottomrule
\end{tabular}
\end{table}

\begin{table}[htb]
\scriptsize\centering\captionsetup{justification=centering, skip=5pt}\renewcommand{\arraystretch}{1.0}
\caption{Results of hyperparameter tuning for `PhysioNet Sepsis' without observation intensity}\label{tab:params4}
\begin{tabular}{@{}cccccccccc@{}}
\toprule
\multirow{2.5}{*}{\textbf{$n_l$}} & \multirow{2.5}{*}{\textbf{$n_h$}} & \multicolumn{2}{c}{\textbf{Neural SDE}} & \multicolumn{2}{c}{\textbf{Neural LSDE}} & \multicolumn{2}{c}{\textbf{Neural LNSDE}} & \multicolumn{2}{c}{\textbf{Neural GSDE}} \\ \cmidrule(l){3-4} \cmidrule(l){5-6} \cmidrule(l){7-8} \cmidrule(l){9-10}
                                &                                 & Test AUROC                 & Memory     & Test AUROC                  & Memory     & Test AUROC                  & Memory      & Test AUROC                  & Memory     \\ \midrule
\multirow{4}{*}{1}              & 16                              & 0.782 ± 0.008              & 171        & 0.867 ± 0.004               & 183        & 0.873 ± 0.006               & 173         & 0.875 ± 0.002               & 181        \\
                                & 32                              & 0.787 ± 0.008              & 216        & 0.869 ± 0.006               & 256        & 0.868 ± 0.010               & 248         & 0.873 ± 0.003               & 267        \\
                                & 64                              & 0.790 ± 0.003              & 353        & 0.870 ± 0.004               & 353        & 0.832 ± 0.012               & 388         & 0.852 ± 0.011               & 402        \\
                                & 128                             & 0.773 ± 0.005              & 659        & 0.829 ± 0.010               & 712        & 0.775 ± 0.020               & 783         & 0.809 ± 0.009               & 868        \\ \midrule
\multirow{4}{*}{2}              & 16                              & 0.795 ± 0.009              & 191        & 0.872 ± 0.007               & 181        & 0.874 ± 0.004               & 188         & 0.880 ± 0.002               & 184        \\
                                & 32                              & 0.776 ± 0.007              & 233        & 0.867 ± 0.002               & 245        & 0.877 ± 0.006               & 245         & 0.880 ± 0.005               & 260        \\
                                & 64                              & 0.782 ± 0.012              & 394        & 0.866 ± 0.004               & 447        & 0.865 ± 0.007               & 395         & 0.870 ± 0.006               & 467        \\
                                & 128                             & 0.775 ± 0.006              & 592        & 0.867 ± 0.006               & 761        & 0.815 ± 0.011               & 772         & 0.854 ± 0.007               & 833        \\ \midrule
\multirow{4}{*}{3}              & 16                              & 0.787 ± 0.009              & 198        & 0.865 ± 0.001               & 181        & 0.869 ± 0.006               & 191         & 0.872 ± 0.006               & 164        \\
                                & 32                              & \textbf{0.797 ± 0.006}     & 240        & 0.867 ± 0.006               & 248        & 0.878 ± 0.006               & 261         & \textbf{0.884 ± 0.002}      & 280        \\
                                & 64                              & 0.783 ± 0.007              & 386        & 0.868 ± 0.004               & 420        & 0.863 ± 0.005               & 443         & 0.878 ± 0.001               & 500        \\
                                & 128                             & 0.764 ± 0.004              & 704        & 0.837 ± 0.015               & 802        & 0.846 ± 0.011               & 780         & 0.870 ± 0.004               & 968        \\ \midrule
\multirow{4}{*}{4}              & 16                              & 0.762 ± 0.005              & 184        & 0.859 ± 0.007               & 189        & 0.865 ± 0.003               & 178         & 0.868 ± 0.007               & 186        \\
                                & 32                              & 0.781 ± 0.006              & 254        & 0.862 ± 0.009               & 266        & 0.872 ± 0.002               & 284         & 0.877 ± 0.004               & 304        \\
                                & 64                              & 0.777 ± 0.005              & 396        & \textbf{0.879 ± 0.008}      & 436        & \textbf{0.881 ± 0.002}      & 445         & 0.878 ± 0.003               & 461        \\
                                & 128                             & 0.752 ± 0.017              & 789        & 0.866 ± 0.006               & 747        & 0.859 ± 0.005               & 850         & 0.875 ± 0.003               & 994        \\ \bottomrule
\end{tabular}
\end{table}

In the proposed method, we train for $200$ epochs with a batch size of $1024$ and a learning rate of $0.001$. The hyperparameters are optimized by grid search in the number of layers in vector field $n_l$ of $\left\{1,2,3,4\right\}$ and hidden vector dimensions $n_h$ of $\left\{16,32,64,128\right\}$. The best hyperparameters are highlighted with bold font in Table~\ref{tab:params3} and \ref{tab:params4}. Due to varying input channel counts in each scenario, the best hyperparameters might differ. The interplay between hyperparameters and data traits is vital for model efficacy and adaptability.

\paragraph{Speech Commands.} We evaluate the proposed model against the performance provided on \citet{jhin2023learnable}. The methods we compared include RNN~\citep{medsker1999recurrent}, LSTM~\citep{hochreiter1997long}, GRU~\citep{chung2014empirical}, GRU-$\Delta t$~\citep{choi2016doctor}, GRU-D~\citep{che2016recurrent}, GRU-ODE~\citep{de2019gru}, ODE-RNN~\citep{rubanova2019latent}, Latent-ODE~\citep{rubanova2019latent}, Augmented-ODE~\citep{dupont2019augmented}, ACE-NODE~\citep{jhin2021ace}, Neural CDE~\citep{kidger2020neural}, ANCDE~\citep{jhin2023attentive}, and LEAP~\citep{jhin2023learnable}

\begin{table}[htb]
\scriptsize\centering\captionsetup{justification=centering, skip=5pt}\renewcommand{\arraystretch}{1.0}
\caption{Results of hyperparameter tuning for `Speech Commands'}\label{tab:params5}
\begin{tabular}{@{}cccccccccc@{}}
\toprule
\multirow{2.5}{*}{\textbf{$n_l$}} & \multirow{2.5}{*}{\textbf{$n_h$}} & \multicolumn{2}{c}{\textbf{Neural SDE}} & \multicolumn{2}{c}{\textbf{Neural LSDE}} & \multicolumn{2}{c}{\textbf{Neural LNSDE}} & \multicolumn{2}{c}{\textbf{Neural GSDE}} \\ \cmidrule(l){3-4} \cmidrule(l){5-6} \cmidrule(l){7-8} \cmidrule(l){9-10} 
                                &                                 & Test Accuracy              & Memory     & Test Accuracy               & Memory     & Test Accuracy               & Memory      & Test Accuracy               & Memory     \\ \midrule
\multirow{4}{*}{1}              & 16                              & 0.105 ± 0.001              & 238        & 0.732 ± 0.013               & 246        & 0.749 ± 0.009               & 250         & 0.739 ± 0.009               & 261        \\
                                & 32                              & \textbf{0.110 ± 0.005}     & 309        & 0.837 ± 0.011               & 347        & 0.846 ± 0.004               & 352         & 0.837 ± 0.005               & 436        \\
                                & 64                              & 0.105 ± 0.001              & 521        & 0.866 ± 0.008               & 543        & 0.892 ± 0.005               & 656         & 0.876 ± 0.005               & 698        \\
                                & 128                             & 0.104 ± 0.002              & 880        & 0.866 ± 0.037               & 982        & \textbf{0.924 ± 0.000}      & 1164        & 0.913 ± 0.003               & 1399       \\ \midrule
\multirow{4}{*}{2}              & 16                              & 0.106 ± 0.003              & 213        & 0.839 ± 0.016               & 259        & 0.757 ± 0.007               & 198         & 0.746 ± 0.002               & 252        \\
                                & 32                              & 0.105 ± 0.002              & 315        & 0.886 ± 0.009               & 353        & 0.859 ± 0.003               & 377         & 0.848 ± 0.002               & 476        \\
                                & 64                              & 0.109 ± 0.005              & 535        & 0.900 ± 0.006               & 599        & 0.903 ± 0.003               & 649         & 0.895 ± 0.002               & 734        \\
                                & 128                             & 0.109 ± 0.004              & 995        & 0.910 ± 0.005               & 1142       & 0.915 ± 0.003               & 1211        & 0.909 ± 0.003               & 1431       \\ \midrule
\multirow{4}{*}{3}              & 16                              & 0.108 ± 0.004              & 215        & 0.847 ± 0.009               & 237        & 0.700 ± 0.020               & 240         & 0.711 ± 0.005               & 263        \\
                                & 32                              & 0.107 ± 0.002              & 343        & 0.904 ± 0.001               & 398        & 0.838 ± 0.007               & 456         & 0.844 ± 0.003               & 467        \\
                                & 64                              & 0.103 ± 0.002              & 577        & 0.919 ± 0.003               & 557        & 0.898 ± 0.006               & 731         & 0.895 ± 0.004               & 843        \\
                                & 128                             & 0.107 ± 0.002              & 1097       & \textbf{0.927 ± 0.004}      & 1187       & 0.922 ± 0.001               & 1277        & \textbf{0.913 ± 0.001}      & 1565       \\ \midrule
\multirow{4}{*}{4}              & 16                              & 0.108 ± 0.004              & 233        & 0.847 ± 0.013               & 243        & 0.383 ± 0.069               & 267         & 0.632 ± 0.019               & 276        \\
                                & 32                              & 0.107 ± 0.001              & 363        & 0.900 ± 0.002               & 387        & 0.782 ± 0.031               & 423         & 0.800 ± 0.016               & 482        \\
                                & 64                              & 0.106 ± 0.001              & 573        & 0.916 ± 0.003               & 630        & 0.887 ± 0.009               & 692         & 0.872 ± 0.007               & 812        \\
                                & 128                             & 0.106 ± 0.002              & 1157       & 0.926 ± 0.005               & 1055       & 0.912 ± 0.004               & 1372        & 0.899 ± 0.005               & 1603       \\ \bottomrule
\end{tabular}
\end{table}

In the proposed method, we train for $200$ epochs with a batch size of $1024$ and a learning rate of $0.001$. The hyperparameters are optimized by grid search in the number of layers in vector field $n_l$ of $\left\{1,2,3,4\right\}$ and hidden vector dimensions $n_h$ of $\left\{16,32,64,128\right\}$. The best hyperparameters are highlighted with bold font in Table~\ref{tab:params5}.
Relative to the na\"ive Neural SDE, our suggested approaches deliver enhanced results through a more intricate model framework.

\paragraph{Robustness to missing data.} 
We utilized the Python library \texttt{torchcde}\footnote{\url{https://github.com/patrick-kidger/torchcde}}~\citep{kidger2020neural,morrill2021rough} for the interpolation scheme and the Python library \texttt{torchsde}\footnote{\url{https://github.com/google-research/torchsde}}~\citep{li2020scalable} to solve Neural SDEs for both the Neural SDE and the proposed methods. 
This study utilized the original Neural CDE source code\footnote{\url{https://github.com/patrick-kidger/NeuralCDE}}~\citep{kidger2020neural} for benchmark methods including GRU-$\Delta t$, GRU-D, GRU-ODE, ODE-RNN, and Neural CDE. Additionally, we employed ODE-LSTM using its original source code\footnote{\url{https://github.com/mlech26l/ode-lstms}}~\citep{lechner2020learning}, as well as Neural RDE\footnote{\url{https://github.com/jambo6/neuralRDEs}}~\citep{morrill2021neural}, ANCDE\footnote{\url{https://github.com/sheoyon-jhin/ANCDE}}~\citep{jhin2023attentive}, EXIT\footnote{\url{https://github.com/sheoyon-jhin/EXIT}}~\citep{jhin2022exit}, and LEAP\footnote{\url{https://github.com/alflsowl12/LEAP}}~\citep{jhin2023learnable}, using their original source code.
For Latent SDE, we formulate adjoint SDE proposed by~\citet{li2020scalable} and train backpropagation incorporated with the original KL divergence. For other benchmark methods, we used their source code. Please refer the original papers.

To ensure a fair comparison, this study used the original architecture across all methods. However, because the optimal hyperparameters vary between the methods and data, this study used the Python library \texttt{ray}\footnote{\url{https://github.com/ray-project/ray}}~\citep{moritz2018ray,liaw2018tune} for hyperparameter tuning.
With the use of this library, hyperparameters are optimally tuned to minimize validation loss automatically, contrasting with previous research that required manual tuning for each dataset and model. In our experiments, we identified the optimal hyperparameters for regular time series and applied them to irregular time series. For all methods, we employed the (explicit) Euler method as the ODE solver. 

In each model and dataset, the following hyperparameters are optimized to minimize validation loss: the number of layers $n_l$ and hidden vector dimensions $n_h$. For RNN-based methods such as RNN, LSTM, and GRU, the fully-connected layer employs hyperparameters. For NDE-based methods, they are utilized to embed layer and vector fields. 
The hyperparameters are optimized as follows: the learning rate $lr$ from $10^{-4}$ to $10^{-1}$ using log-uniform search; $n_l$ from $\left\{1,2,3,4\right\}$ using grid search; and $n_h$ from $\left\{16,32,64,128\right\}$ using grid search. The batch size was selected from $\left\{16,32,64,128\right\}$ with respect to total data size. 
All models were trained for 100 epochs, and the best model was selected based on the lowest validation loss, ensuring that the chosen model generalizes well to unseen data. These design choices and training strategies aim to create a robust, well-performing model that effectively addresses the given classification tasks.

\section{Detailed results of `Robustness to missing data' experiments}\label{supplementary:results_all}

Our analysis involved a thorough evaluation of 30 datasets, considering various scenarios such as regular time series and time series with missing data rates of 30\%, 50\%, and 70\%. Also, we considered different characteristics of time series, such as univariate or multivariate.

We tested Neural CDE using various controlled paths in Table~\ref{tab:ablation_all}: linear, rectilinear, natural cubic spline, and hermite cubic splines. Hermite cubic spline demonstrated superior performance compared to the Neural CDE with other control paths, leading us to select it for our proposed methods.

Table~\ref{tab:ablation_all} presents an ablation study conducted to assess the performance of the proposed methods: Neural LSDE, Neural LNSDE, and Neural GSDE. It also compares the na\"ive Neural SDE with various components. 
As expected, the careful design of Neural SDEs can improve the classification performance. 
Also, we can observe that incorporating the control path into the model leads to a significant improvement in classification performance. Furthermore, employing a nonlinear neural network for the diffusion term further enhances the performance compared to using an affine transformation.  

\begin{table}[!htbp]
\scriptsize\centering\captionsetup{justification=centering, skip=5pt}\renewcommand{\arraystretch}{1.0}
\caption{
Comparison of average accuracy and average cross-entropy loss from the ablation study. 
For Neural CDEs, different controlled paths were examined: (l)inear, (r)ectilinear, (n)atural cubic spline, and (h)ermite cubic splines. 
For Neural SDEs, $\zeta$ indicates whether the drift function incorporates the controlled path or not. `N' or `L' denote the architecture of diffusion function $\sigma$ with a nonlinear neural network or an affine function.
\textbf{The best} and {\ul the second best} are highlighted.)
}\label{tab:ablation_all}
\begin{tabular}{@{}lcccccccc@{}}
\toprule
\multirow{2.5}{*}{\textbf{Methods}}      & \multirow{2.5}{*}{\textbf{$\zeta$}} & \multirow{2.5}{*}{\textbf{$\sigma$}} & \multicolumn{2}{c}{\textbf{Univariate datasets (15)}} & \multicolumn{2}{c}{\textbf{Multivariate datasets (15)}} & \multicolumn{2}{c}{\textbf{All datasets (30)}}       \\ \cmidrule(l){4-5} \cmidrule(l){6-7} \cmidrule(l){8-9} 
                                         &                                   &                                    & \textbf{Accuracy}            & \textbf{Loss}          & \textbf{Accuracy}            & \textbf{Loss}          & \textbf{Accuracy}            & \textbf{Loss}                \\ \midrule
Neural CDE (l)                         & \multicolumn{2}{c}{\multirow{4}{*}{--}}                               & 0.593 (0.094)           & 0.809 (0.107)          & 0.770 (0.037)            & 0.595 (0.086)           & 0.682 (0.065)          & 0.702 (0.097)          \\
Neural CDE (r)                         & \multicolumn{2}{c}{}                                                   & 0.550 (0.096)           & 0.879 (0.090)          & 0.716 (0.045)            & 0.715 (0.105)           & 0.633 (0.071)          & 0.797 (0.098)          \\
Neural CDE (n)                         & \multicolumn{2}{c}{}                                                   & 0.597 (0.083)           & 0.795 (0.084)          & 0.773 (0.042)            & 0.590 (0.095)           & 0.685 (0.063)          & 0.692 (0.089)          \\
Neural CDE (h)                         & \multicolumn{2}{c}{}                                                   & 0.611 (0.092)           & 0.804 (0.119)          & 0.777 (0.044)            & \textbf{0.549 (0.101)}  & 0.694 (0.068)          & 0.676 (0.110)          \\ \midrule
\multirow{4}{*}{\textbf{Neural SDE}}   & \multirow{2}{*}{O}                & N                                  & 0.615 (0.090)           & {\ul 0.736 (0.089)}    & 0.760 (0.036)            & 0.618 (0.092)           & 0.688 (0.063)          & 0.677 (0.091)          \\
                                       &                                   & L                                  & 0.535 (0.095)           & 0.838 (0.094)          & 0.752 (0.040)            & 0.633 (0.087)           & 0.643 (0.068)          & 0.736 (0.090)          \\
                                       & \multirow{2}{*}{X}                & N                                  & 0.516 (0.084)           & 0.914 (0.069)          & 0.515 (0.045)            & 1.248 (0.085)           & 0.516 (0.064)          & 1.081 (0.077)          \\
                                       &                                   & L                                  & 0.498 (0.087)           & 0.929 (0.078)          & 0.514 (0.050)            & 1.255 (0.081)           & 0.506 (0.068)          & 1.092 (0.080)          \\ \midrule
\multirow{4}{*}{\textbf{Neural LSDE}}  & \multirow{2}{*}{O}                & N                                  & 0.604 (0.081)           & 0.752 (0.084)          & \textbf{0.783 (0.031)}   & {\ul 0.572 (0.080)}     & 0.694 (0.056)          & {\ul 0.662 (0.082)}    \\
                                       &                                   & L                                  & 0.533 (0.089)           & 0.877 (0.086)          & 0.745 (0.038)            & 0.668 (0.085)           & 0.639 (0.064)          & 0.772 (0.085)          \\
                                       & \multirow{2}{*}{X}                & N                                  & 0.530 (0.069)           & 0.856 (0.060)          & 0.527 (0.054)            & 1.177 (0.082)           & 0.528 (0.060)          & 1.039 (0.074)          \\
                                       &                                   & L                                  & 0.505 (0.064)           & 0.912 (0.061)          & 0.518 (0.057)            & 1.210 (0.101)           & 0.512 (0.059)          & 1.083 (0.088)          \\ \midrule
\multirow{4}{*}{\textbf{Neural LNSDE}} & \multirow{2}{*}{O}                & N                                  & \textbf{0.654 (0.073)}  & \textbf{0.701 (0.091)} & {\ul 0.780 (0.029)}      & 0.577 (0.070)           & \textbf{0.717 (0.051)} & \textbf{0.639 (0.080)} \\
                                       &                                   & L                                  & 0.586 (0.087)           & 0.765 (0.083)          & 0.764 (0.044)            & 0.617 (0.108)           & 0.675 (0.066)          & 0.691 (0.096)          \\
                                       & \multirow{2}{*}{X}                & N                                  & 0.532 (0.077)           & 0.878 (0.060)          & 0.502 (0.051)            & 1.254 (0.082)           & 0.517 (0.064)          & 1.066 (0.071)          \\
                                       &                                   & L                                  & 0.528 (0.085)           & 0.890 (0.069)          & 0.510 (0.047)            & 1.257 (0.089)           & 0.519 (0.066)          & 1.074 (0.079)          \\ \midrule
\multirow{4}{*}{\textbf{Neural GSDE}}  & \multirow{2}{*}{O}                & N                                  & {\ul 0.633 (0.091)}     & 0.742 (0.086)          & 0.772 (0.036)            & 0.598 (0.077)           & {\ul 0.703 (0.063)}    & 0.670 (0.081)          \\
                                       &                                   & L                                  & 0.572 (0.083)           & 0.796 (0.084)          & 0.748 (0.039)            & 0.653 (0.094)           & 0.660 (0.061)          & 0.724 (0.089)          \\
                                       & \multirow{2}{*}{X}                & N                                  & 0.531 (0.074)           & 0.868 (0.052)          & 0.510 (0.045)            & 1.261 (0.093)           & 0.520 (0.060)          & 1.064 (0.072)          \\
                                       &                                   & L                                  & 0.525 (0.078)           & 0.893 (0.072)          & 0.509 (0.052)            & 1.261 (0.093)           & 0.517 (0.065)          & 1.077 (0.083)          \\ \bottomrule
\end{tabular}
\end{table}

Figures~\ref{fig:out_univariate} and \ref{fig:out_multivariate} display the classification outcomes for all datasets over the four missing rate scenarios. Each point in figure showcases the average classification score for every dataset and missing rate per method.
The results highlight the variability in performance based on the dataset and its specific characteristics, emphasizing the importance of selecting an appropriate model that suits the specific properties of the time series data.
These characteristics may include the degree of missing data, the presence of noise, the complexity of patterns, and the distribution of the data value. 

In conclusion, it is vital to consider the inherent properties of the time series data when choosing the most appropriate model for a given dataset. By doing so, we can develop robust and accurate models that address the specific challenges of diverse time series datasets, resulting in better generalization and performance across various applications and domains.

\begin{figure*}[!htb]
\centering\captionsetup{justification=centering, skip=5pt}
\captionsetup[subfigure]{justification=centering, skip=5pt}
\captionsetup[subfigure]{font=scriptsize, labelformat=empty}
\subfloat[ArrowHead]{
  \includegraphics[clip,width=0.31\linewidth]{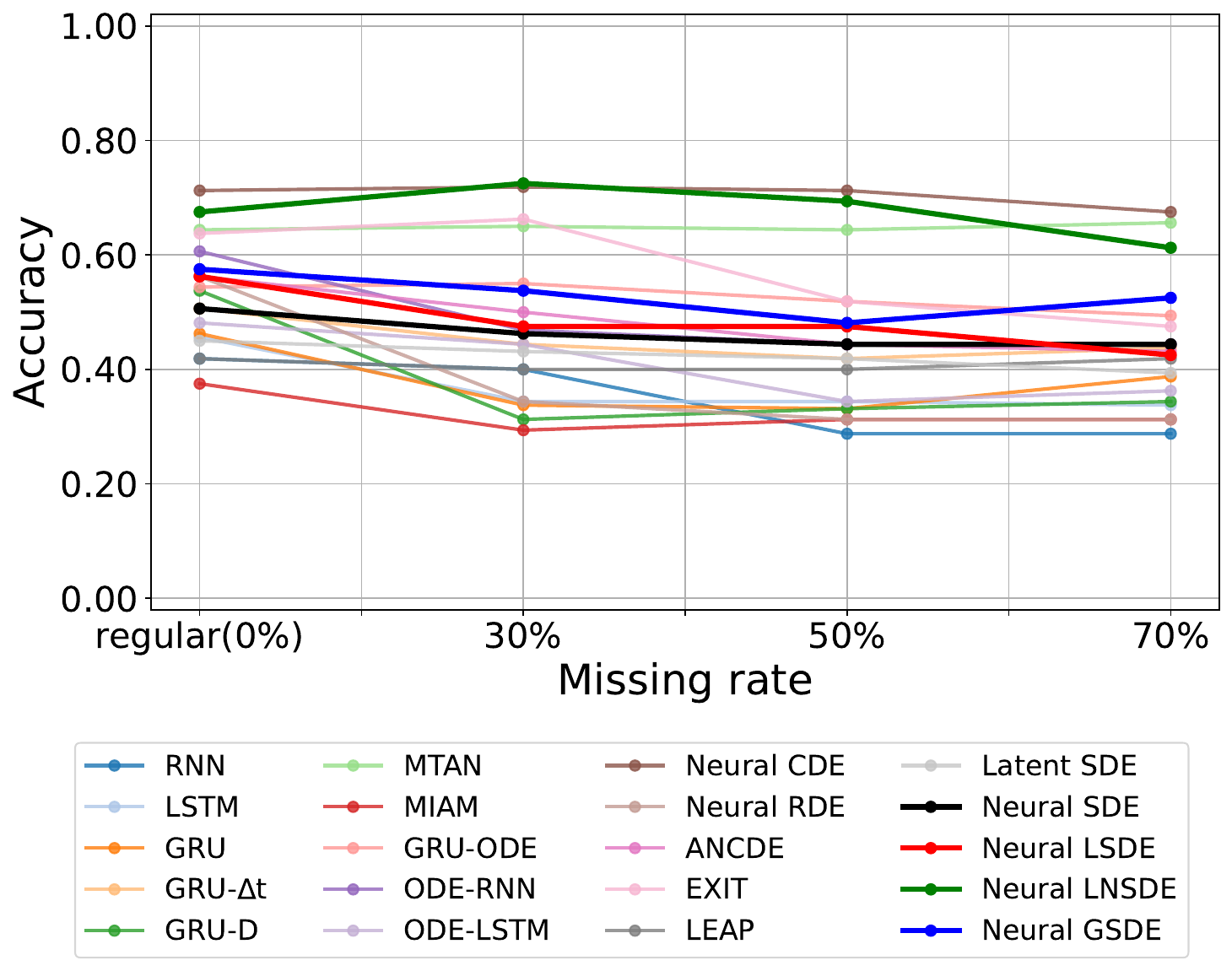}} \hfil
\subfloat[Car]{
  \includegraphics[clip,width=0.31\linewidth]{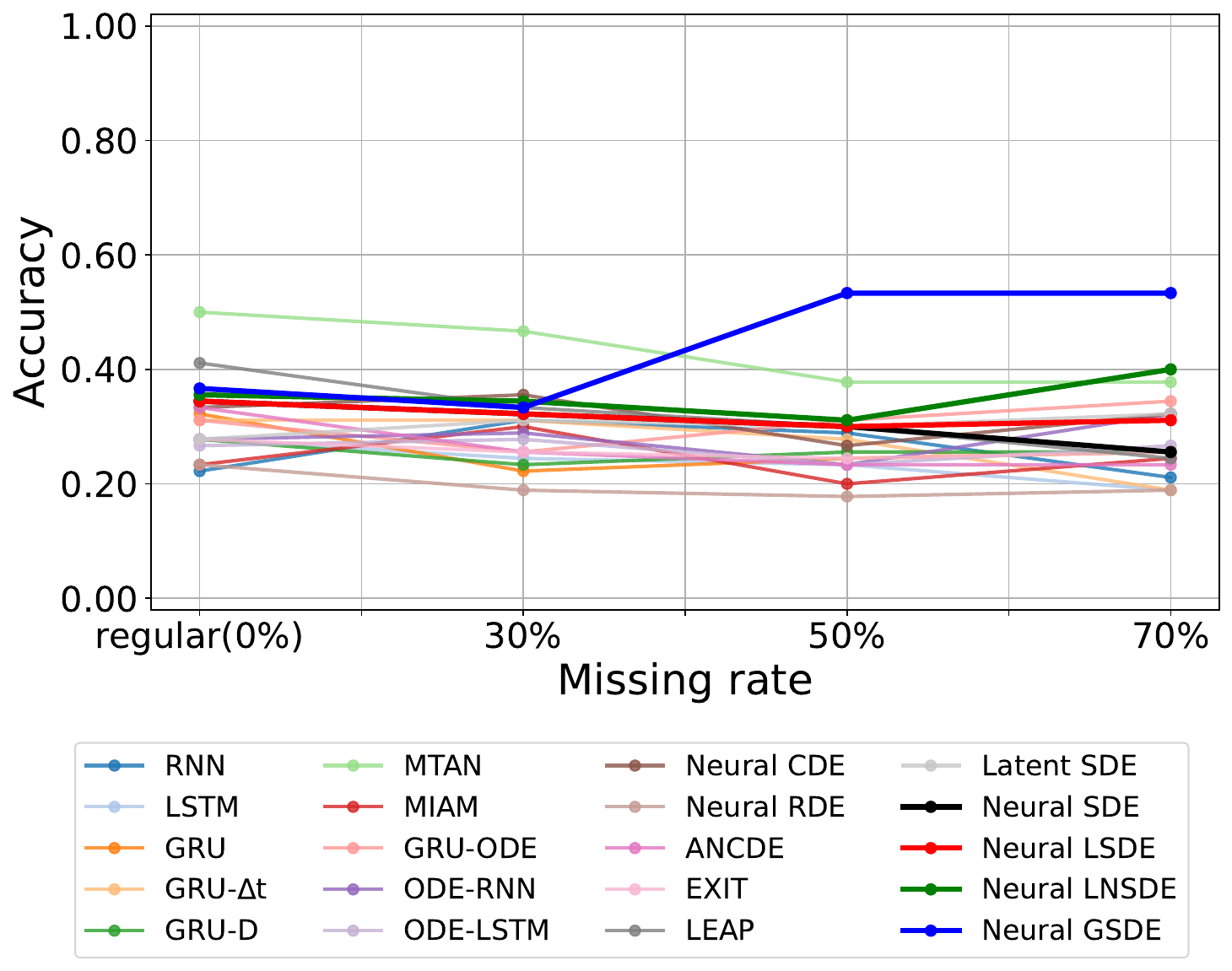}} \hfil
\subfloat[Coffee]{
  \includegraphics[clip,width=0.31\linewidth]{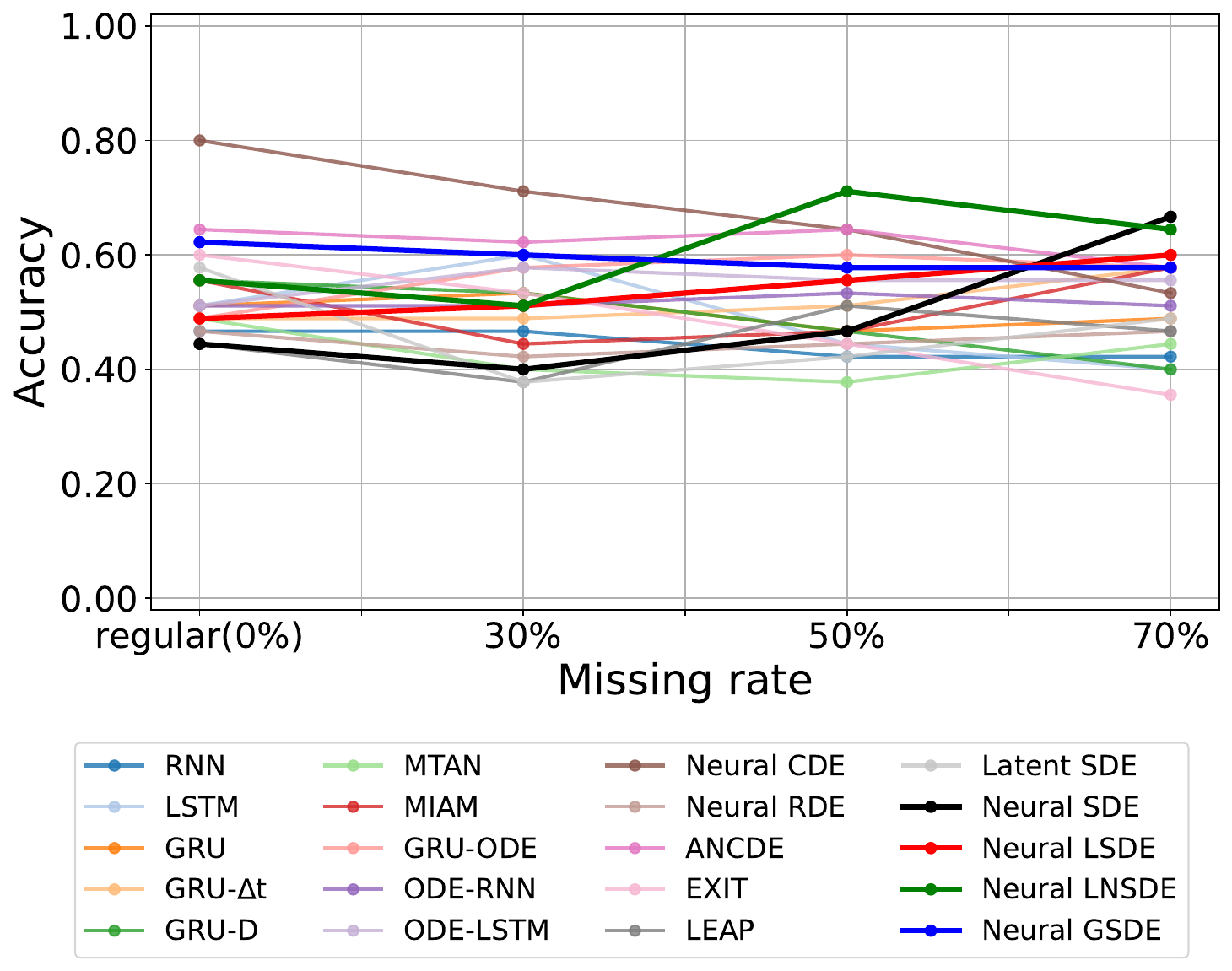}} \\
\subfloat[GunPoint]{
  \includegraphics[clip,width=0.31\linewidth]{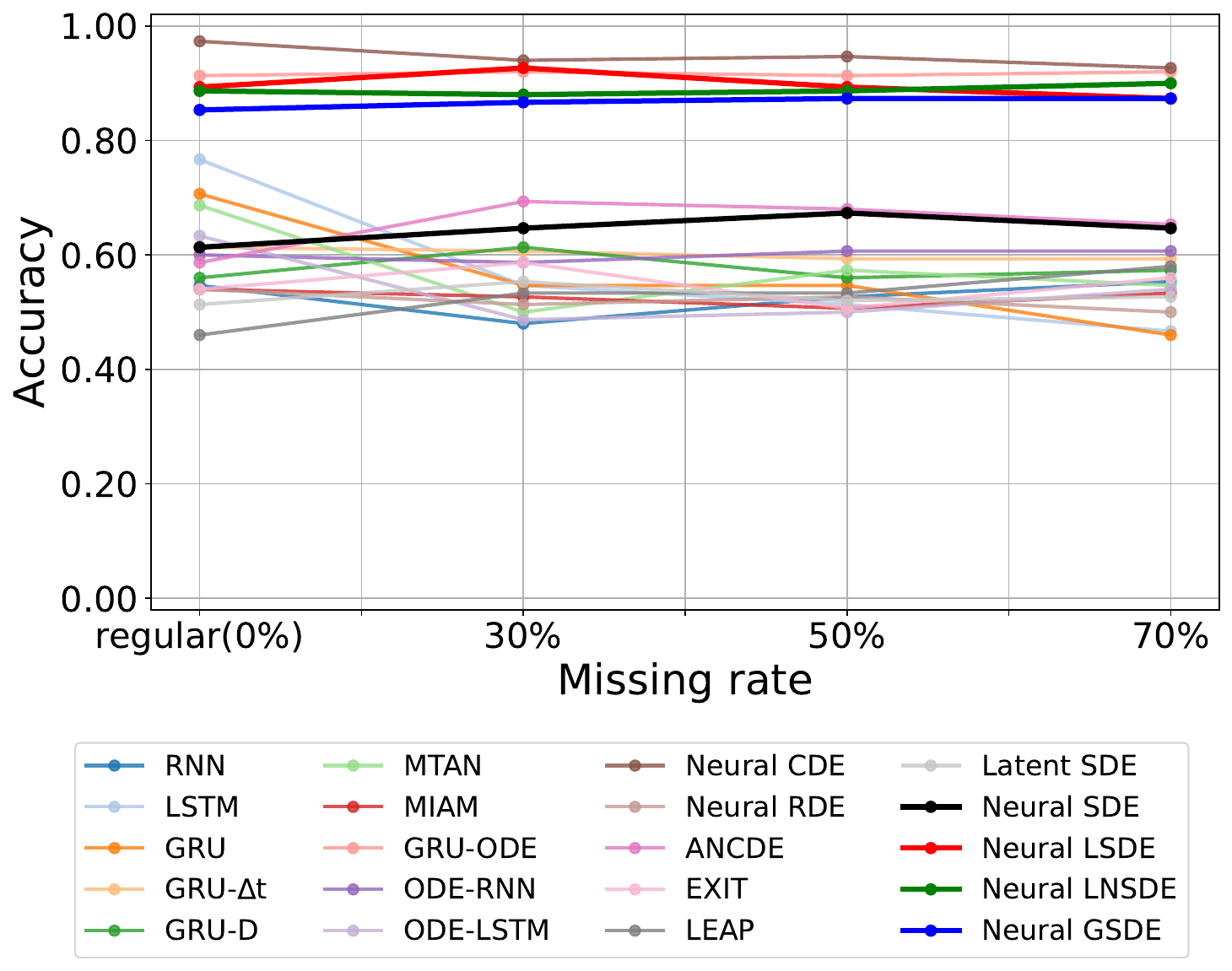}} \hfil
\subfloat[Herring]{
  \includegraphics[clip,width=0.31\linewidth]{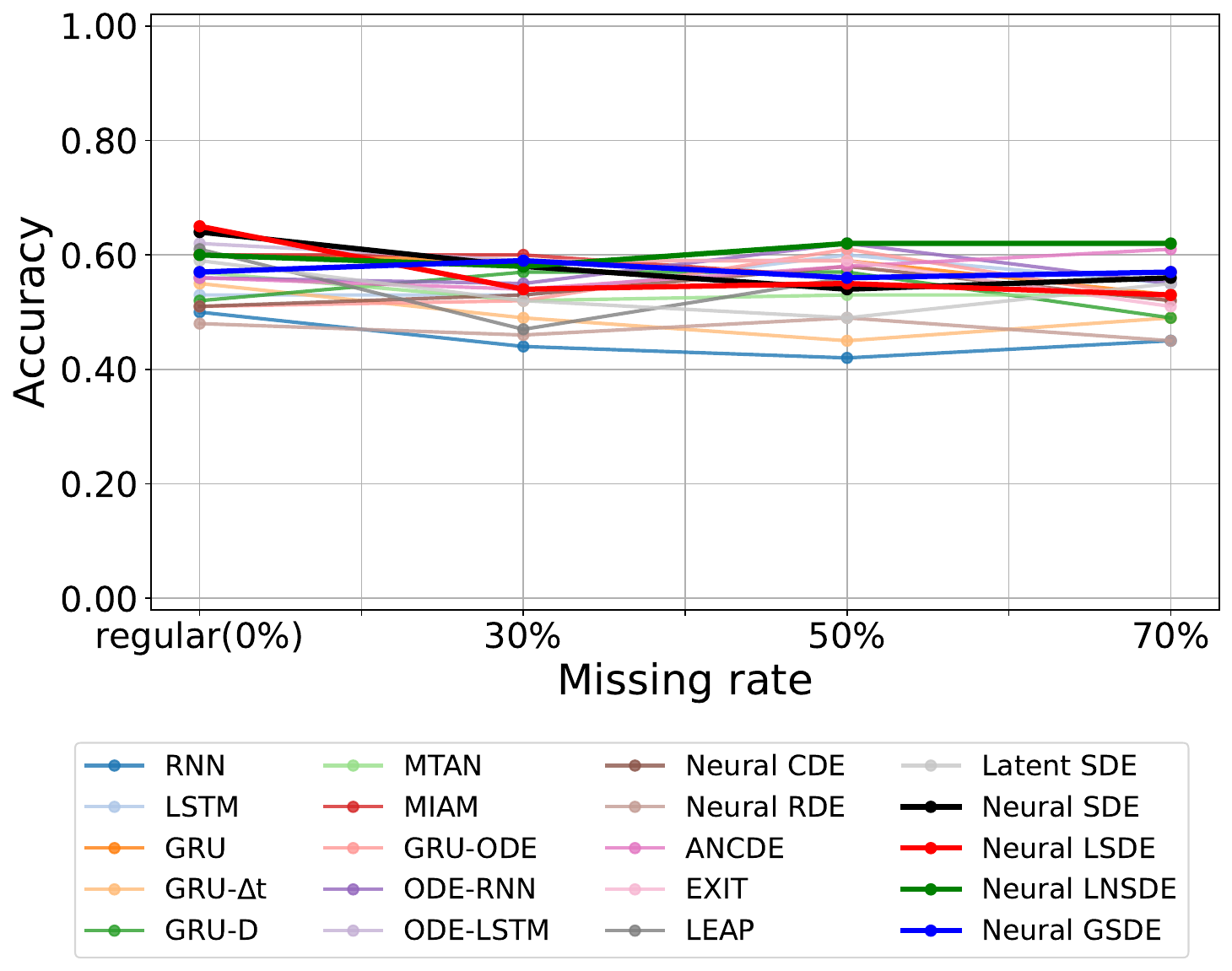}} \hfil
\subfloat[Lightning2]{
  \includegraphics[clip,width=0.31\linewidth]{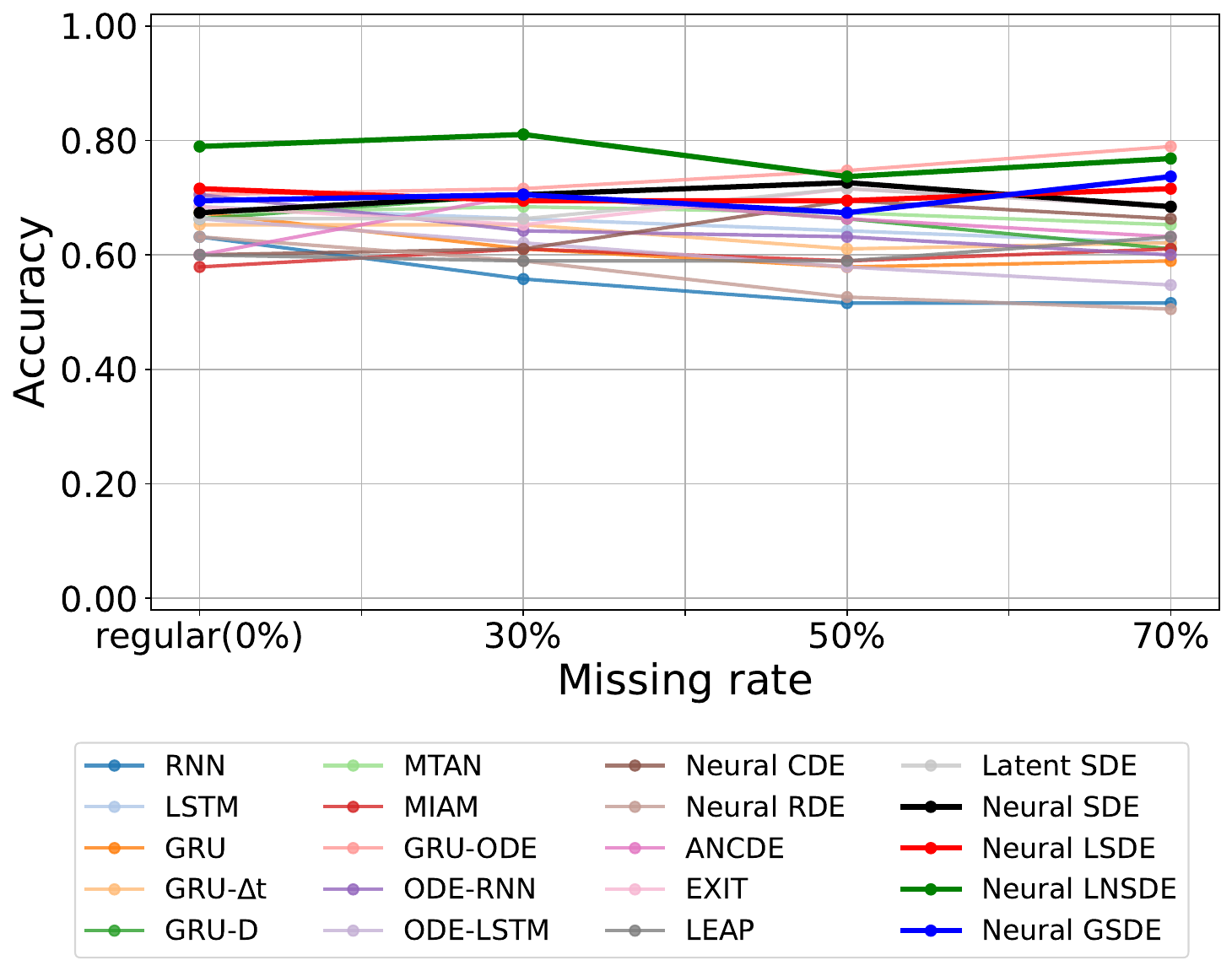}} \\
\subfloat[Lightning7]{
  \includegraphics[clip,width=0.31\linewidth]{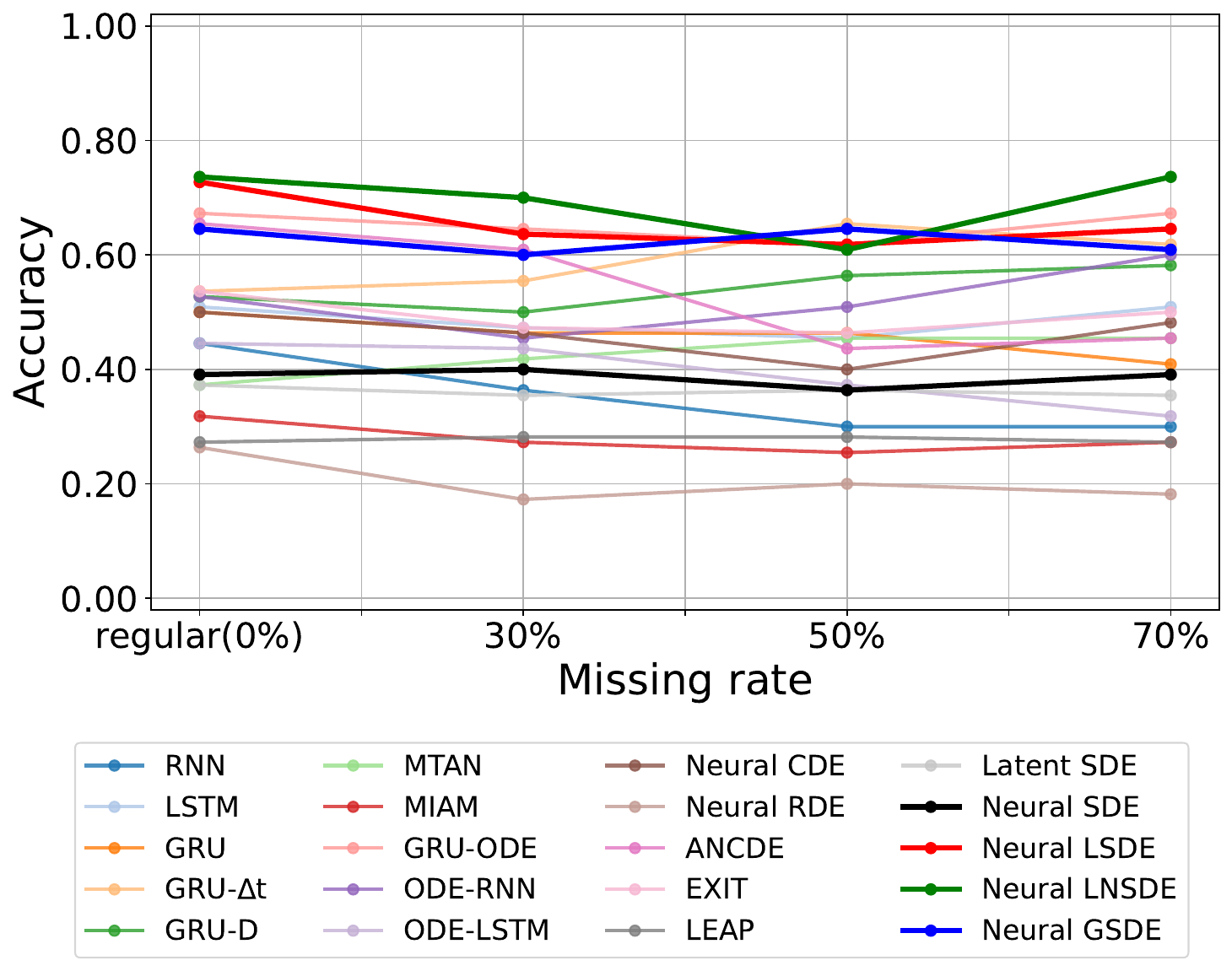}} \hfil
\subfloat[Meat]{
  \includegraphics[clip,width=0.31\linewidth]{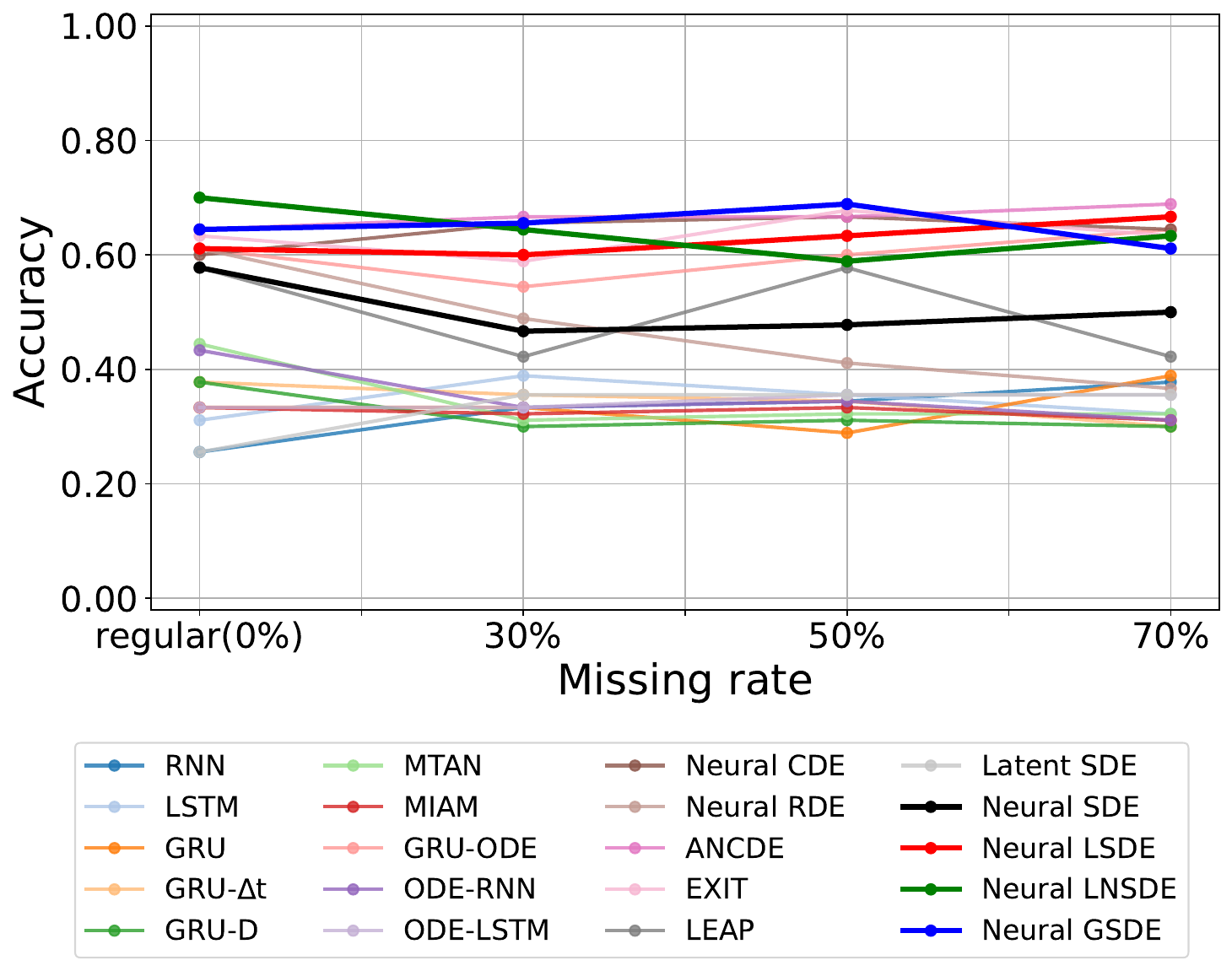}} \hfil
\subfloat[OliveOil]{
  \includegraphics[clip,width=0.31\linewidth]{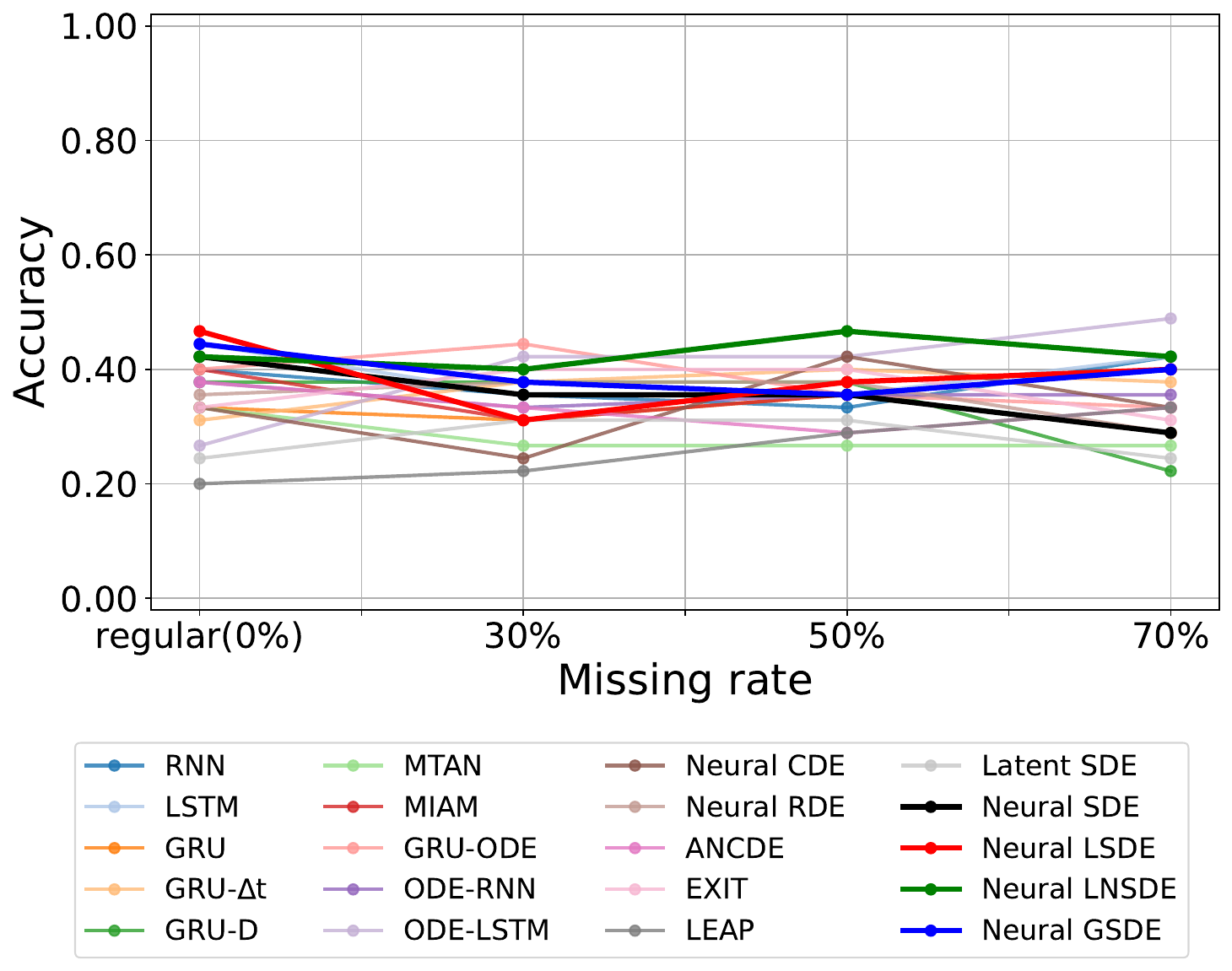}} \\
\subfloat[Rock]{
  \includegraphics[clip,width=0.31\linewidth]{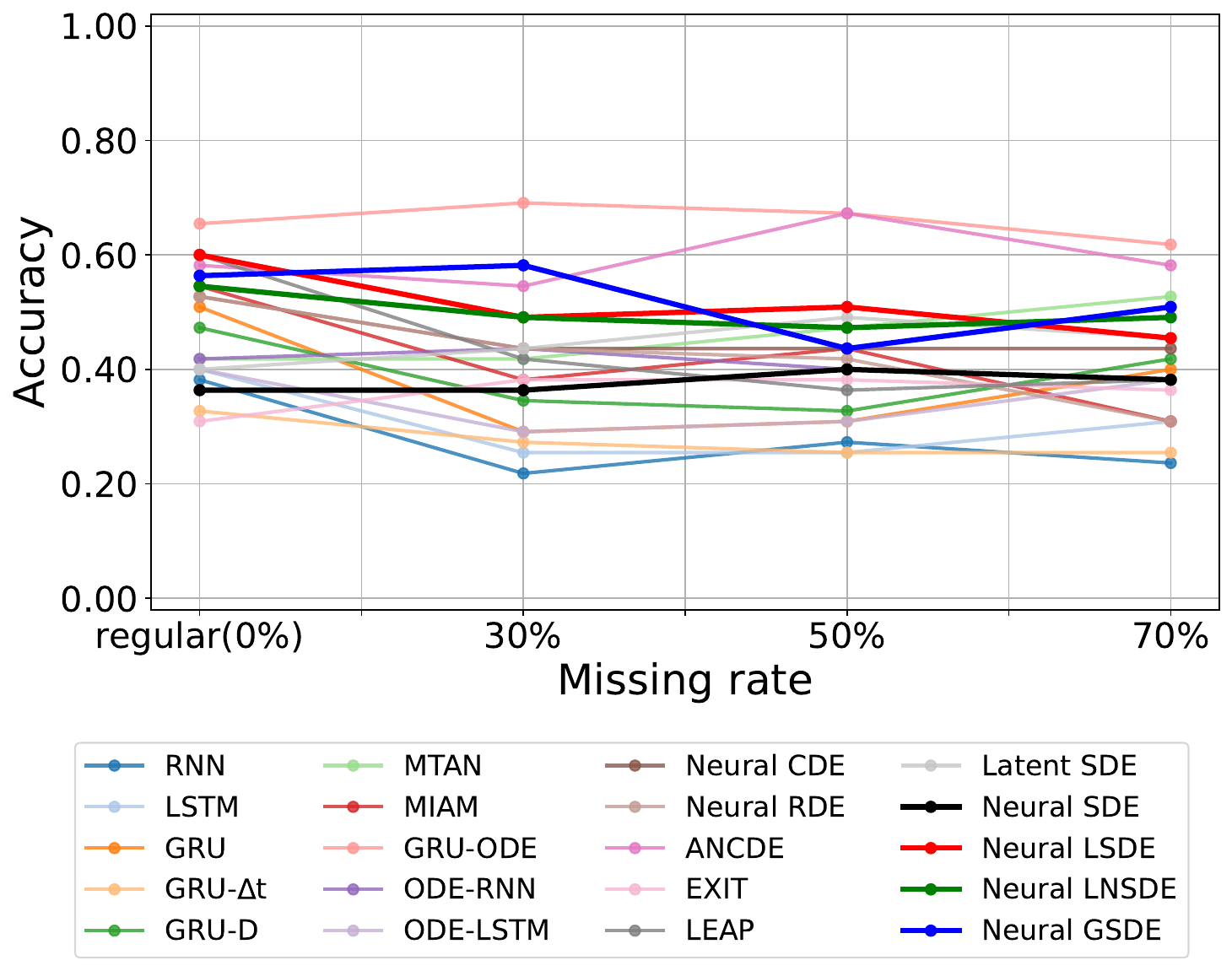}} \hfil
\subfloat[SmoothSubspace]{
  \includegraphics[clip,width=0.31\linewidth]{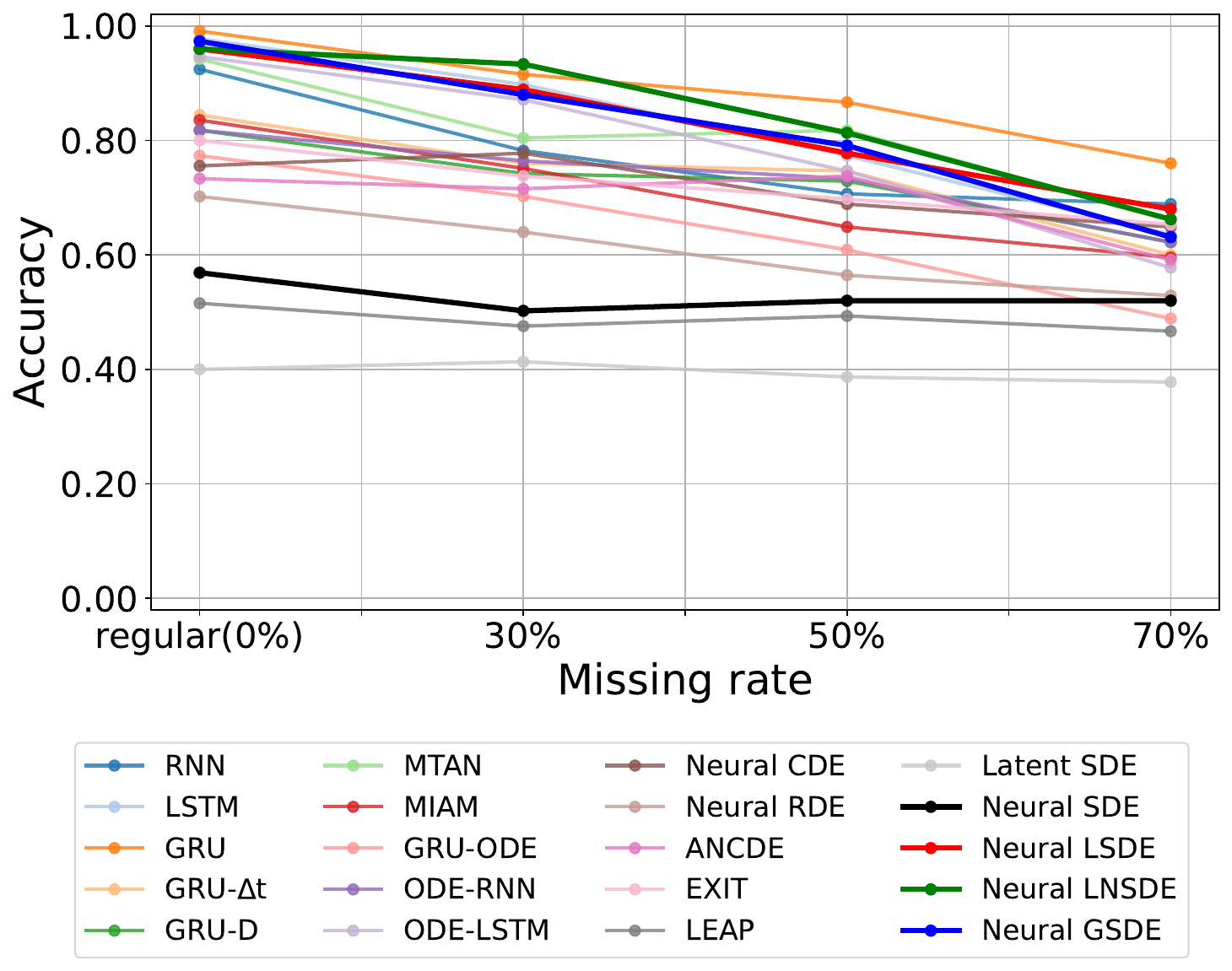}} \hfil
\subfloat[ToeSegmentation1]{
  \includegraphics[clip,width=0.31\linewidth]{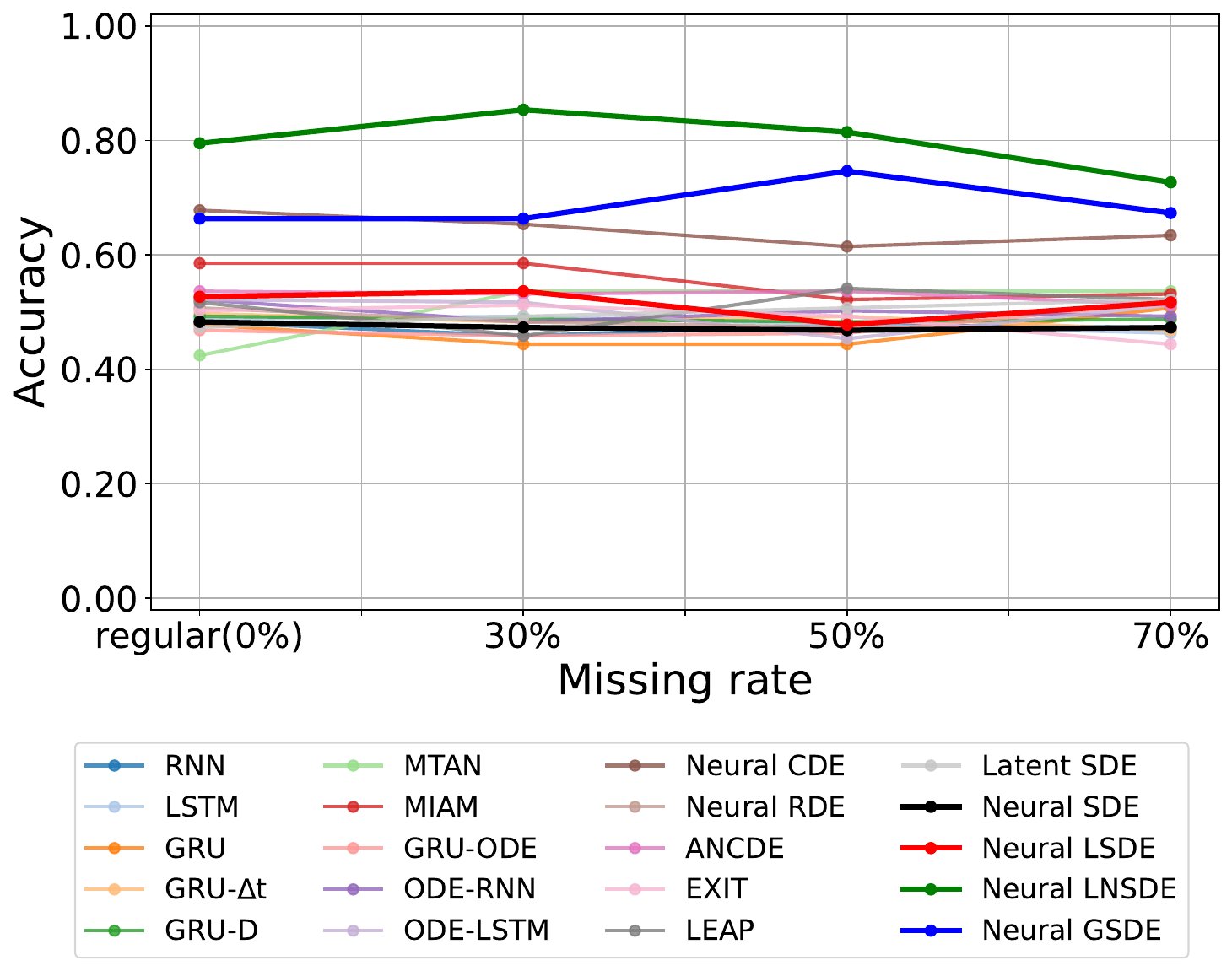}} \\
\subfloat[ToeSegmentation2]{
  \includegraphics[clip,width=0.31\linewidth]{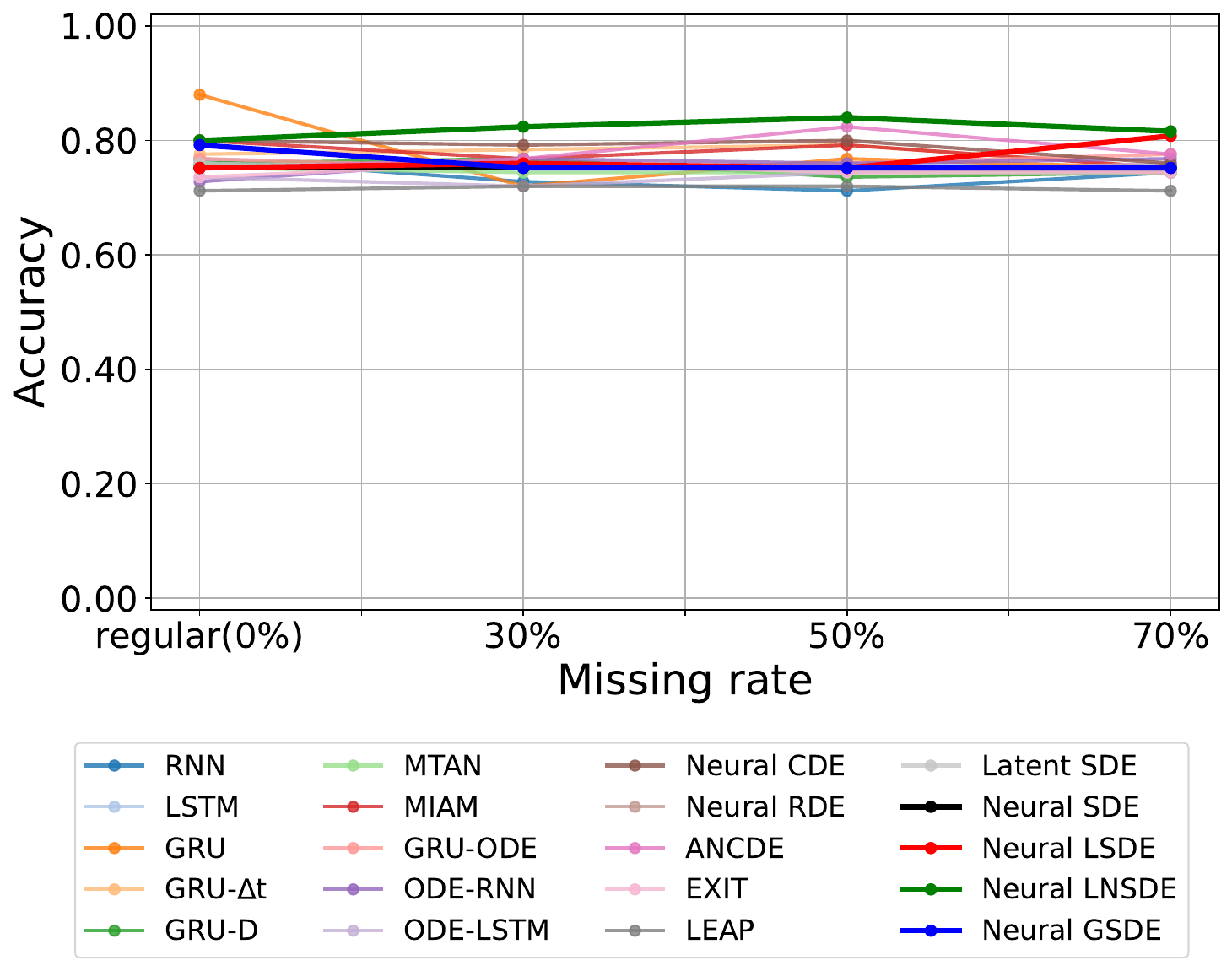}} \hfil
\subfloat[Trace]{
  \includegraphics[clip,width=0.31\linewidth]{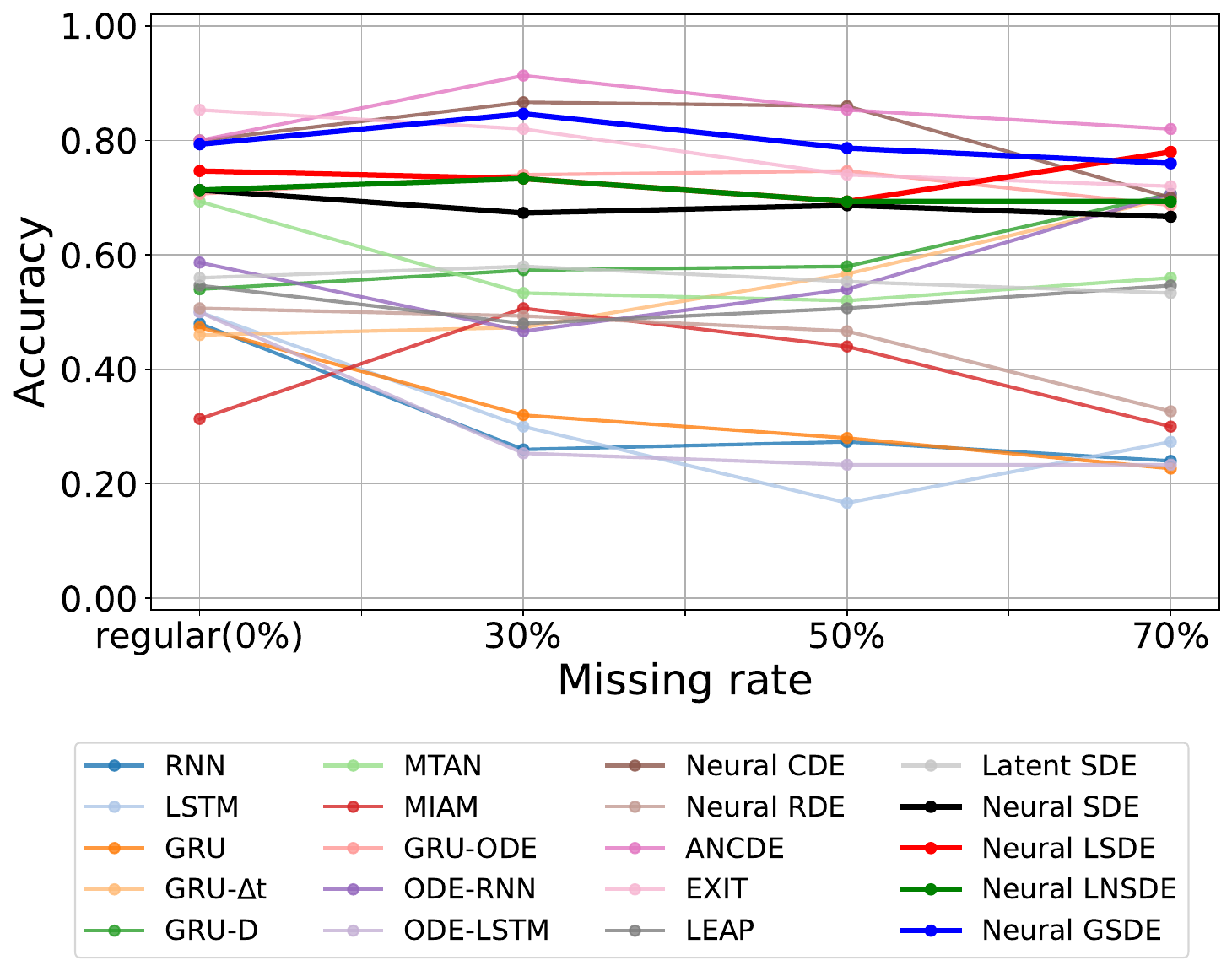}} \hfil
\subfloat[Wine]{
  \includegraphics[clip,width=0.31\linewidth]{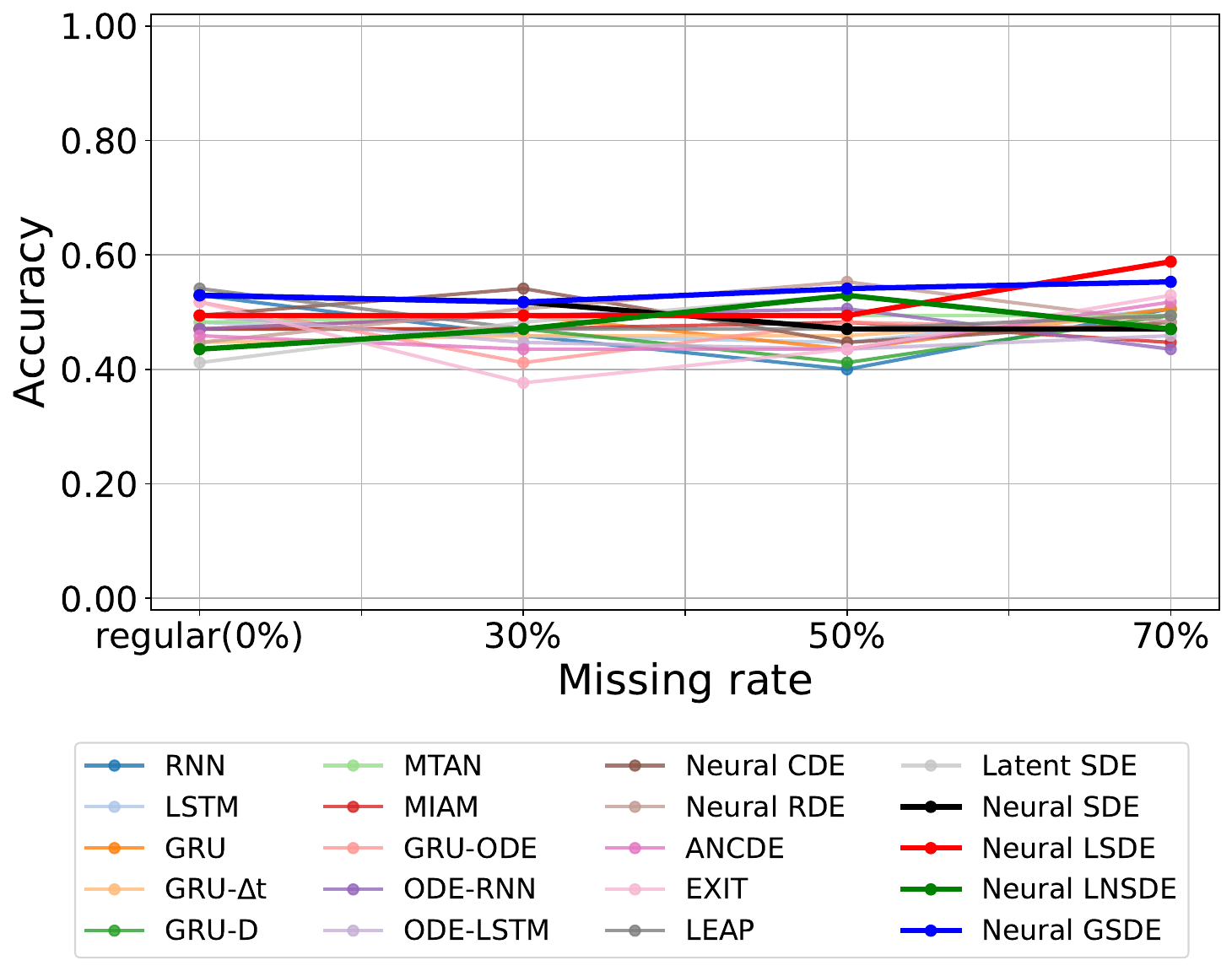}}
\caption{Classification result of the 15 univariate datasets with all four settings}
\label{fig:out_univariate}
\end{figure*}
\begin{figure*}[!htb]
\centering\captionsetup{justification=centering, skip=5pt}
\captionsetup[subfigure]{justification=centering, skip=5pt}
\captionsetup[subfigure]{font=scriptsize, labelformat=empty}
\subfloat[ArticularyWordRecognition]{
  \includegraphics[clip,width=0.31\linewidth]{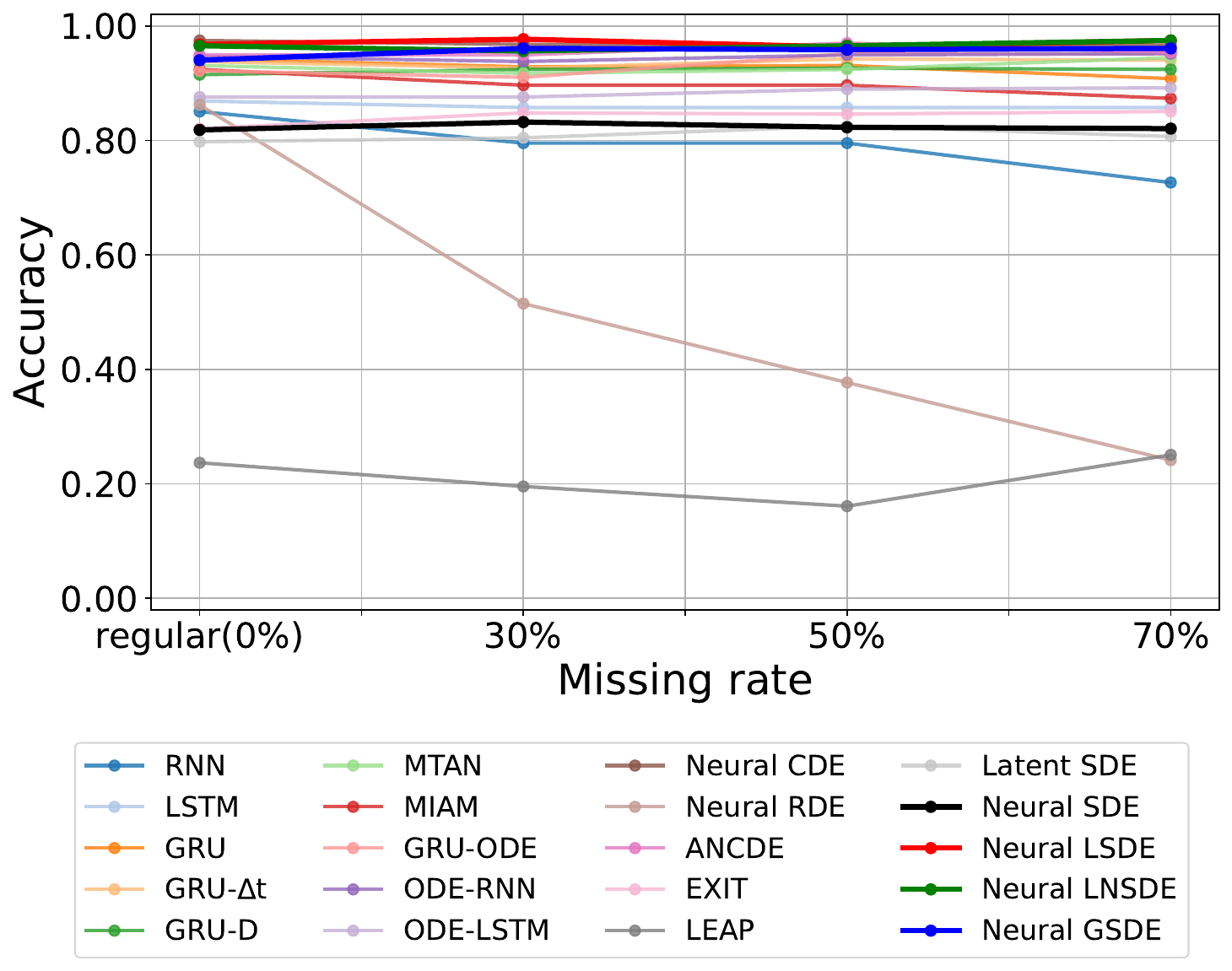}} \hfil
\subfloat[BasicMotions]{
  \includegraphics[clip,width=0.31\linewidth]{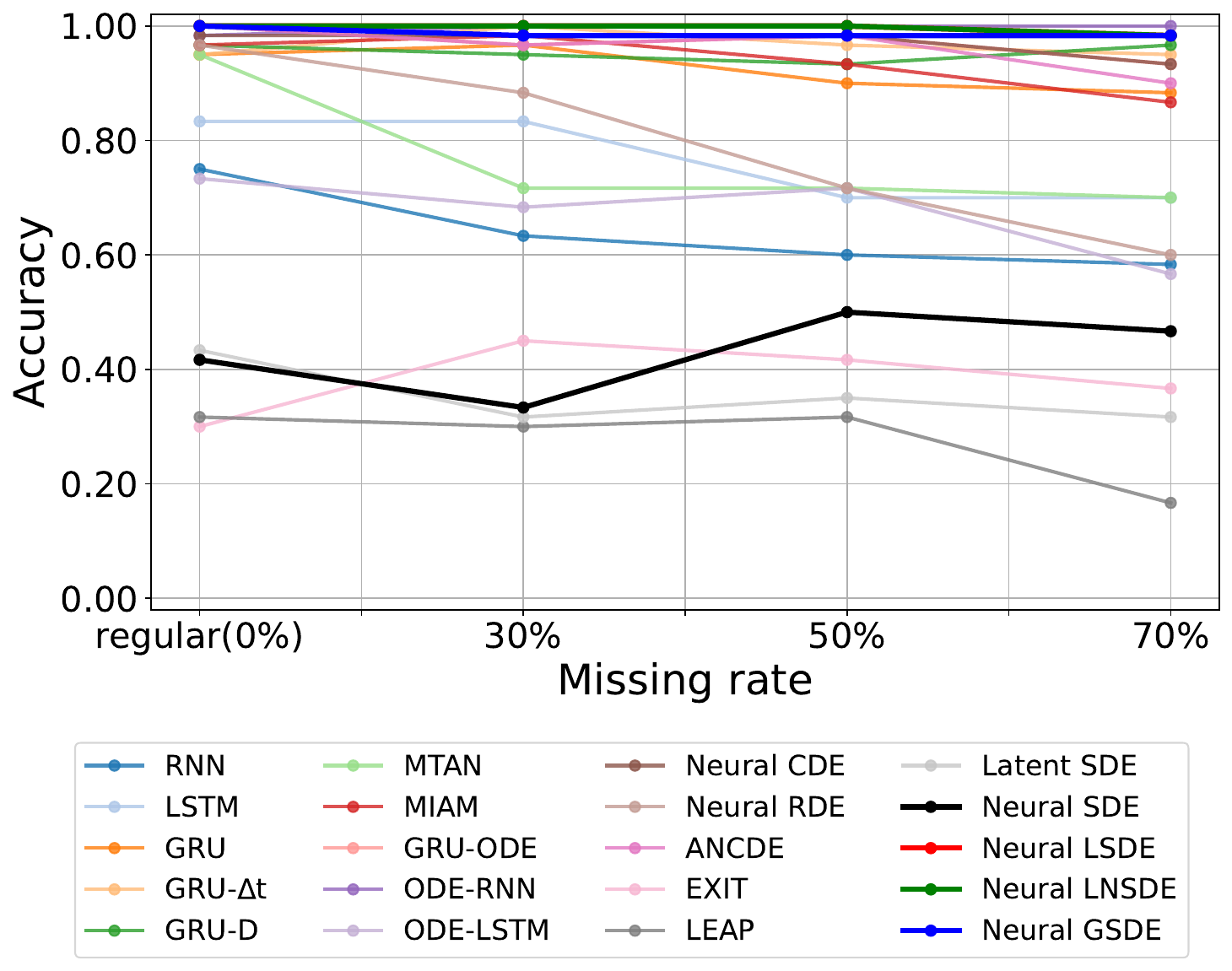}} \hfil
\subfloat[CharacterTrajectories]{
  \includegraphics[clip,width=0.31\linewidth]{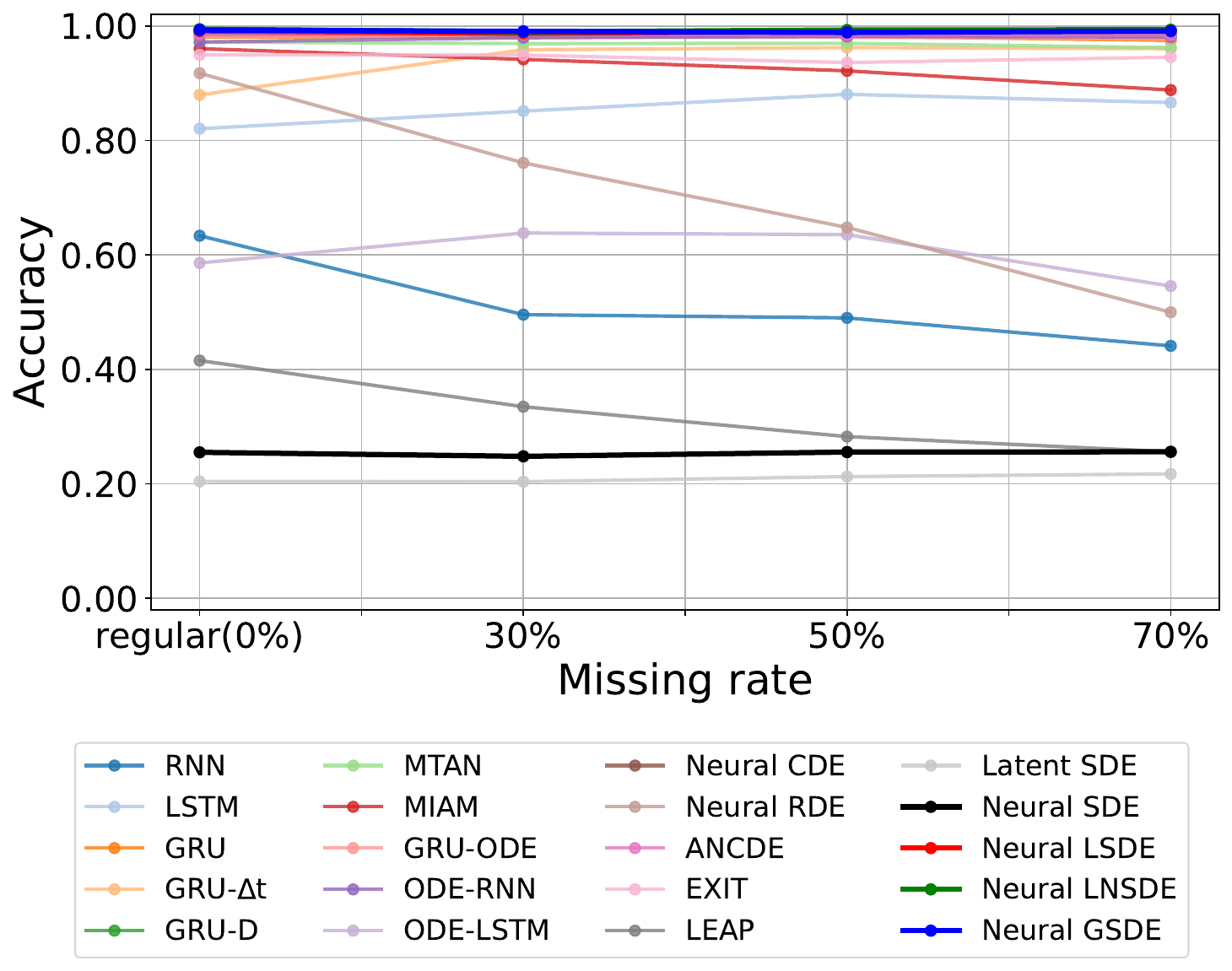}} \\
\subfloat[Cricket]{
  \includegraphics[clip,width=0.31\linewidth]{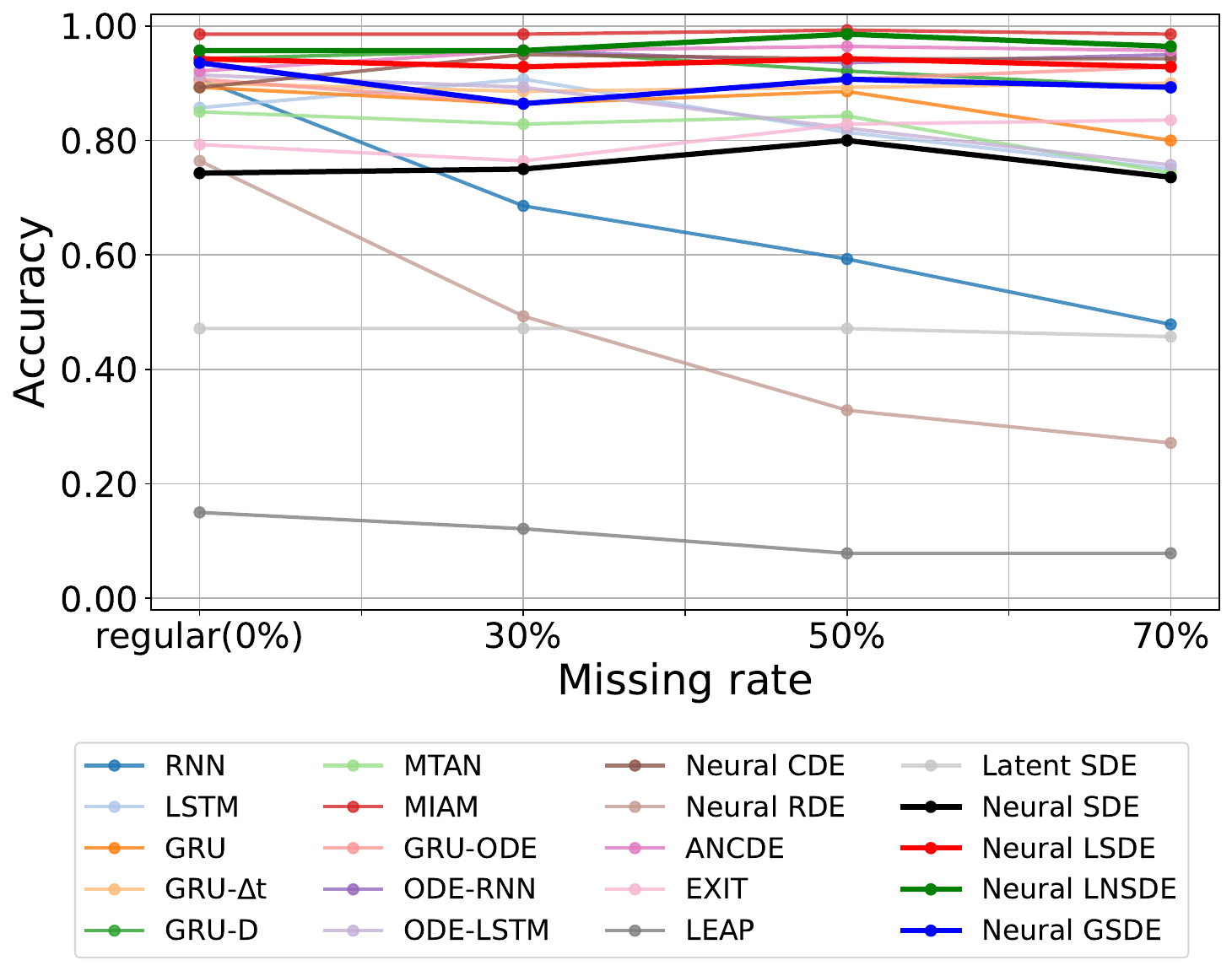}} \hfil
\subfloat[Epilepsy]{
  \includegraphics[clip,width=0.31\linewidth]{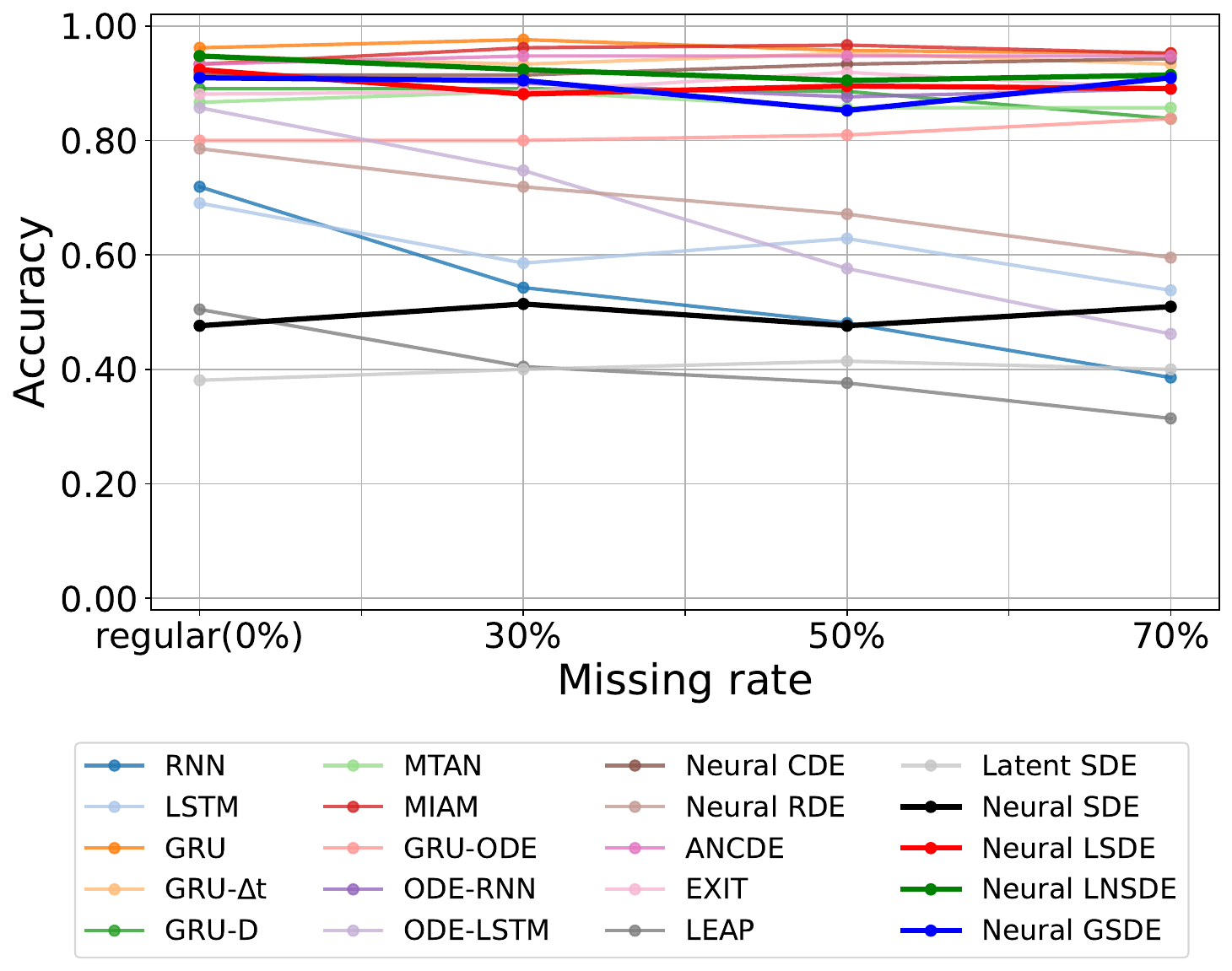}} \hfil
\subfloat[ERing]{
  \includegraphics[clip,width=0.31\linewidth]{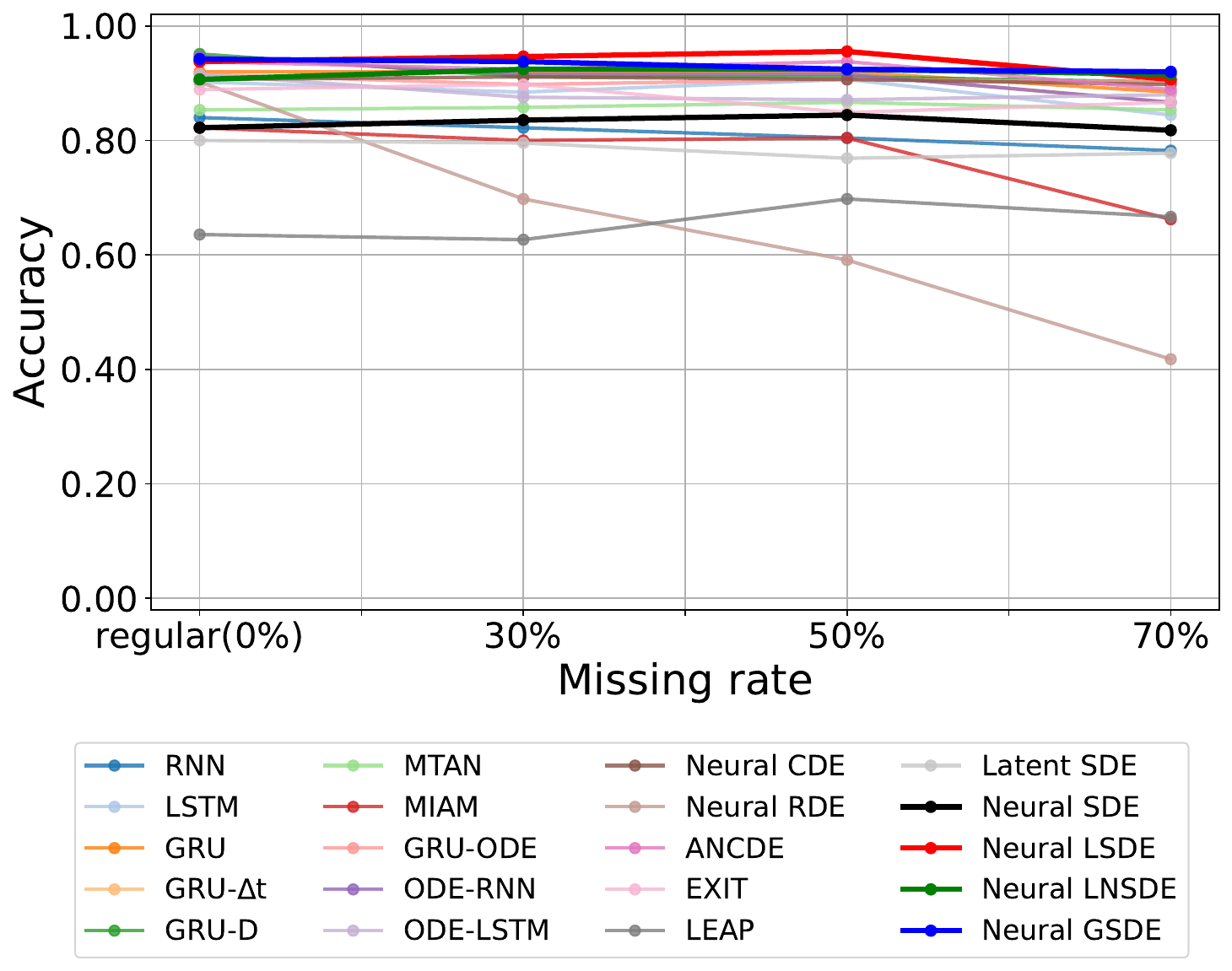}} \\
\subfloat[EthanolConcentration]{
  \includegraphics[clip,width=0.31\linewidth]{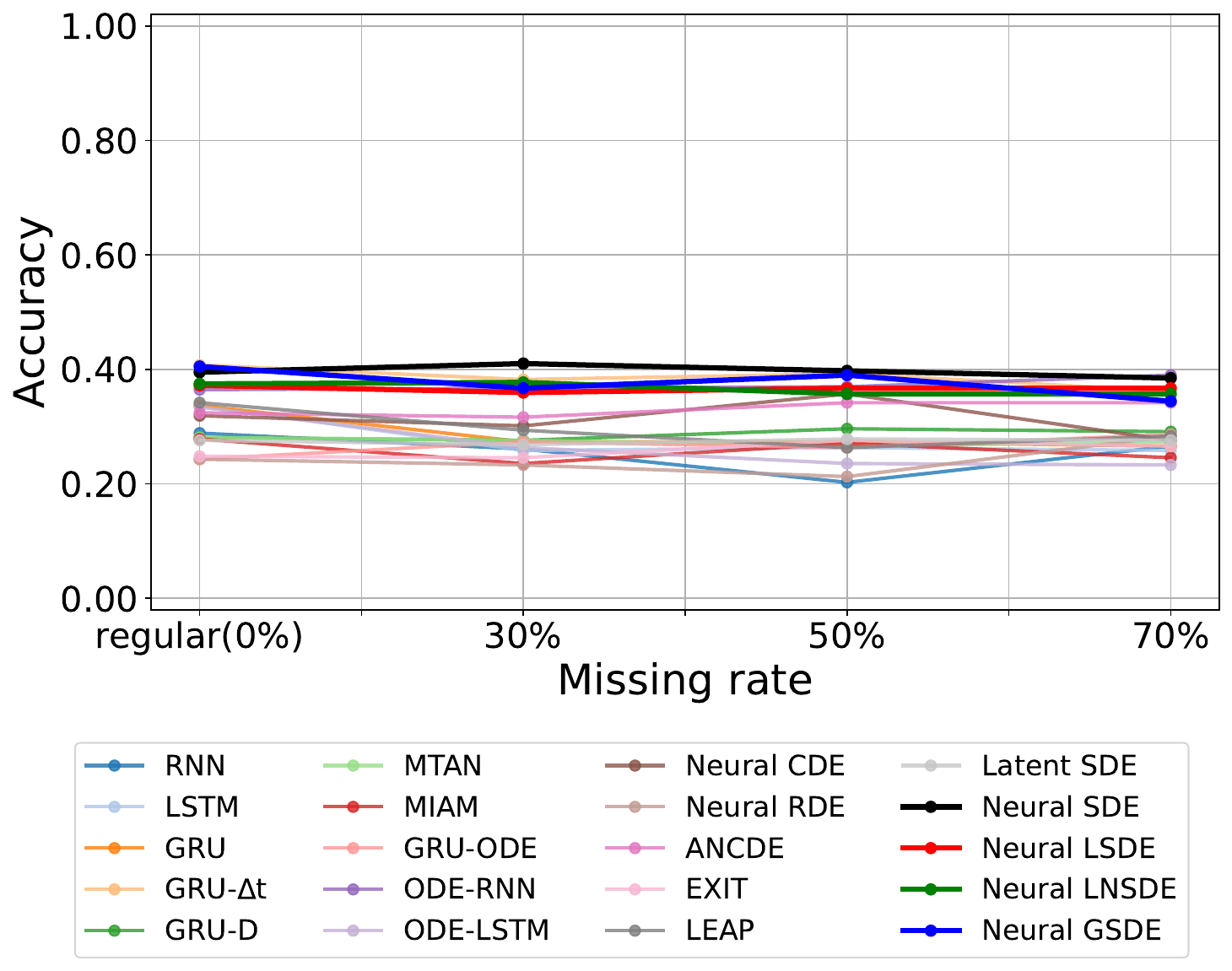}} \hfil
\subfloat[EyesOpenShut]{
  \includegraphics[clip,width=0.31\linewidth]{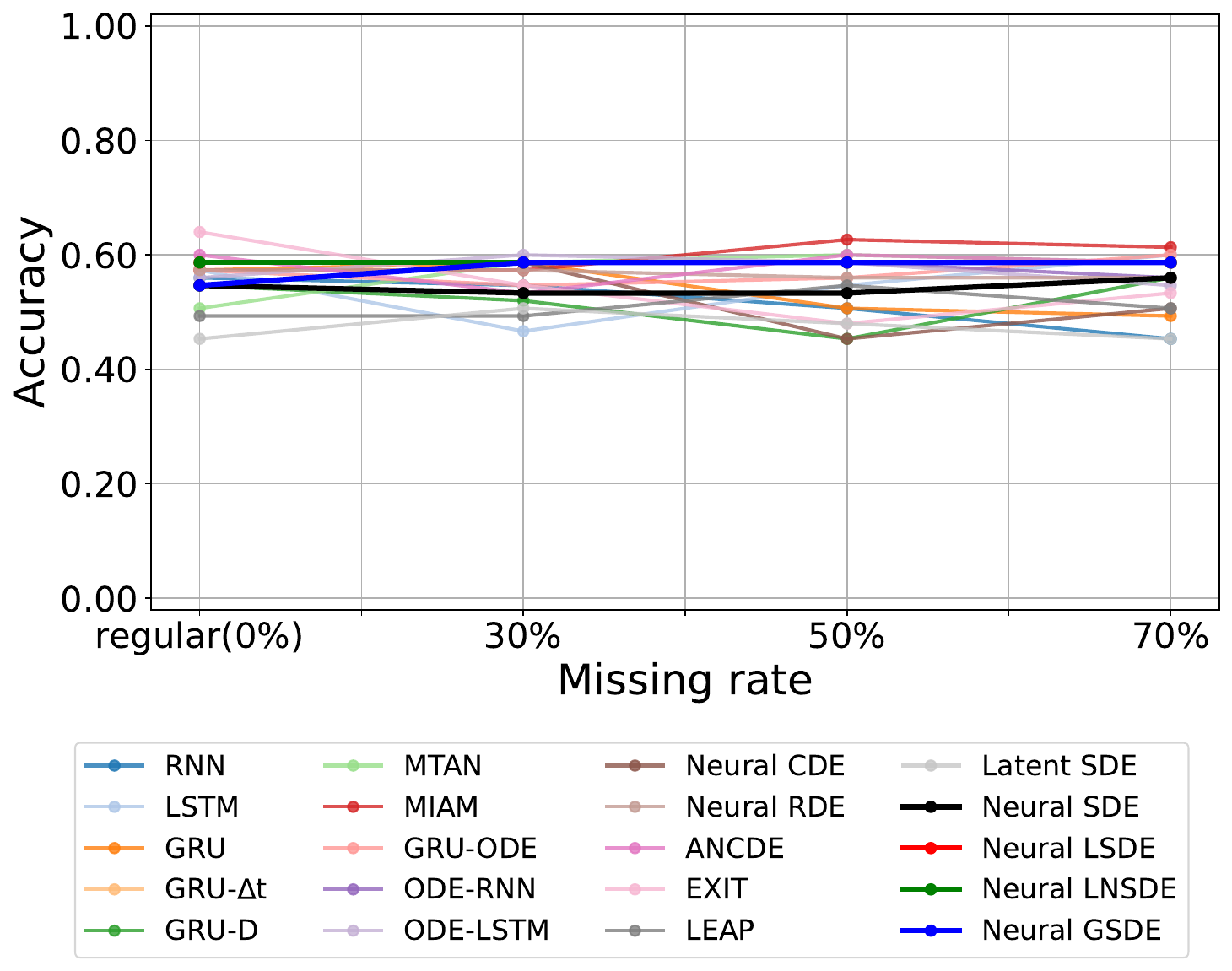}} \hfil
\subfloat[FingerMovements]{
  \includegraphics[clip,width=0.31\linewidth]{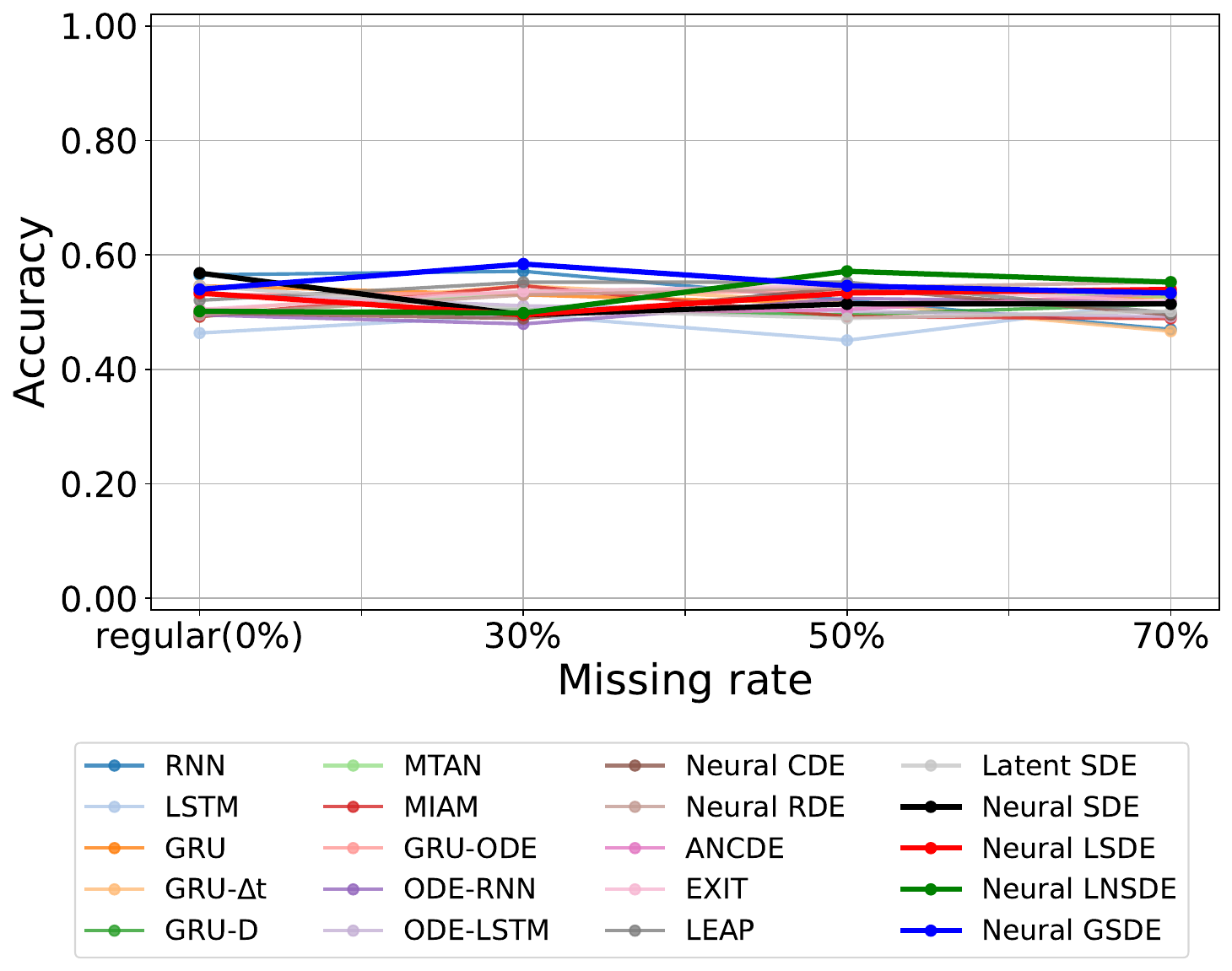}} \\
\subfloat[Handwriting]{
  \includegraphics[clip,width=0.31\linewidth]{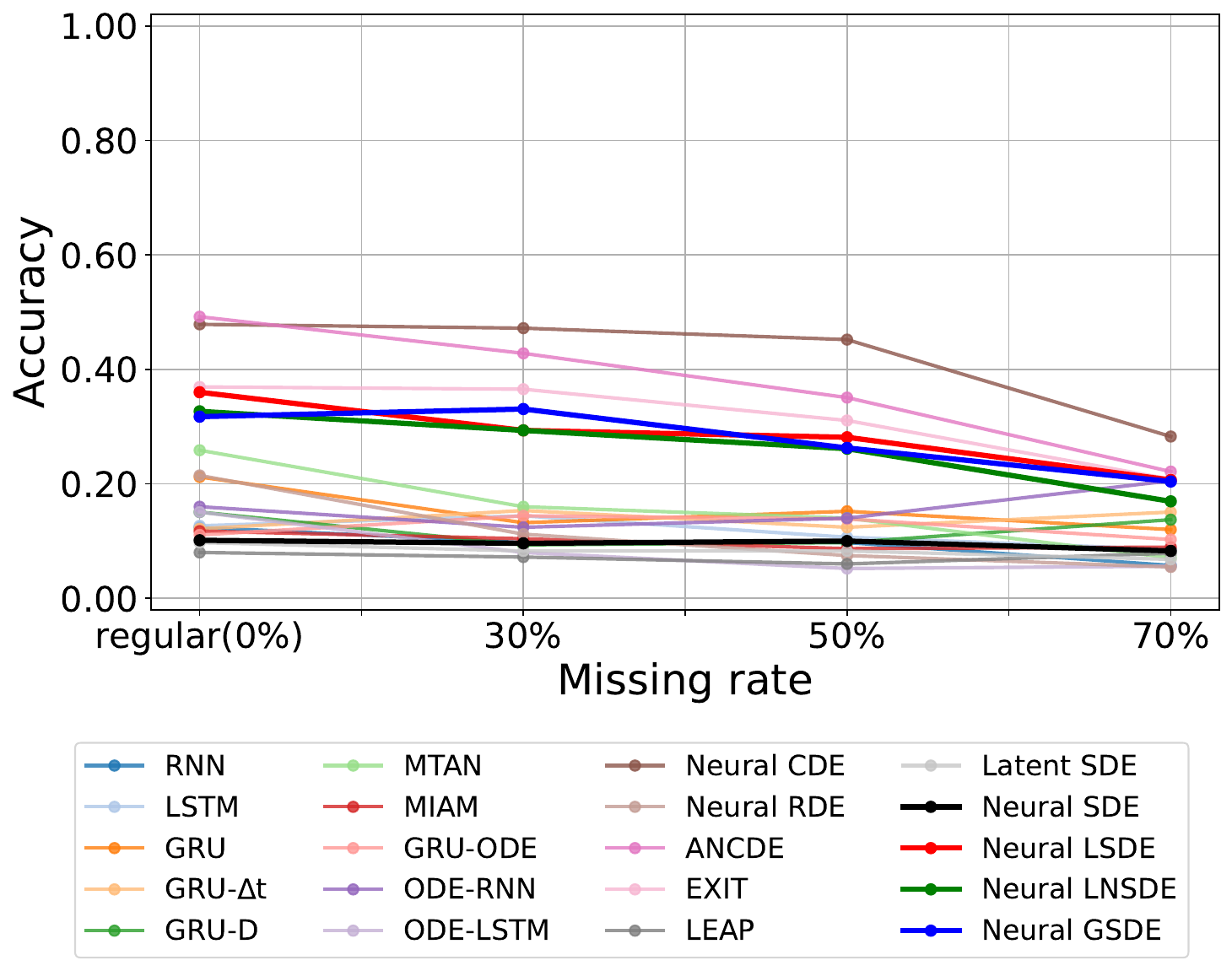}} \hfil
\subfloat[JapaneseVowels]{
  \includegraphics[clip,width=0.31\linewidth]{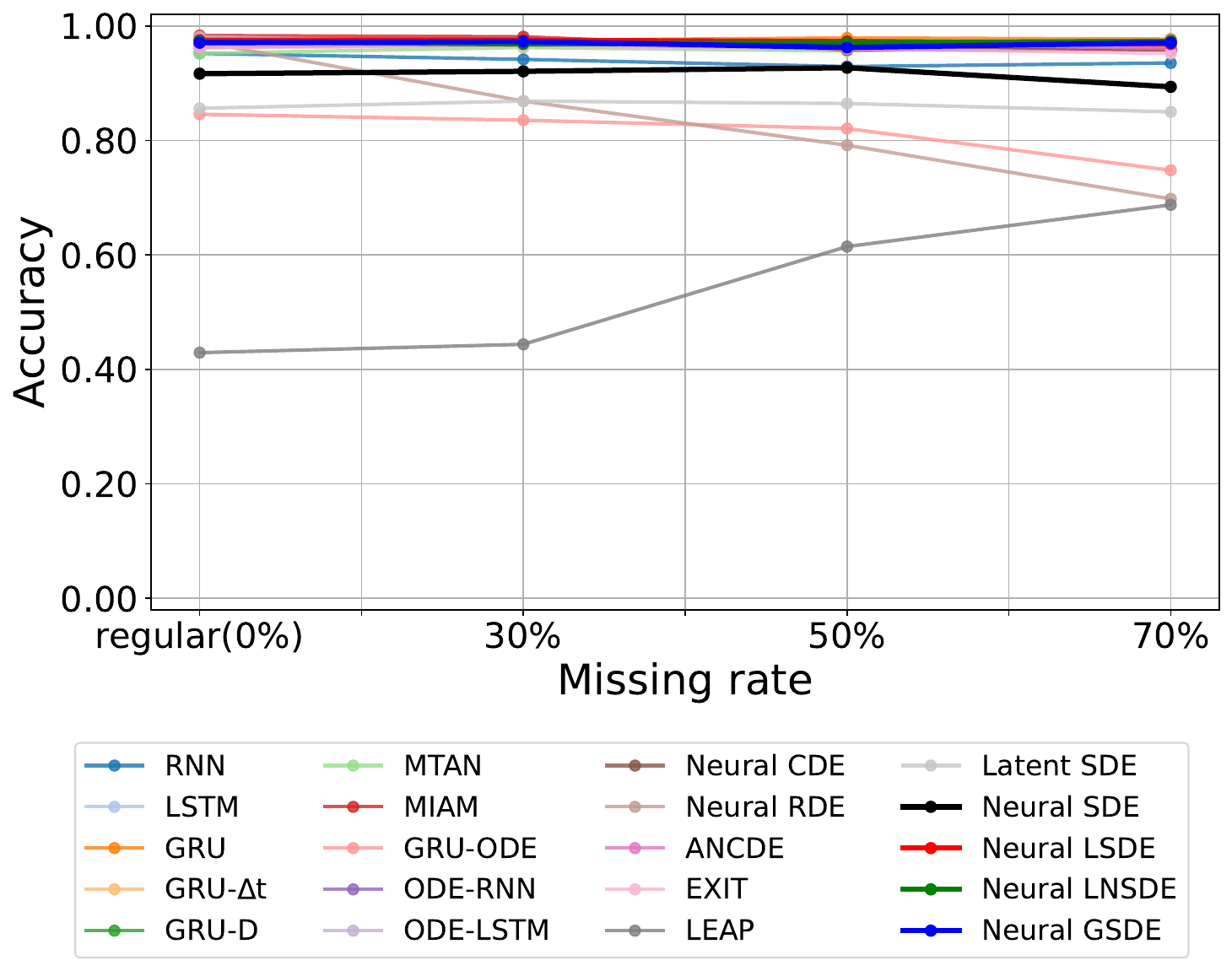}} \hfil
\subfloat[Libras]{
  \includegraphics[clip,width=0.31\linewidth]{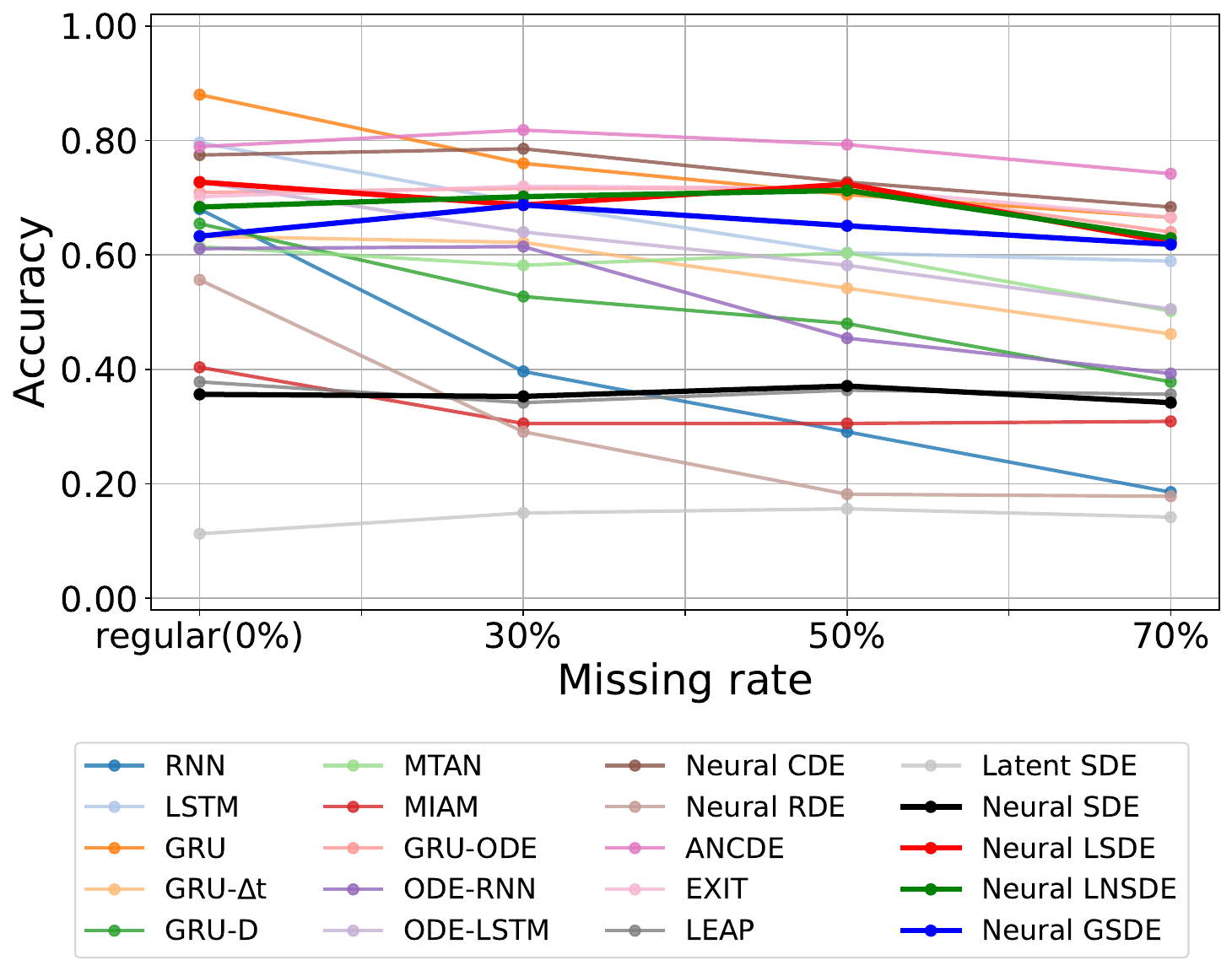}} \\
\subfloat[NATOPS]{
  \includegraphics[clip,width=0.31\linewidth]{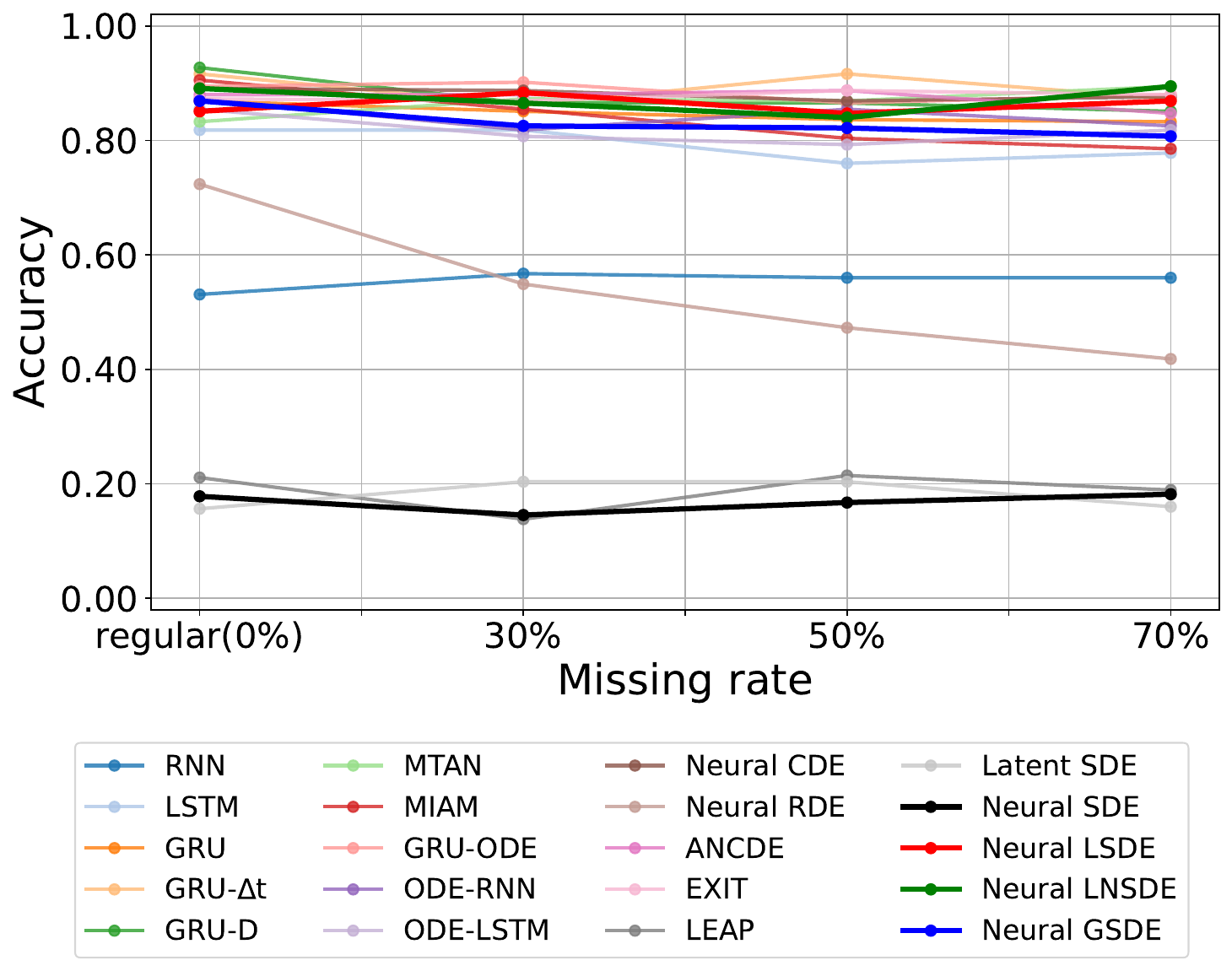}} \hfil
\subfloat[RacketSports]{
  \includegraphics[clip,width=0.31\linewidth]{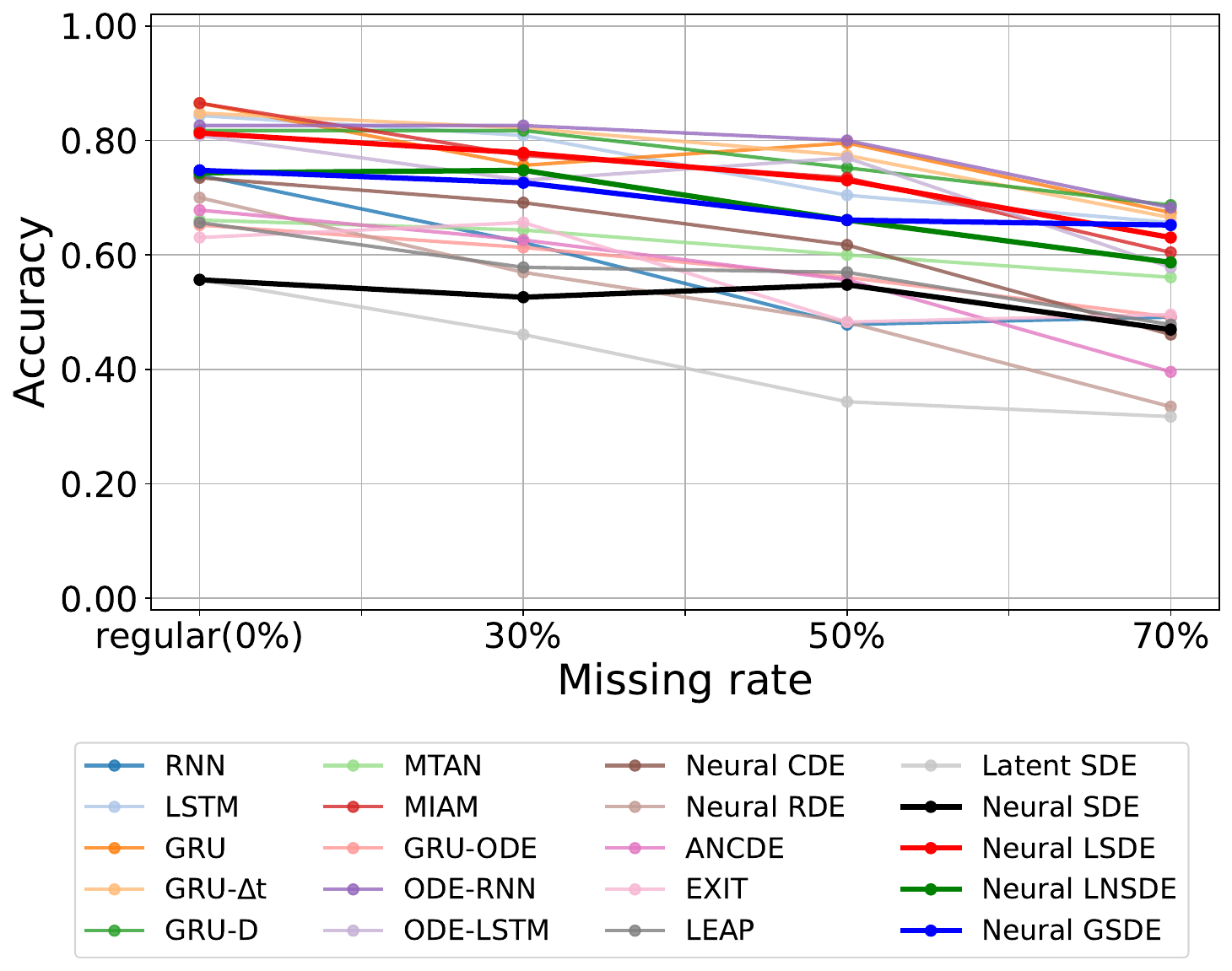}} \hfil
\subfloat[SpokenArabicDigits]{
  \includegraphics[clip,width=0.31\linewidth]{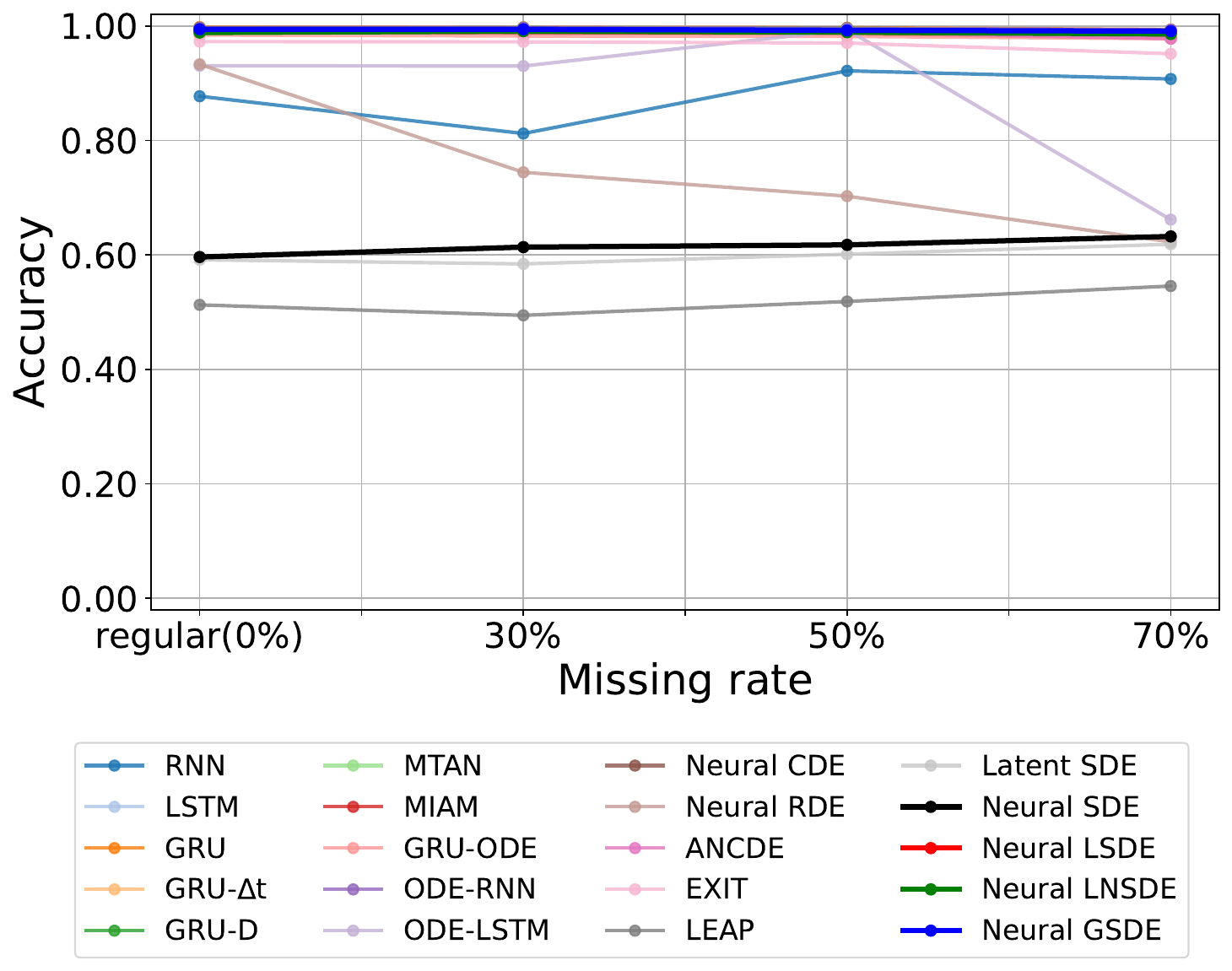}}
\caption{Classification result of the 15 multivariate datasets with all four settings}
\label{fig:out_multivariate}
\end{figure*}

\end{document}